\definecolor{myyellow}{RGB}{254, 194, 32}
\definecolor{myblue}{RGB}{0, 0, 255}
\definecolor{mydarkblue}{RGB}{0, 0, 139}
\newtheorem{prop}{Proposition}
\newcommand\freefootnote[1]{%
  \let\svthefootnote\thefootnote%
  \let\thefootnote\relax%
  \footnotetext{#1}%
  \let\thefootnote\svthefootnote%
}
\title{Transition Path Sampling with Improved Off-Policy Training of Diffusion Path Samplers}
\author{Kiyoung Seong$^{\dagger}$, Seonghyun Park$^{\dagger}$, Seonghwan Kim, Woo Youn Kim, Sungsoo Ahn$^{\dagger}$ \\
KAIST\\
\texttt{\{kyseong98, hyun26, dmdtka00, wooyoun, sungsoo.ahn\}@kaist.ac.kr}}
\begin{document}

\maketitle

\begin{abstract}
Understanding transition pathways between two meta-stable states of a molecular system is crucial to advance drug discovery and material design. However, unbiased molecular dynamics (MD) simulations are computationally infeasible because of the high energy barriers that separate these states. Although recent machine learning techniques are proposed to sample rare events, they are often limited to simple systems and rely on collective variables (CVs) derived from costly domain expertise. In this paper, we introduce a novel approach that trains diffusion path samplers (DPS) to address the transition path sampling (TPS) problem without requiring CVs. We reformulate the problem as an amortized sampling from the transition path distribution by minimizing the log-variance divergence between the path distribution induced by DPS and the transition path distribution. Based on the log-variance divergence, we propose learnable control variates to reduce the variance of gradient estimators and the off-policy training objective with replay buffers and simulated annealing techniques to improve sample efficiency and diversity. We also propose a scale-based equivariant parameterization of the bias forces to ensure scalability for large systems. We extensively evaluate our approach, termed TPS-DPS, on a synthetic system, small peptide, and challenging fast-folding proteins, demonstrating that it produces more realistic and diverse transition pathways than existing baselines. We provide links to our \href{https://kiyoung98.github.io/tps-dps/}{project page} and \href{https://github.com/kiyoung98/tps-dps}{code}.
\end{abstract}

\freefootnote{$^\dagger$ Work done in POSTECH}

\section{Introduction}
In drug discovery and material design, it is crucial to understand the mechanisms of transitions between meta-stable states of molecular systems, such as protein folding~\citep{salsbury2010molecular,piana2012protein}, chemical reaction~\citep{mulholland2005modelling, ahn2019design}, and nucleation~\citep{thanh2014mechanisms,beaupere2018nucleation}. Molecular dynamics (MD) simulations have become one of the most widely used tools for sampling these transitions. However, sampling transition paths with unbiased MD simulations is computationally expensive due to high energy barriers, which cause an exponential decay in the probability of making a transition \citep{pechukas1981transition}. 

To address this problem, researchers have developed enhanced sampling approaches such as steered MD \citep[SMD;][]{schlitter1994targeted,izrailev1999steered}, umbrella sampling~\citep{torrie1977nonphysical, kastner2011umbrella}, meta-dynamics \citep{ensing2006metadynamics, branduardi2012metadynamics, bussi2015free}, adaptive biasing force \citep[ABF;][]{comer2015adaptive}, on-the-fly probability-enhanced sampling \citep[OPES;][]{invernizzi2020rethinking} methods. These methods utilize \textit{bias forces} to facilitate transitions across high energy barriers. They require collective variables (CVs), functions of atomic coordinates designed to capture the slow degree of freedom. Although effective for simple systems, the reliance on expensive domain knowledge limits the applicability of the methods to complex systems where CVs are less understood.

Recently, machine learning has emerged as a promising paradigm for CV-free transition path sampling (TPS) \citep{das2021reinforcement, lelievre2023generative, holdijk2024stochastic}. The key idea is to parameterize the bias force using a neural network and train it to sample transition paths directly via the biased MD simulation. In particular, \citet{lelievre2023generative} considered reinforcement learning to sample paths escaping meta-stable states. \citet{das2021reinforcement,hua2024accelerated,holdijk2024stochastic} considered TPS problem as minimizing the reverse Kullback-Leibler (KL) divergence between the path measures induced by the biased MD and the target path measure. However, minimizing the reverse KL divergence suffers from mode collapse, capturing only a subset of modes of the target distribution \citep{vargas2023denoising,richter2024improved}. Furthermore, \citet{das2021reinforcement,lelievre2023generative,hua2024accelerated, holdijk2024stochastic} limited their evaluation to low-dimensional synthetic systems or small peptides. Developing machine learning algorithms that generate accurate and diverse transition pathways for complex molecular systems remains an open challenge.

\textbf{Contribution.} In this work, we propose the \textit{diffusion path sampler} (DPS) to solve the transition path sampling problem.\footnote{We coin our method diffusion path sampler since it samples paths using diffusion SDE, similar to diffusion samplers \citep{zhang2022path, vargas2023denoising} that use diffusion SDEs for sampling the final state.}
Our approach, coined TPS-DPS, (1) trains the bias force by minimizing a recently proposed log-variance divergence~\citep{nusken2021solving} between the path measure induced by the biased MD and the target path measure, and (2) uses scale-based parameterization of the bias force to handle large systems including fast-folding proteins. Specifically, to leverage desirable properties of the log-variance divergence, such as robustness of gradient estimator and degree of freedom in reference path measure, we propose to learn a control variate for reducing the variance of gradient estimators and employ \textit{off-policy} training scheme with replay buffer and simulated annealing to improve sample efficiency and prevent the mode collapse.

We also introduce a new scale-based equivariant parameterization for the bias force to frequently sample meaningful paths in training. Our key idea is to predict the atom-wise positive scaling factor of displacement from current molecular states to the target meta-stable state. This guarantees the bias force to decrease the distance between them for every MD step. We also use the Kabsch algorithm \citep{kabsch1976solution} to align the current molecular states with the target meta-stable state, guaranteeing $SE(3)$ equivariance of bias force for better generalization across the states.

We extensively evaluate our method on the synthetic double-well potential with dual channels, Alanine Dipeptide, and four fast-folding proteins: Chignolin, Trp-cage, BBA, and BBL \citep{lindorff2011fast}. We compare TPS-DPS with the ML approach~\citep[PIPS;][]{holdijk2024stochastic}, as well as classical non-ML methods, e.g., steered MD \citep[SMD;][]{schlitter1994targeted, izrailev1999steered}. Our experiments demonstrate that TPS-DPS consistently generates realistic and diverse transition paths, similar to the ground truth ensemble. In addition, we do ablation studies of the proposed components.

\section{Related work}

\textbf{Transition path sampling without ML.} Metadynamics \citep{branduardi2012metadynamics}, on-the-fly probability-enhanced sampling \citep[OPES;][]{invernizzi2020rethinking}, adaptive biasing force \citep[ABF;][]{comer2015adaptive}, and steered molecular dynamics \citep[SMD;][]{schlitter1994targeted,izrailev1999steered}
were introduced to explore molecular conformations that are difficult to access by unbiased molecular dynamics (MD) within limited simulation times \citep{henin2022enhanced}. However, they mostly rely on collective variables (CVs) for high-dimensional problems and are inapplicable to systems with unknown CVs. 
To sample transition paths without CVs, \citet{dellago1998efficient} proposed shooting methods that use the Markov chain Monte Carlo (MCMC) procedure on path space. However, these methods often suffer from long mixing times in path space, which hinders the exploration of diverse transition paths. Additionally, high rejection rates can further reduce their efficiency.

\textbf{Data driven ML approaches.} 
Recently, generative models have been used to sample new transition paths given a dataset of transition paths. \citet{petersen2023dynamicsdiffusion, triplett2023diffusion} and \citet{lelievre2023generative} applied diffusion probabilistic models \citep{ho2020denoising} and variational auto-encoders \citep{kingma2013auto} for transition path sampling, respectively. However, these methods are limited to small systems and struggle to collect data using unbiased MD simulations due to the high energy barriers. \citet{klein2024timewarp,schreiner2024implicit,jing24generative} proposed to accelerate MD using time-coarsened dynamics, but the time-coarsened dynamics cannot capture the details of the transition, e.g., the transition states. \citet{duan2023accurate, kim2024diffusion} used neural networks to generate transition states of a given chemical reaction, but cannot generate transition paths.

\textbf{Data free ML approaches.} Without a previously collected dataset, \citet{das2021reinforcement,lelievre2023generative,sipka2023differentiable,hua2024accelerated,holdijk2024stochastic} trained the bias forces to sample transition paths using the biased MD. \citet{lelievre2023generative} used reinforcement learning to train the bias forces but focused on escaping an initial meta-stable state rather than targeting a given meta-stable state. \citet{sipka2023differentiable} used differentiable biased MD simulation to train bias potential and introduce partial back-propagation and graph mini-batching techniques to resolve computational issues in differentiable simulation. \citet{das2021reinforcement,hua2024accelerated, holdijk2024stochastic} considered the TPS problem as minimizing the reverse KL divergence between path distribution from biased MD and transition path distribution. \citet{das2021reinforcement,hua2024accelerated} limited their evaluation to low-dimensional synthetic systems. In this work, we mainly compare our method with \citep[PIPS;][]{holdijk2024stochastic}. Concurrent to our work, \citet{du2024doob} considered the TPS problem as minimizing Doob's Lagrangian objective with boundary constraints. They parameterized marginal distribution as (mixture) Gaussian path distribution to satisfy the boundary constraints without simulation in training time and sampled transition paths with the bias force derived from the Fokker-Planck equation in inference time.
\section{Transition path sampling with diffusion path samplers}

In this section, we introduce our method, coined transition path sampling with diffusion path sampler (TPS-DPS). Our main idea is to formulate the transition path sampling (TPS) problem as a minimization of the log-variance divergence \citep{nusken2021solving} between two path measures: the path measure induced by DPS and that of transition paths. Our main methodological contribution is twofold: (1) a new off-policy training algorithm that minimizes the log-variance divergence with the learnable control variate, replay buffer, and simulated annealing (2) a $SE(3)$ equivariant scale-based parameterization of the bias force that has an inductive bias for dense training signals in large systems.

\begin{wrapfigure}{r}{0.45\textwidth}
\centering
    \centering
    \vspace{-.2in}
    \includegraphics[width=\linewidth]{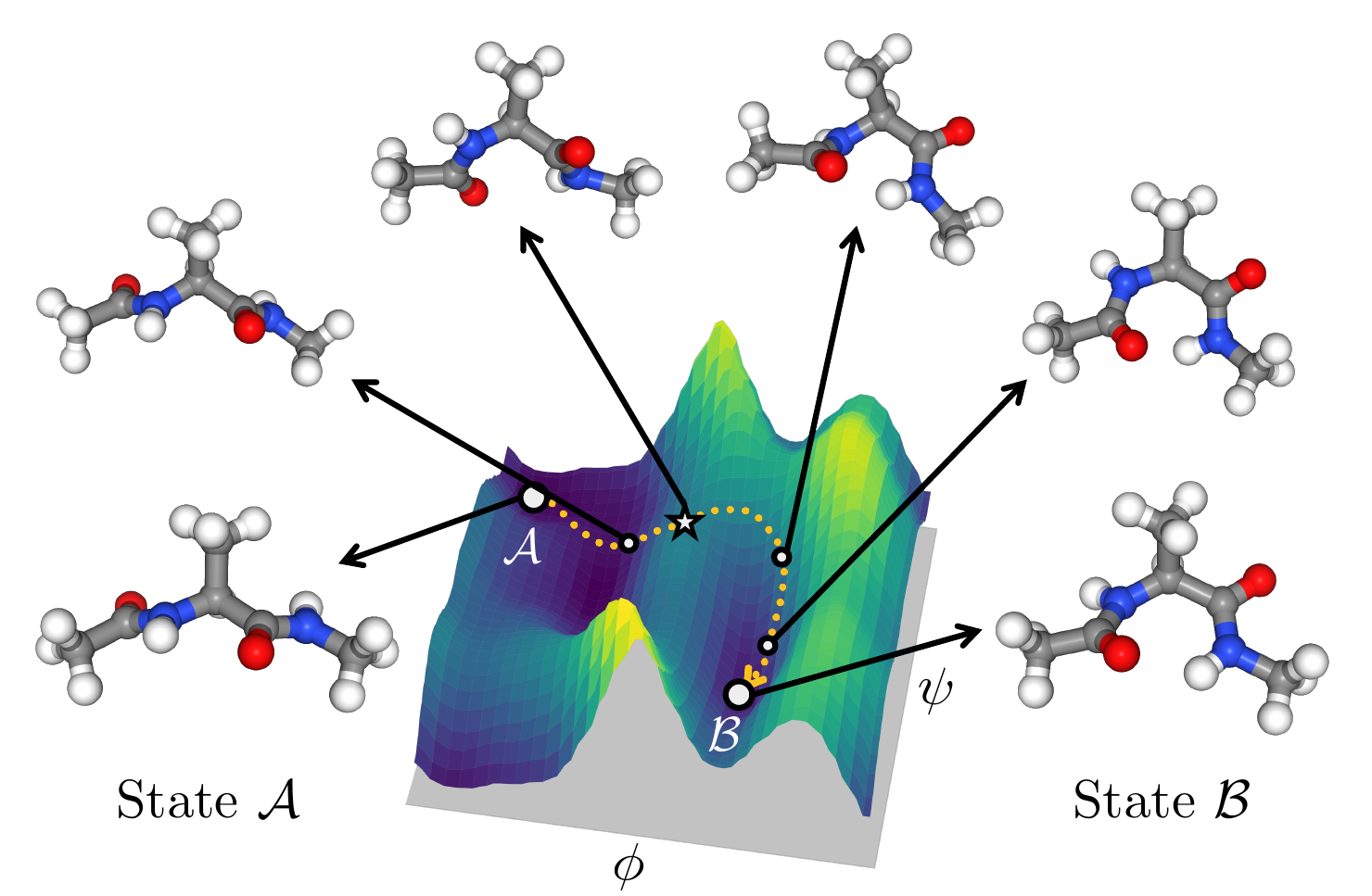}
    \caption{
        \textbf{Problem setup.} The sampled transition path (yellow dotted lines) from the state $\mathcal{A}$ to the state $\mathcal{B}$ on the free energy landscape of Alanine Dipeptide. We visualize the snapshots (white circles) of the transition path and the transition state (white star).
    }
    \label{fig:intro}
    \vspace{-.2in}
\end{wrapfigure}

\subsection{Problem setup}
Our goal is to sample transition paths from one meta-stable state to another meta-stable state given a molecular system. We provide an example of the problem for Alanine Dipeptide in \cref{fig:intro}. We view this as a task to sample paths from an unbiased molecular dynamics (MD) in \cref{eq:unbiased_md} conditioned on its starting and ending points of initial and target meta-stable states, respectively. To solve this task, we train the bias force parameterized by a neural network to amortize the sampling procedure.

\textbf{Molecular dynamics.} We consider a MD simulation on time interval $[0,T]$, i.e., the motion of a molecular state $\bm{X}_{t}=(\bm{R}_{t}, \bm{V}_{t}) \in\mathbb{R}^{6N}$ at time $t$ where $N$ is the number of atoms, $\bm{R}_{t}\in\mathbb{R}^{3N}$ is the atom-wise positions and $\bm{V}_{t}\in\mathbb{R}^{3N}$ is the atom-wise velocities. In particular, we adopt Langevin dynamics \citep{bussi2007accurate} defined as the following SDE:
\begin{equation}\label{eq:unbiased_md}
\mathrm{d}\bm{X}_{t} = \bm{u}(\bm{X}_{t})\mathrm{d}t + \Sigma \mathrm{d}\bm{W}_{t},~~ 
\bm{u}(\bm{X}_{t}) = \left(
\bm{V}_{t},
-\frac{\nabla U(\bm{R}_{t})}{\bm{m}} - \gamma \bm{V}_{t}\right),
~~ 
\Sigma = \text{diag}\left(\bm{\zeta},\sqrt{\frac{2\gamma k_{B}\lambda}{\bm{m}}}\right)
\end{equation}
where $U$, $\bm{m}$, $\gamma$, $k_{B}$, $\lambda$, and $\bm{W}_{t}$ denote the potential energy function, the atom-wise masses, the friction term, the Boltzmann constant, the absolute temperature, and the Brownian motion, respectively, and $\bm{\zeta}\in\mathbb{R}^{3N}$ is a vector of positive infinitesimal values. MD in \cref{eq:unbiased_md} induces the path measure, denoted by $\mathbb{P}_{0}$, which refers to the positive measure defined on measurable subsets of the path space $\mathcal{C}([0,T]; \mathbb{R}^{6N})$ consisting of continuous functions $\bm{X}:[0,T]\rightarrow \mathbb{R}^{6N}$. The path (probability) measure $\mathbb{P}_0$ induced by MD assigns high probability to a set of the probable paths when solving
MD.

\textbf{Transition path sampling.} One of the challenges in sampling transition paths through unbiased MD simulations is the meta-stability: a state remains trapped for a long time in the initial meta-stable state $\mathcal{A}\subseteq \mathbb{R}^{3N}$ before transitioning into a distinct meta-stable state $\mathcal{B}\subseteq \mathbb{R}^{3N}$. To capture the rare event where transition from $\mathcal{A}$ to $\mathcal{B}$ occurs, we constrain paths $\bm{X}=(\bm{X}_{t})_{0\le t \le T}$ sampled from unbiased MD to satisfy $\bm{R}_{0}\in\mathcal{A}$, $\bm{R}_{T}\in\mathcal{B}$ for a fixed time $T$. Since the meta-stable state $\mathcal{A}$ and $\mathcal{B}$ are not well-specified for many molecular systems, we simplify this task by (1) fixing a local minima $\bm{R}_{\mathcal{A}}, \bm{R}_{\mathcal{B}}$ of the potential energy function in the meta-stable states $\mathcal{A},\mathcal{B}$ and (2) sampling a transition path $\bm{X}$ that starts from the state $\bm{R}_{0}=\bm{R}_{\mathcal{A}}$ and ends at the vicinity of $\bm{R}_{\mathcal{B}}$. 

To be specific, we aim to sample from the target path measure $\mathbb{Q}$, which is obtained by reweighting the path measure $\mathbb{P}_0$ with the (normalized) indicator function. The indicator function assigns zero weight to paths that do not reach the vicinity of the target position $\bm{R}_{\mathcal{B}}$. Formally, the reweighting function is called the Radon–Nikodym derivative defined as follows:
\begin{equation} \label{eq:rnd_transition_path_measure}
    \frac{\mathrm{d}\mathbb{Q}}{\mathrm{d}\mathbb{P}_{0}}(\bm{X})=\frac{1_{\mathcal{B}}(\bm{X})}{Z},
    \quad 
    1_{\mathcal{B}}(\bm{X}) =
\begin{cases} 
1 & \text{if}~\lVert \rho_{T} \cdot \bm{R}_{\mathcal{B}} - \bm{R}_{T} \rVert \leq \delta, \\
0 & \text{otherwise},
\end{cases}
\quad
Z=\mathbb{E}_{\mathbb{P}_{0}}\left[1_{\mathcal{B}}(\bm{X})\right],
\end{equation}
where $\cdot$ denotes group action associated with the $SE(3)$ space and $\rho_{T}\cdot\bm{R}_{\mathcal{B}}$ is the aligned target position by the optimal roto-translation $\rho_{T}\in SE(3)$ to minimize its Euclidean distance to $\bm{R}_{T}$, i.e., $\rho_{T}=\operatorname{argmin}_{\rho\in SE(3)}\lVert\bm{R}_{T}-\rho\cdot\bm{R}_{\mathcal{B}}\rVert$. Such a transformation can be obtained from the Kabsch algorithm in $O(N)$ complexity \citep{kabsch1976solution}. 

Note that one may consider na\"ive rejection sampling to sample transition paths, based on running unbiased MD to sample a path $\bm{X}$ from the path measure $\mathbb{P}_{0}$ and accepting if the path $\bm{X}$ arrives at the neighborhood of the position $\bm{R}_{\mathcal{B}}$ with the radius $\delta$. However, this method struggles to sample transition paths of systems with high energy barriers since the sampled path by unbiased MD rarely reaches the target states, i.e., the rejection ratio is too high.

\subsection{Log-variance minimization}
In this section, we propose our algorithm to amortize transition path sampling. Our key idea is to train a neural network to induce a path measure that matches the target path measure $\mathbb{Q}$, using the log-variance divergence between the path measures. We propose a new training scheme to minimize the log-variance divergence based on learning the control variate of its gradient and a replay buffer to improve sample efficiency and diversity.

\textbf{Amortizing transition path sampling with log-variance divergence.} To match the target path measure $\mathbb{Q}$, we consider a biased MD defined by a policy $\bm{v}$ (or bias force $\bm{b}$) as the following SDE: 
\begin{equation}\label{eq:biased_md}
    \mathrm{d}\bm{X}_{t} = (\bm{u}(\bm{X}_{t}) + \Sigma\bm{v}(\bm{X}_{t}))\mathrm{d}t + \Sigma \mathrm{d}\bm{W}_{t},\quad \bm{v}(\bm{X}_{t})=\Sigma^{-1}\left(\bm{0},\frac{\bm{b}(\bm{X}_{t})}{\bm{m}}\right).
\end{equation} 
We also let $\mathbb{P}_{\bm{v}}$ denote the path measure induced by the SDE. To amortize transition path sampling, we match the path measure $\mathbb{P}_{\bm{v}_{\theta}}$ of a parameterized policy $\bm{v}_{\theta}$ with the target path measure $\mathbb{Q}$ by minimizing the log-variance divergence:
\begin{equation} \label{eq:log_var_divergence}
    D_{\text{LV}}^{\mathbb{P}}(\mathbb{P}_{\bm{v}_{\theta}}\Vert\mathbb{Q}) = \mathbb{V}_{\mathbb{P}}\left[\log \frac{\mathrm{d}\mathbb{Q}}{\mathrm{d}\mathbb{P}_{\bm{v}_{\theta}}}\right] = 
    \mathbb{E}_{\mathbb{P}}\left[
    \left(\log \frac{\mathrm{d}\mathbb{Q}}{\mathrm{d}\mathbb{P}_{\bm{v}_{\theta}}}
    -
    \mathbb{E}_{\mathbb{P}}\left[\log \frac{\mathrm{d}\mathbb{Q}}{\mathrm{d}\mathbb{P}_{\bm{v}_{\theta}}}\right]
    \right)^{2}
    \right],
\end{equation}
where $\mathbb{P}$ is an arbitrary reference path measure with $\mathbb{E}_{\mathbb{P}}[\log ({\mathrm{d}\mathbb{Q}}/{\mathrm{d}\mathbb{P}_{\bm{v}_{\theta}}})]<\infty$. To express the log-variance divergence in detail, we let $\mathbb{P}=\mathbb{P}_{\tilde{\bm{v}}}$ for some policy $\tilde{\bm{v}}$ and apply the Girsanov's theorem to \cref{eq:log_var_divergence}, deriving the following formulation:
\begin{equation}\label{eq:girsanov_logvar} D_{\text{LV}}^{\mathbb{P}_{\tilde{\bm{v}}}}(\mathbb{P}_{\bm{v}_{\theta}}\Vert \mathbb{Q}) =\mathbb{E}_{\mathbb{P}_{\tilde{\bm{v}}}}\left[(F_{\bm{v}_{\theta}, \tilde{\bm{v}}} - \mathbb{E}_{\mathbb{P}_{\tilde{\bm{v}}}}[F_{\bm{v}_{\theta}, \tilde{\bm{v}}}])^{2}\right]\,,
\end{equation}
\begin{equation} \label{eq:girsanov_f}
    F_{\bm{v}_{\theta}, \tilde{\bm{v}}}(\bm{X}) 
    =
    \frac{1}{2}\int^{T}_{0}\lVert \bm{v}_{\theta}(\bm{X}_{t})\rVert^{2}\mathrm{d}t -\int^{T}_{0}(\bm{v}_{\theta}\cdot \tilde{\bm{v}})(\bm{X}_{t})\mathrm{d}t-\int^{T}_{0}\bm{v}_{\theta}(\bm{X}_{t}) \cdot \mathrm{d}\bm{W}_{t} + \log 1_{\mathcal{B}}(\bm{X}).
\end{equation}
The first three terms in \cref{eq:girsanov_f} correspond to the deviation of the biased MD from the unbiased MD integrated over the path sampled from $\mathbb{P}_{\tilde{\bm{v}}}$. The last term reweights the unbiased MD to the target path measure $\mathbb{Q}$. As a result, minimizing \cref{eq:girsanov_logvar} could be thought as minimizing the variation between $\mathbb{P}_{\bm{v}_{\theta}}$ and $\mathbb{Q}$. We provide the full derivation in \cref{appx:log_var_compute}. Compared to KL divergence, the log-variance divergence provides a robust gradient estimator and avoids differentiating through the SDE solver \citep{richter2020vargrad,nusken2021solving}.

\textbf{Minimizing with learnable control variate.} 
To minimize the log-variance divergence, we consider the following loss that replaces the estimation of $\mathbb{E}_{\mathbb{P}_{\bm{v}_{\theta}}}[F_{\bm{v}_{\theta}, \bm{v}_{\theta}}]$ by learning a scalar parameter $w$:
\begin{equation}\label{eq:loss}
\mathcal{L}(\theta, w) = \mathbb{E}_{\mathbb{P}_{\bm{v}_{\theta}}}\left[(F_{\bm{v}_{\theta}, \bm{v}_{\theta}}- w)^{2}\right],
\end{equation}
where $w$ is a \textit{control variate} that controls the variance of the gradient estimator of $\nabla_{\theta}\mathcal{L}(\theta, w)$ without changing the gradient. Note that we set $\tilde{\bm{v}}=\bm{v}_{\theta}$ in \cref{eq:girsanov_f}, which implies that the gradient of \cref{eq:loss} coincides with the KL divergence \citep{richter2020vargrad, nusken2021solving}. When optimized, the control variate $w$ estimates the expectation $\mathbb{E}_{\mathbb{P}_{\bm{v}_{\theta}}}[F_{\bm{v}_{\theta}, \bm{v}_{\theta}}]$ since $\operatornamewithlimits{argmin}_{w} \mathcal{L}(\theta, w) = \mathbb{E}_{\mathbb{P}_{\bm{v}_{\theta}}}[F_{\bm{v}_{\theta}, \bm{v}_{\theta}}]$. Thus, jointly optimizing $(\theta, w)$ with the gradient step can be interpreted as jointly minimizing log-variance divergence and estimating $\mathbb{E}_{\mathbb{P}_{\bm{v}_{\theta}}}[F_{\bm{v}_{\theta}, \bm{v}_{\theta}}]$ utilizing $w$.

\textbf{Off-policy training with replay buffer and simulated annealing.}
To leverage the degree of freedom in reference path measure for the log-variance divergence, we allow discrepancy between reference path measure and current path measure, called off-policy training, which is widely used in discrete-time reinforcement learning \citep{mnih2013playing, bengio2021flow}. For the sample efficiency, we reuse the samples with a replay buffer $\mathcal{D}$ which stores path samples from the path measure $\mathbb{P}_{\bm{v}_{\bar{\theta}}}$ associated with previous policies $\bm{v}_{\bar{\theta}}$. Our modified loss function $\mathcal{L}^{\mathcal{D}}$ with $\mathcal{D}$ is defined as follows:
\begin{equation} \label{eq:loss_buffer}
    \mathcal{L}^{\mathcal{D}}(\theta, w) = \mathbb{E}_{(\bm{v}_{\bar{\theta}}, \bm{X})\sim\mathcal{D}}[(F_{\bm{v}_{\theta}, \bm{v}_{\bar{\theta}}}(\bm{X}) - w)^{2}].
\end{equation}
The replay buffer also prevents mode collapse, using diverse paths from different path measures. Additionally, in line with other off-policy training algorithms \citep{malkin2022trajectory, kim2023learning}, we utilize the simulated annealing technique to sample diverse paths that cross high-energy barriers.

\textbf{Discretization.}
To implement the algorithm, we discretize \cref{eq:loss_buffer}. Given a discretization step size $\Delta t$, we consider the discretized paths $\bm{x}_{0:L}=(\bm{x}_{0},\bm{x}_{1},\ldots,\bm{x}_{L})$ of $\bm{X}$ from MD simulations where $L=T/\Delta t$ and $\bm{x}_{\ell}=\bm{X}(\ell\Delta t)$. In discrete cases, the discretized paths $\bm{x}_{0:L}$ from previous policies $\bm{v}_{\bar{\theta}}$ and their (gradient-detached) policy values $(\bm{v}_{\bar{\theta}}(\bm{x}_{0}), ...,\bm{v}_{\bar{\theta}}(\bm{x}_{L}))$ are used to approximate the value $F_{\bm{v}_{\theta}, \bm{v}_{\bar{\theta}}}(\bm{X})$ in \cref{eq:girsanov_f} as follows:
\begin{equation} \label{eq:discretize}
    \hat{F}_{\bm{v}_{\theta}, \bm{v}_{\bar{\theta}}}(\bm{x}_{0:L})= 
    \frac{1}{2}\sum_{\ell=0}^{L-1}\lVert\bm{v}_{\theta}(\bm{x}_{\ell})\rVert^{2}\Delta t - \sum_{\ell=0}^{L-1}(\bm{v}_{\theta}\cdot\bm{v}_{\bar{\theta}})(\bm{x}_{\ell})\Delta t - \sum_{\ell=0}^{L-1}\bm{v}_{\theta}(\bm{x}_{\ell})\cdot\bm{\epsilon}_{\ell}+\log 1_{\mathcal{B}}(\bm{x}_{0:L})\,,
\end{equation}
where the noise $\bm{\epsilon}_{\ell}=\Sigma^{-1}(\bm{x}_{\ell+1}-\bm{x}_{\ell}-(\bm{u}(\bm{x}_{\ell})+\Sigma\bm{v}_{\bar{\theta}}(\bm{x}_{\ell}))\Delta t)$ is the discretized Brownian motions of the Langevin dynamics with policy $\bm{v}_{\bar{\theta}}$. For implementation, we further derive a simple discretized loss of \cref{eq:loss_buffer} from \cref{eq:discretize} as follows:
\begin{equation} \label{eq:discretized_loss}
    \mathbb{E}_{\bm{x}_{0:L} \sim \hat{\mathcal{D}}}\left[\left(\log\frac{p_{0}(\bm{x}_{0:L})1_{\mathcal{B}}(\bm{x}_{0:L})}{p_{\bm{v}_{\theta}}(\bm{x}_{0:L})} - w\right)^{2}\right],
\end{equation}
where the buffer $\hat{\mathcal{D}}$ stores paths $\bm{x}_{0:L}$ sampled from the previous policies, and $p_{0}$ and $p_{\bm{v}_{\theta}}$ denote discrete time transition probability induced by \cref{eq:unbiased_md,eq:biased_md}, respectively. We provide a formal derivation of the discretized loss in \cref{appx:discretize_connection}. Note that the same objective was derived in the name of relative trajectory balance by \citet{venkatraman2024amortizing}.

\begin{algorithm}[t]
    \caption{Training}
    \label{alg:training}
    \begin{algorithmic}[1]
       \STATE Initialize an empty replay buffer $\hat{\mathcal{D}}$, an policy $\bm{v}_{\theta}$, a scalar parameter $w$, the number of rollout $I$ and training per rollout $J$, and an annealing schedule $\lambda_{\text{start}}=\lambda_{1}>\cdots>\lambda_{I}=\lambda_{\text{end}}$.
       \FOR{$i=1,\ldots,I$}
           \STATE Generate $M$ paths $\{\bm{x}_{0:L}^{(m)}\}^{M}_{m=1}$ from the biased MD simulations with $\bm{v}_{\theta}$ at temperature $\lambda_{i}$.
           \STATE Update the replay buffer $\hat{\mathcal{D}} \leftarrow \hat{\mathcal{D}} \cup \{\bm{x}_{0:L}^{(m)}\}^{M}_{m=1}$.
           \FOR{$j = 1,\ldots, J$}
           \STATE Sample $K$ data $\{\bm{x}_{0:L}^{(k)}\}^{K}_{k=1}$ from $\hat{\mathcal{D}}$.
           \STATE Update $\theta$ and $w$ with the gradient of $\frac{1}{K}\sum_{k=1}^{K}\left(\log\frac{p_{0}(\bm{x}^{(k)}_{0:L})1_{\mathcal{B}}(\bm{x}^{(k)}_{0:L})}{p_{\bm{v}_{\theta}}(\bm{x}^{(k)}_{0:L})} - w\right)^{2}$.
           \ENDFOR
       \ENDFOR
    \end{algorithmic}
\end{algorithm}

We describe our training algorithm in \cref{alg:training}. Overall, our off-policy training algorithm iterates through four steps: (1) sampling paths from the biased MD simulation with current policy $\bm{v}_{\theta}$ at high temperature, (2) storing sampled paths in the replay buffer $\hat{\mathcal{D}}$, (3) sampling a batch of the paths from the replay buffer, and (4) training current policy $\bm{v}_{\theta}$ by minimizing the loss in \cref{eq:discretized_loss}. After training, the biased MD simulation can directly sample transition paths from the target path measure.

\subsection{Parameterization for large systems}
In this section, we introduce new parameterizations of the bias force and the indicator function to sample transition paths of large systems. Our parameterization is designed to alleviate the problem of sparse training signal, where the model struggles to collect meaningful paths that end at the vicinity of the target meta-stable state in training. This problem is especially severe in large systems. 

\textbf{Bias force parameterization.} To frequently sample the meaningful paths, we aim to parameterize the bias force which guarantees to reduce the distance between the current molecular state and the target meta-stable state for every MD step. This is achieved by predicting the atom-wise positive scaling factor of the direction to the aligned target meta-stable state rather than predicting force or potential directly. Moreover, we design the bias force to satisfy roto-translational equivariance to the current molecular state input $\bm{X}_{t}$, aligning with the symmetry of the transition path sampling problem for better generalization and parameter efficiency.

To be specific, we achieve the inductive bias for dense training signals and the $SE(3)$ equivariance to the current molecular state input $\bm{X}_{t}$ by parameterizing the bias force as follows:
\begin{equation}
    \bm{b}(\bm{X}_{t})=\text{diag}(\bm{s}_{\theta}(\rho_{t}^{-1} \cdot \bm{X}_{t}))(\rho_{t}\cdot{\bm{R}}_{\mathcal{B}}-\bm{R}_{t}),
\end{equation}
where $\bm{s}_{\theta}(\cdot)\in\mathbb{R}_{+}^{3N}$ is a neural network constrained to have positive output elements and predicts atom-wise scaling factors and the optimal roto-translation $\rho_{t}=\operatorname{argmin}_{\rho\in SE(3)}\lVert\bm{R}_{t}-\rho\cdot\bm{R}_{\mathcal{B}}\rVert$ which aligns $\bm{R}_{\mathcal{B}}$ with $\bm{R}_{t}$, as in \cref{eq:rnd_transition_path_measure}. We note that the bias force (divided by atom-wise masses) is positively correlated with the direction to target state, i.e., $(\bm{b}(\bm{X}_{t})/\bm{m})^{\top} (\rho_{t} \cdot \bm{R}_{\mathcal{B}}- \bm{R}_{t})>0$. 

To formalize the benefit of the positive correlation between the bias force and the direction to the target state, one can prove that there always exists a small enough step size $\Delta t$ that decreases the distance between the current state $\bm{R}_{t}$ and the aligned target state $\rho'_{t+\Delta t}\cdot{\bm{R}}_{\mathcal{B}}$, i.e., 
\begin{equation} \label{eq:reduce_dist}
    \lVert\rho'_{t+\Delta t}\cdot\bm{R}_{\mathcal{B}}-\bm{R}_{t+\Delta t}'\rVert < \lVert \rho_{t} \cdot \bm{R}_{\mathcal{B}} - \bm{R}_{t}\rVert,
\end{equation}
where $\bm{R}_{t+\Delta t}'= \bm{R}_{t}+\bm{b}(\bm{X}_{t})\Delta t/\bm{m}$ is the position updated by the bias force with step size $\Delta t$ and $\rho'_{t+\Delta t}=\operatorname{argmin}_{\rho\in SE(3)}\lVert\bm{R}'_{t+\Delta t}-\rho\cdot\bm{R}_{\mathcal{B}}\rVert$. We provide the proof of \cref{eq:reduce_dist} in \cref{appx:target_reaching_proof}.

In the experiments, we also consider other equivariant parameterizations that are less constrained: (1) directly predicting the equivariant bias force $\bm{b}(\bm{X}_{t})=\rho_{t}\cdot \bm{b}_{\theta}(\rho_{t}^{-1} \cdot \bm{X}_{t})\in\mathbb{R}^{3N}$ and (2) predicting the invariant bias potential $b_{\theta}(\rho_{t}^{-1} \cdot \bm{X}_{t})\in\mathbb{R}$ to obtain the bias force $\bm{b}(\bm{X}_{t})=-\nabla b_{\theta}(\rho_{t}^{-1} \cdot \bm{X}_{t})\in\mathbb{R}^{3N}$. We observe these two parameterizations to be useful for low-dimensional tasks but struggle to produce meaningful paths in large systems during training. As shown in \cref{fig:force_field}, bias forces with the positive scaling parameterization are positively correlated with the direction to the target position regardless of network parameters, unlike direct force parameterizations.

\begin{wrapfigure}{r}{0.45\textwidth} 
    \centering
    \vspace{-0.1in}
    \begin{subfigure}[t]{0.48\linewidth} 
        \centering
        \includegraphics[width=\linewidth]{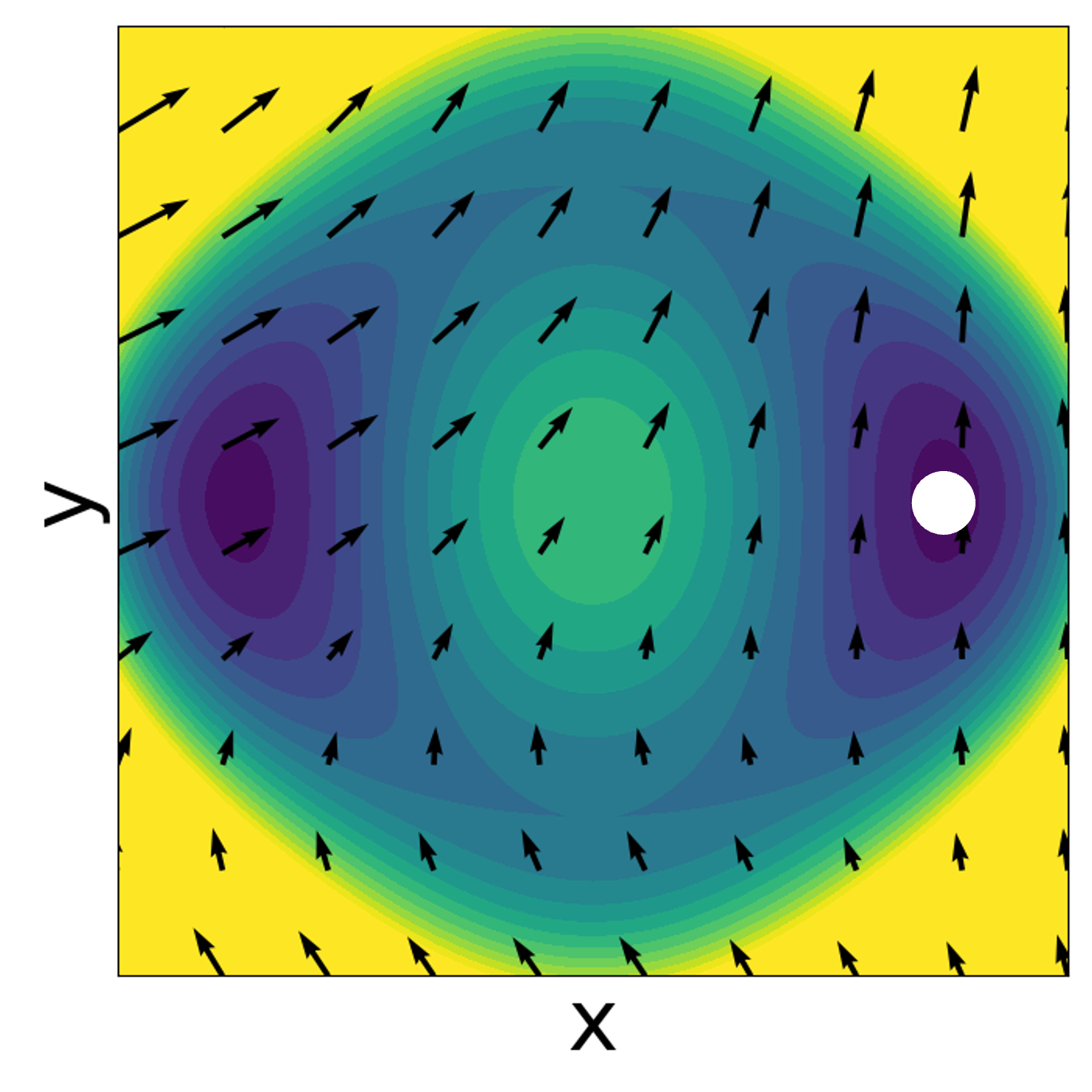}
        \caption{Direct prediction}
        \label{fig:bias_force}
    \end{subfigure}
    \hfill 
    \begin{subfigure}[t]{0.48\linewidth} 
        \centering
        \includegraphics[width=\linewidth]{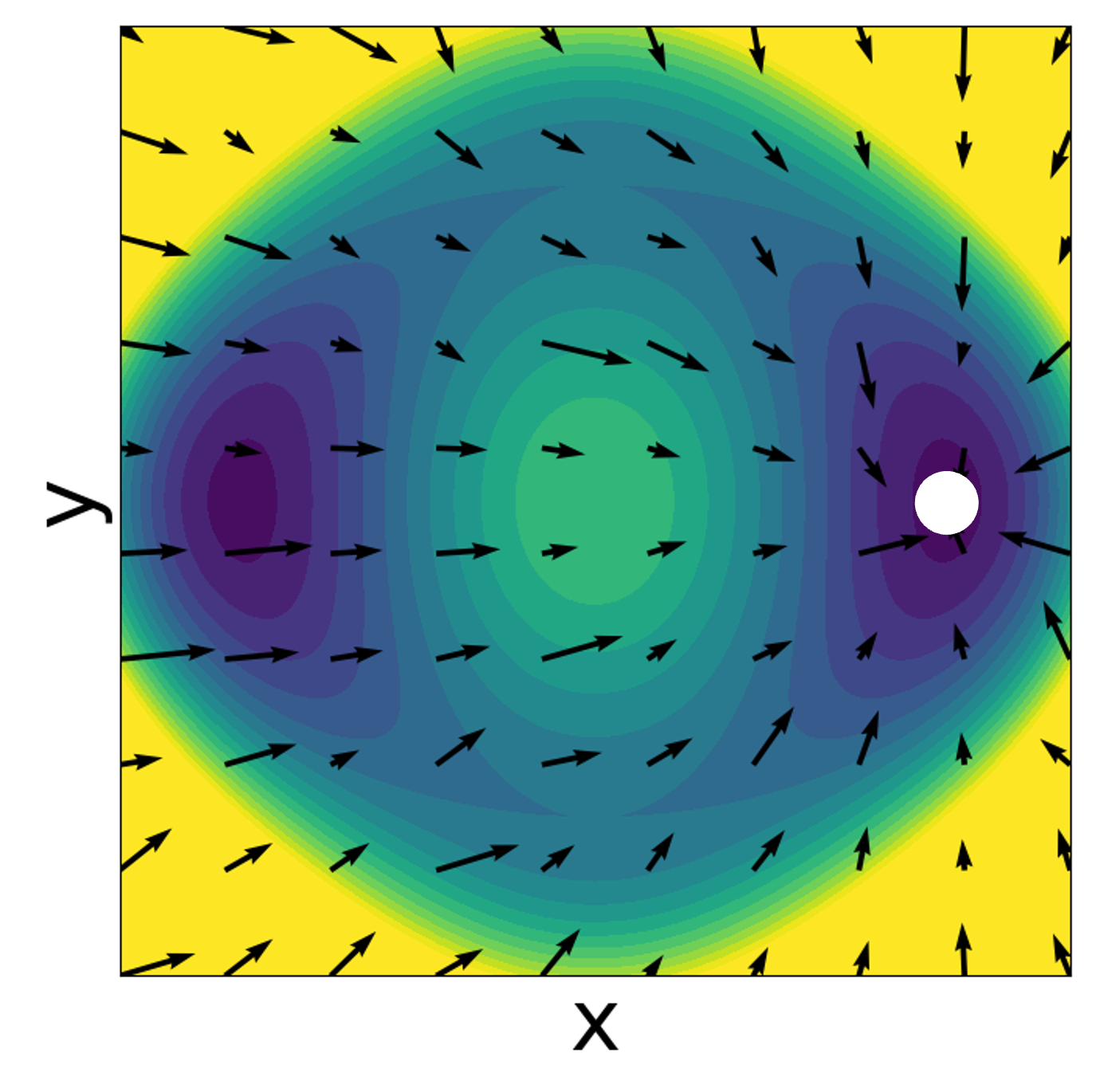}
        \caption{Positive scaling}
        \label{fig:bias_scaling}
    \end{subfigure}
    
    \caption{
        \textbf{Visualization of the bias force fields of two different bias force parameterizations with initialized neural networks}. \textbf{(\subref{fig:bias_force})} directly predicting the bias force and \textbf{(\subref{fig:bias_scaling})}  predicting the positive scaling factors of direction to the target position (white circle).
    }
    \label{fig:force_field}
    \vspace{-0.25in}
\end{wrapfigure}

\textbf{Indicator function parameterization.} We propose to relax the indicator function $1_{\mathcal{B}}$ into a radial basis function (RBF) kernel $\tilde{1}_{\mathcal{B}}(\bm{X}) = k(\bm{R}_{T},\rho^{-1} \cdot {\bm{R}}_{\mathcal{B}}; \sigma^{2})$ which measures the similarity between two positions where $\sigma > 0$ controls the degree of relaxation. The range of RBF kernel $k$ is bounded by the interval $(0,1]$ so that $\log \tilde{1}_{\mathcal{B}}(\bm{X})$ is well-defined and $\tilde{1}_{\mathcal{B}}(\bm{X})$ represents the binary indicator function smoothly. To capture a high training signal from subtrajectories of sampled paths, we propose to take maximum over RBF kernel values of all intermediate states by $\tilde{1}_{\mathcal{B}}^{\text{max}}(\bm{X}) = \max_{t\in [0,T]} k(\bm{R}_{t},\tilde{\bm{R}}_{\mathcal{B}}; \sigma^{2})$. To extract the subtrajectory with a high training signal, we can truncate the paths at the time that maximizes RBF kernel values, allowing variable path lengths. Notably, the relaxed indicator function is $SE(3)$ invariant to $\bm{R}_{t}$ because of the Kabsch algorithm.
\section{Experiment}

\begin{table}[t]
    \centering
    \caption{
        \textbf{Benchmark scores on the double-well system and Alanine Dipeptide}.
        All metrics are averaged over 1024 paths for the double-well system, and 64 paths for Alanine Dipeptide. ETS is computed for paths that hit the target meta-stable state, and the best results are highlighted in \textbf{bold}. Predicting the bias force, potential, and atom-wise positive scaling are denoted by (F), (P), and (S), respectively. UMD ($\lambda$) denotes unbiased MD with temperature $\lambda$ and SMD ($k$) denotes steered MD with the force constant $k$. Unless otherwise specified, paths are generated by MD simulation at 1200K for double-well and 300K for Alanine Dipeptide.
    }
    \label{tab:quantitative}
    \resizebox{\textwidth}{!}{%
        \begin{tabular}{lccc}
            \toprule
            \multirow{2}{*}{Method} & RMSD ($\downarrow$) & THP ($\uparrow$) & ETS ($\downarrow$) \\
            & \r{A} & \% & $\mathrm{kJ mol}^{-1}$ \\
            \midrule
            \multicolumn{4}{c}{Double-well} \\
            \midrule
            UMD (1200K) & 2.21 $\pm$ 0.10 & 0.00 & - \\
            UMD (2400K) & 2.11 $\pm$ 0.38 & 3.03 & 1.69 $\pm$ 0.31 \\
            UMD (3600K) & 1.85 $\pm$ 0.68 & 12.60 & 2.12 $\pm$ 0.41 \\
            UMD (4800K) & 1.54 $\pm$ 0.81 & 21.58 & 2.77 $\pm$ 0.69 \\
            SMD (0.5) & 0.98 $\pm$ 0.90 & 52.15 & 1.54 $\pm$ 0.21 \\
            SMD (1) & 0.14 $\pm$ 0.08 & 99.80 & 1.85 $\pm$ 0.16 \\
            TPS-DPS (F, Ours) & \textbf{0.01 $\pm$ 0.02} & \textbf{99.90} & 1.38 $\pm$ 0.16 \\
            TPS-DPS (P, Ours) & \textbf{0.01 $\pm$ 0.03} & 99.71 & \textbf{1.36 $\pm$ 0.15} \\
            TPS-DPS (S, Ours) & \textbf{0.01 $\pm$ 0.03} & 99.80 & 1.73 $\pm$ 0.20 \\
            \bottomrule
        \end{tabular}
        \hspace{.1in}
        \begin{tabular}{lccc}
            \toprule
            \multirow{2}{*}{Method} & RMSD ($\downarrow$) & THP ($\uparrow$) & ETS ($\downarrow$) \\
            & \r{A} & \% & $\mathrm{kJ mol}^{-1}$ \\
            \midrule
            \multicolumn{4}{c}{Alanine Dipeptide} \\
            \midrule
            UMD (300K) & 1.59 $\pm$ 0.15 & \phantom{0}0.00 & - \\
            UMD (3600K) & 1.19 $\pm$ 0.32 & \phantom{0}6.25 & 812.47 $\pm$ 148.80 \\
            SMD (10) & 0.86 $\pm$ 0.21 & \phantom{0}7.81 & 33.15 $\pm$ 6.46\phantom{0} \\
            SMD (20) & 0.56 $\pm$ 0.27 & 54.69 & 78.40 $\pm$ 12.76 \\
            PIPS (F) & 0.66 $\pm$ 0.15 & 43.75 & 28.17 $\pm$ 10.86 \\
            PIPS (P) & 1.66 $\pm$ 0.03 & \phantom{0}0.00 & - \\
            TPS-DPS (F, Ours) & \textbf{0.16 $\pm$ 0.06} & \textbf{92.19} & 19.82 $\pm$ 15.88 \\
            TPS-DPS (P, Ours) & \textbf{0.16 $\pm$ 0.10} & 87.50 & \textbf{18.37 $\pm$ 10.86} \\
            TPS-DPS (S, Ours) & 0.25 $\pm$ 0.20 & 76.00 & 22.79 $\pm$ 13.57 \\
            \bottomrule
        \end{tabular}%
    }
\end{table}
In this section, we compare our method, termed TPS-DPS, with both classical non-ML and ML approaches, assessing the accuracy and diversity of sampled transition paths. We begin with the double-well system and Alanine Dipeptide followed by the four fast-folding proteins: Chignolin, Trp-cage, BBA, and BBL. Additionally, we conduct ablation studies to validate the effectiveness of each component in our method. All real-world molecular systems are simulated using the OpenMM library \citep{eastman2023openmm}. Details on OpenMM simulation and model configurations are provided in \cref{appx:openmm,appx:model}, respectively. In \cref{appx:computational_cost}, we analyze the time complexity of TPS-DPS and evaluate the number of energy evaluations and runtime in training and inference time.

\textbf{Evaluation Metrics.}
We consider three metrics to evaluate models: (Kabsch) RMSD, THP, and ETS. The root mean square distance (RMSD) measures the ability of the model to produce final positions of paths close to the target position $\bm{R}_{\mathcal{B}}$. The target hit percentage (THP) measures the ability of the model to produce final positions of paths that successfully arrive at the target meta-stable state $\mathcal{B}$. Finally, the energy of the transition state (ETS) measures the ability of the model to identify probable transition states. For further details, refer to \cref{appx:metric}.

\textbf{Baselines.}
We compare TPS-DPS with both non-ML and ML baselines. For non-ML baselines, we consider unbiased MD (UMD) with various temperatures and steered MD \citep[SMD;][]{schlitter1994targeted,izrailev1999steered} with various force constants and collective variables (CVs). For ML baselines, we consider a CV-free transition path sampling method, path integral path sampling \citep[PIPS;][]{holdijk2024stochastic} which also trains a bias force by minimizing the KL divergence between path measures induced by the biased MD and the target path measure.
\begin{figure}[t]
    \centering
    \begin{subfigure}{0.5\linewidth}
        \centering
        \includegraphics[width=\linewidth]{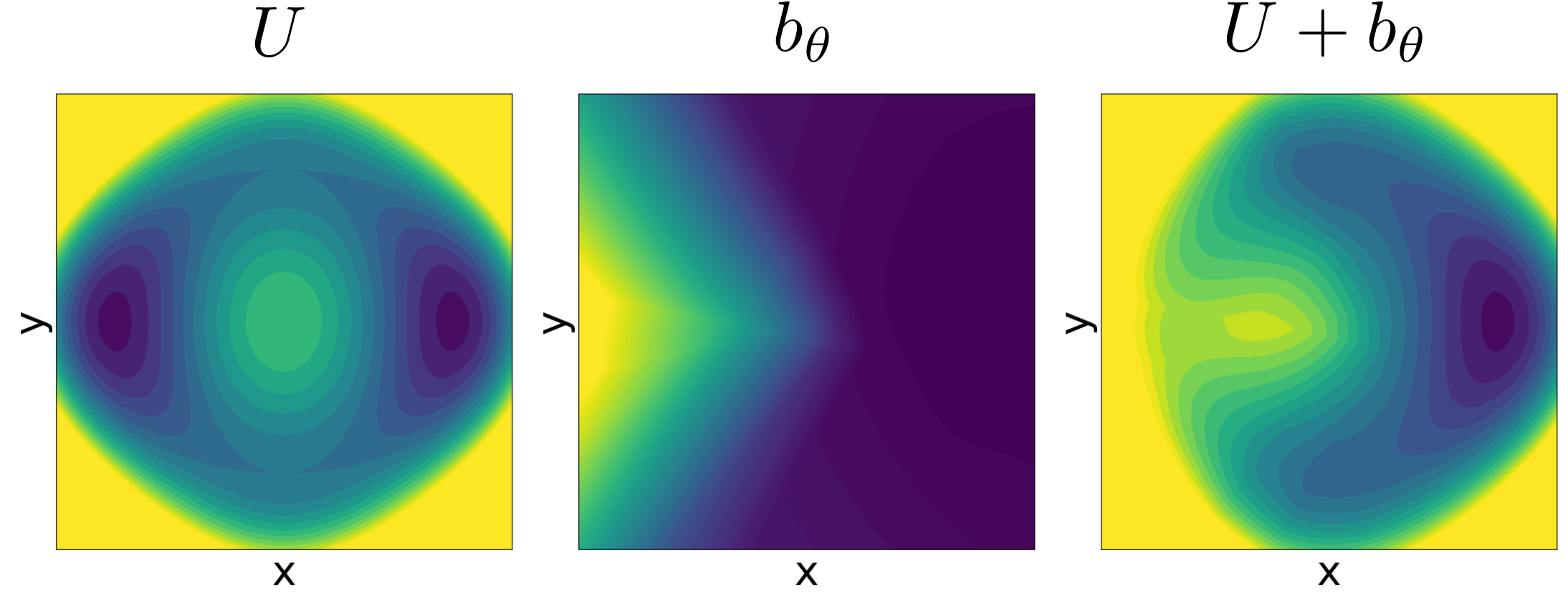}
        \caption{Potential energy landscapes}
        \label{subfig:landscape}
    \end{subfigure}
    \hspace{.2in}
    \begin{subfigure}{0.35\linewidth}
        \centering
        \includegraphics[width=\linewidth]{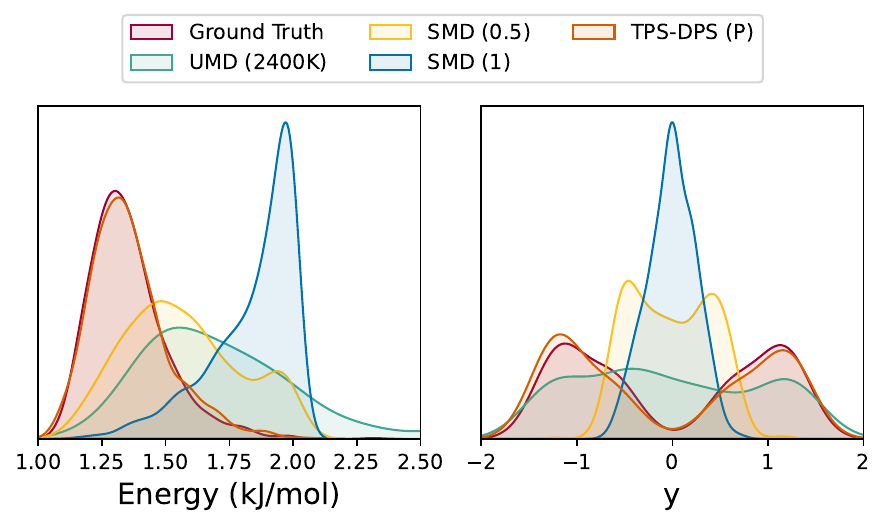}
        \caption{Distributions of the energy and $y$}
        \label{subfig:distribution}
    \end{subfigure}
    \caption{
        \textbf{Visualization of potential energy landscapes and distributions in double-well potential.} 
        (\subref{subfig:landscape}) Visualization of the learned bias potential $b_{\theta}$ of TPS-DPS (P).
        (\subref{subfig:distribution}) Distributions of the potential energy and $y$ coordinate of transition states from 1024 transition paths sampled by each method.
    }
    \vspace{-.1in}
    \label{fig:neural-bias-potential}
\end{figure}

\subsection{Double-well system}
\begin{figure}[t]
    \centering
    \begin{subfigure}[t]{.16\textwidth}
        \centering
        \includegraphics[width=\linewidth,height=\linewidth]{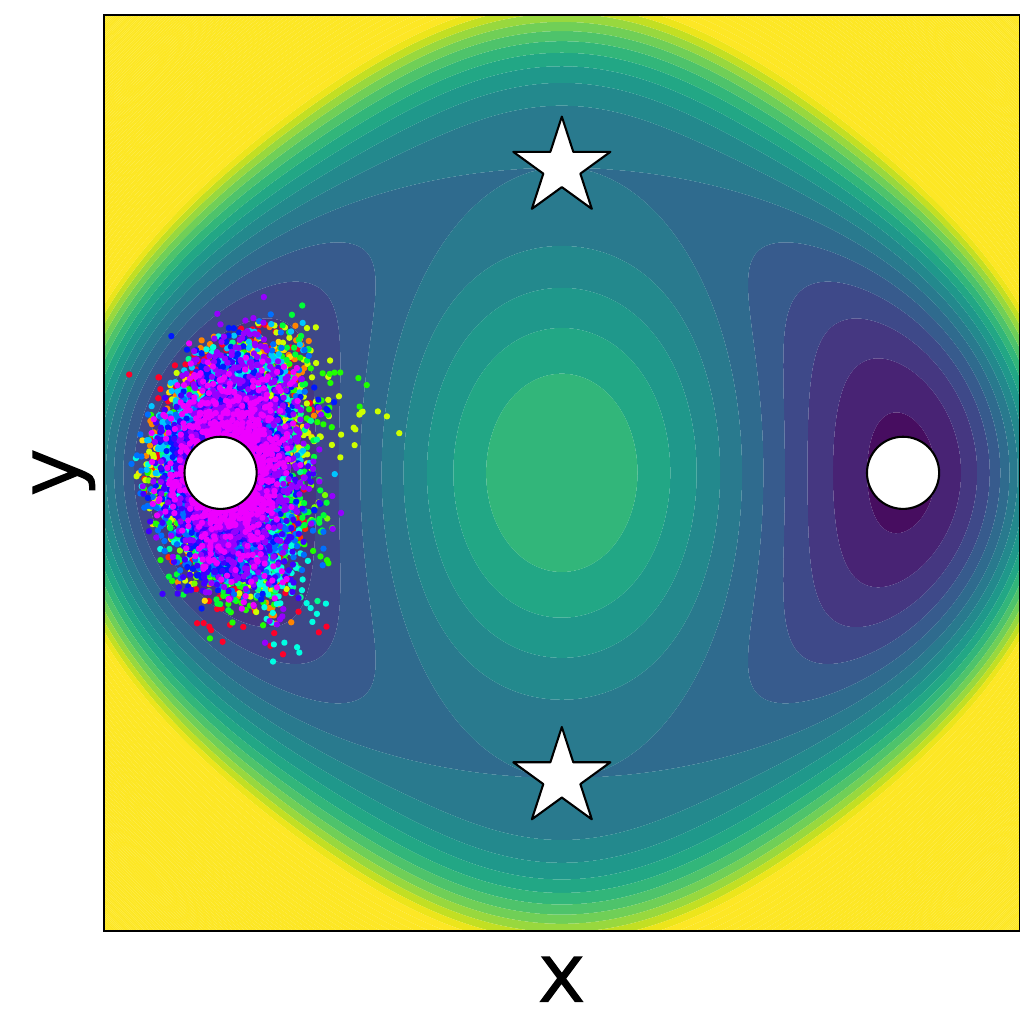}
        \caption*{UMD}
    \end{subfigure}
    \hfill
    \begin{subfigure}[t]{.16\textwidth}
        \centering
        \includegraphics[width=\linewidth,height=\linewidth]{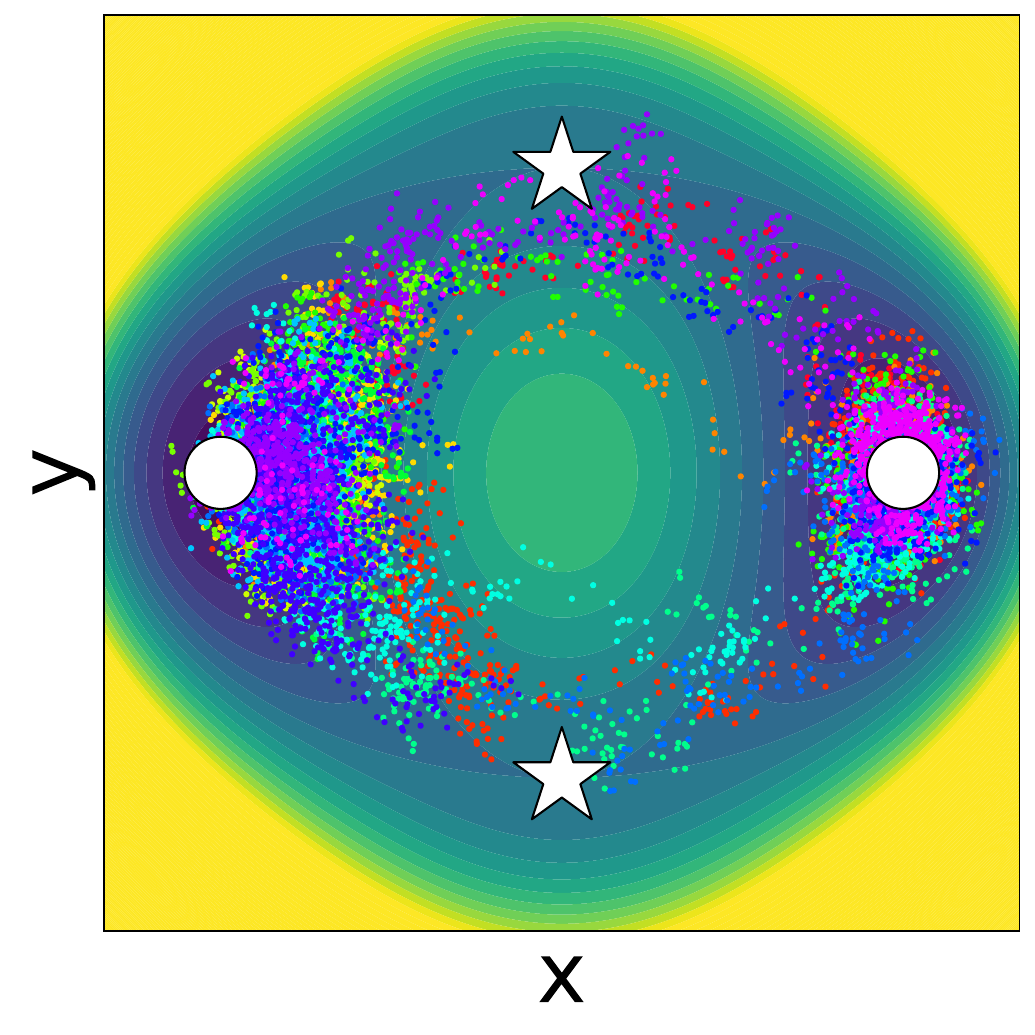}
        \caption*{SMD ($0.5$)}
    \end{subfigure}
    \hfill
    \begin{subfigure}[t]{.16\textwidth}
        \centering
        \includegraphics[width=\linewidth,height=\linewidth]{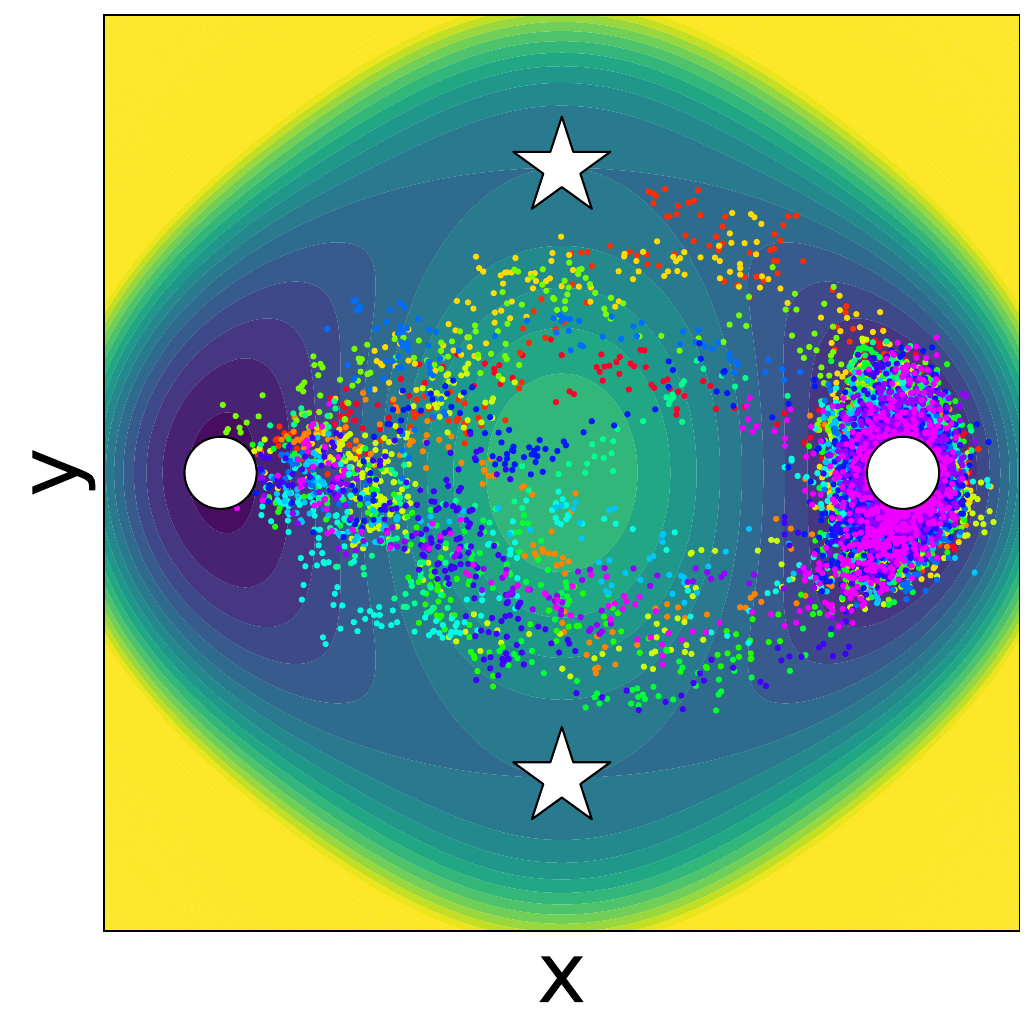}
        \caption*{SMD ($1$)}
    \end{subfigure}
    \hfill
    \begin{subfigure}[t]{.16\textwidth}
        \centering
        \includegraphics[width=\linewidth,height=\linewidth]{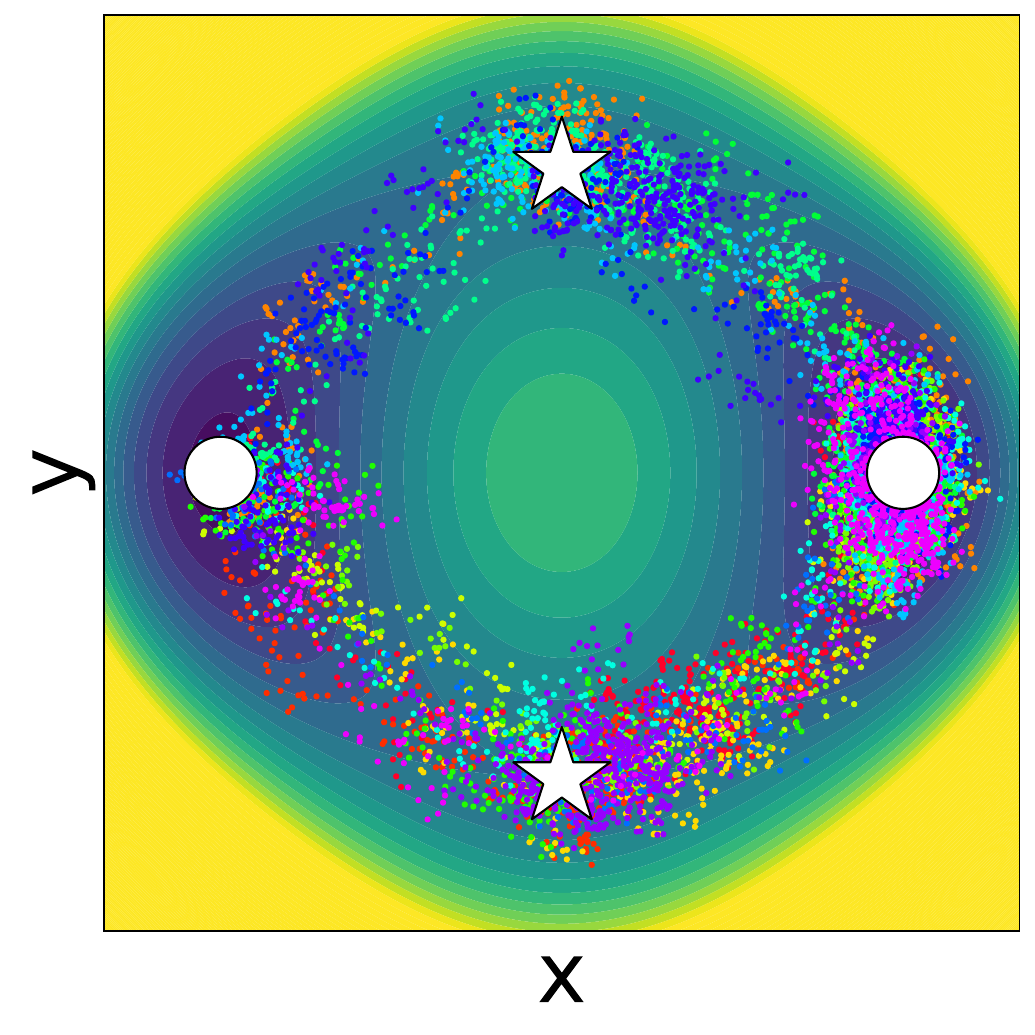}
        \caption*{TPS-DPS (F)}
    \end{subfigure}
    \hfill
    \begin{subfigure}[t]{.16\textwidth}
        \centering
        \includegraphics[width=\linewidth,height=\linewidth]{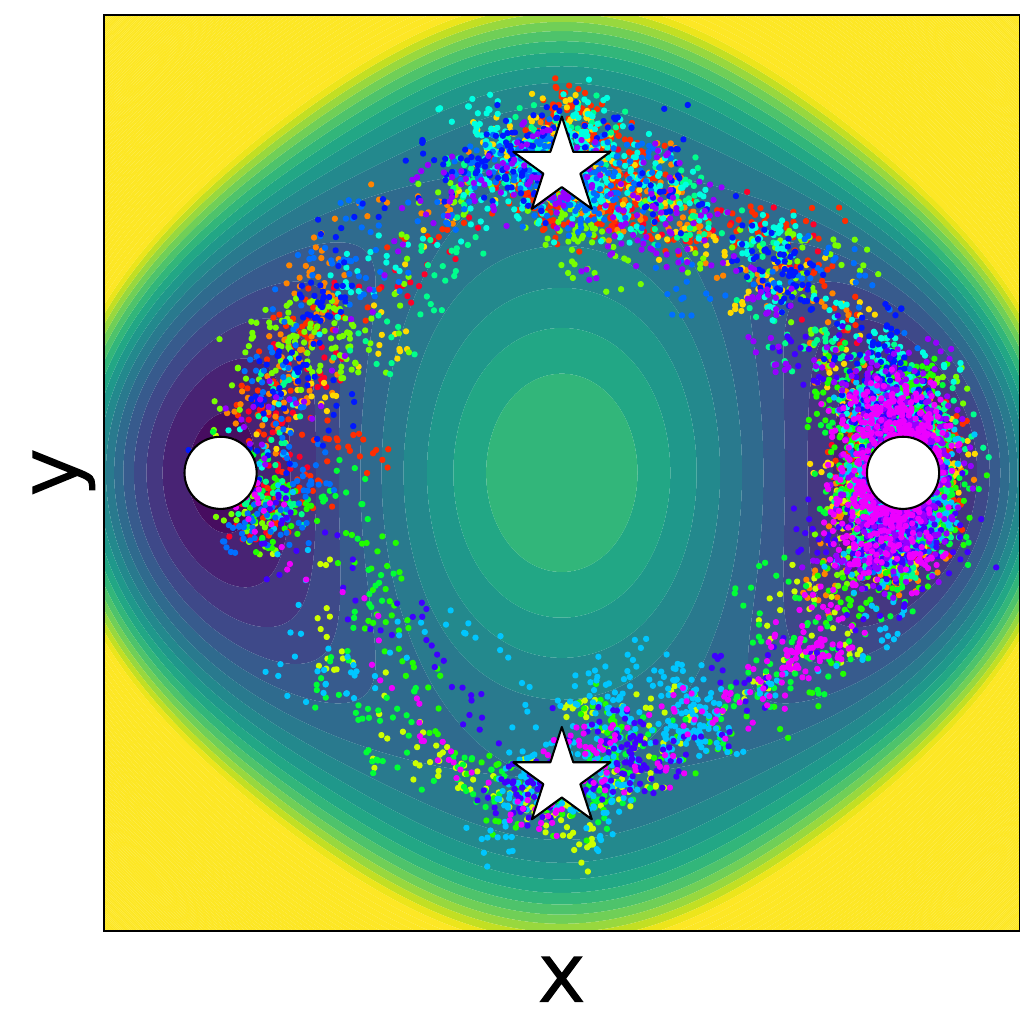}
        \caption*{TPS-DPS (P)}
    \end{subfigure}
    \hfill
    \begin{subfigure}[t]{.16\textwidth}
        \centering
        \includegraphics[width=\linewidth,height=\linewidth]{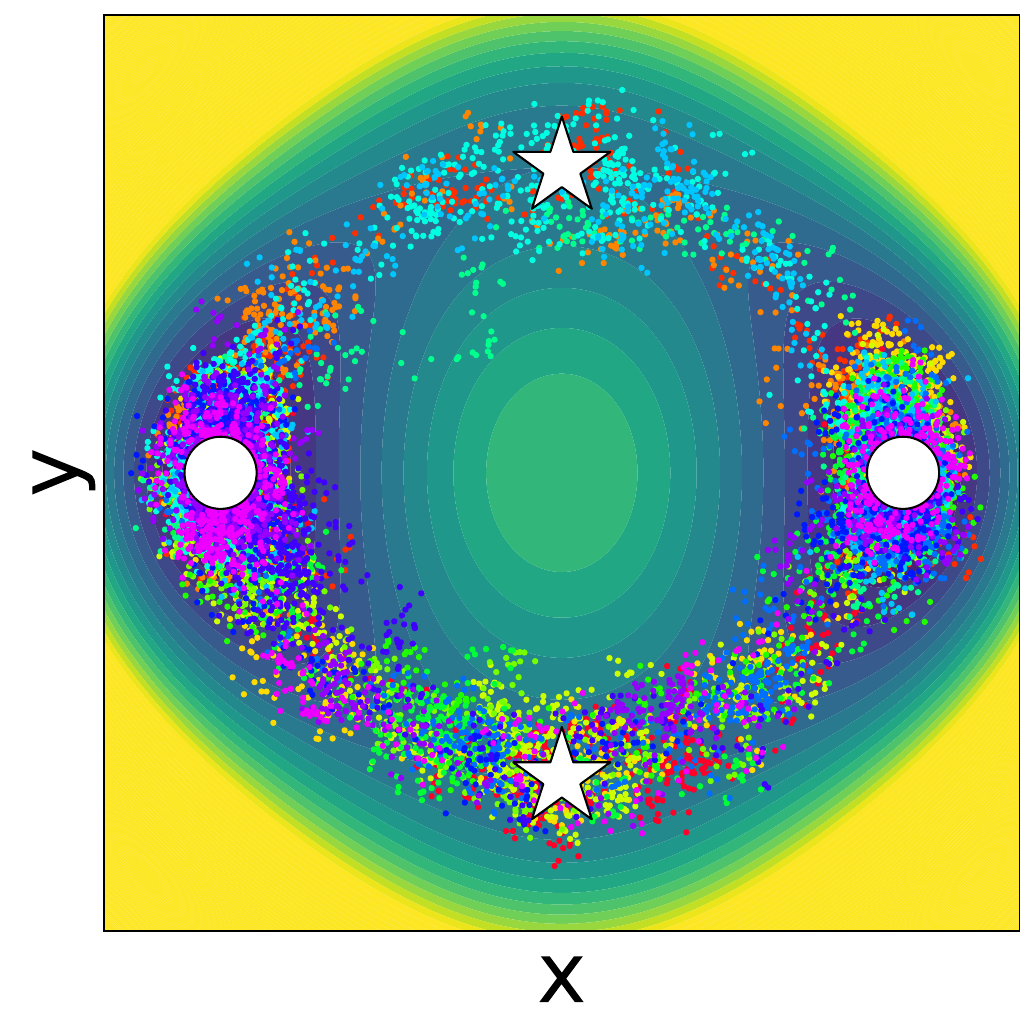}
        \caption*{Ground Truth}
    \end{subfigure}
    
    \vspace{.1in}
    \begin{subfigure}[t]{.16\textwidth}
        \centering
        \includegraphics[width=\linewidth,height=\linewidth]{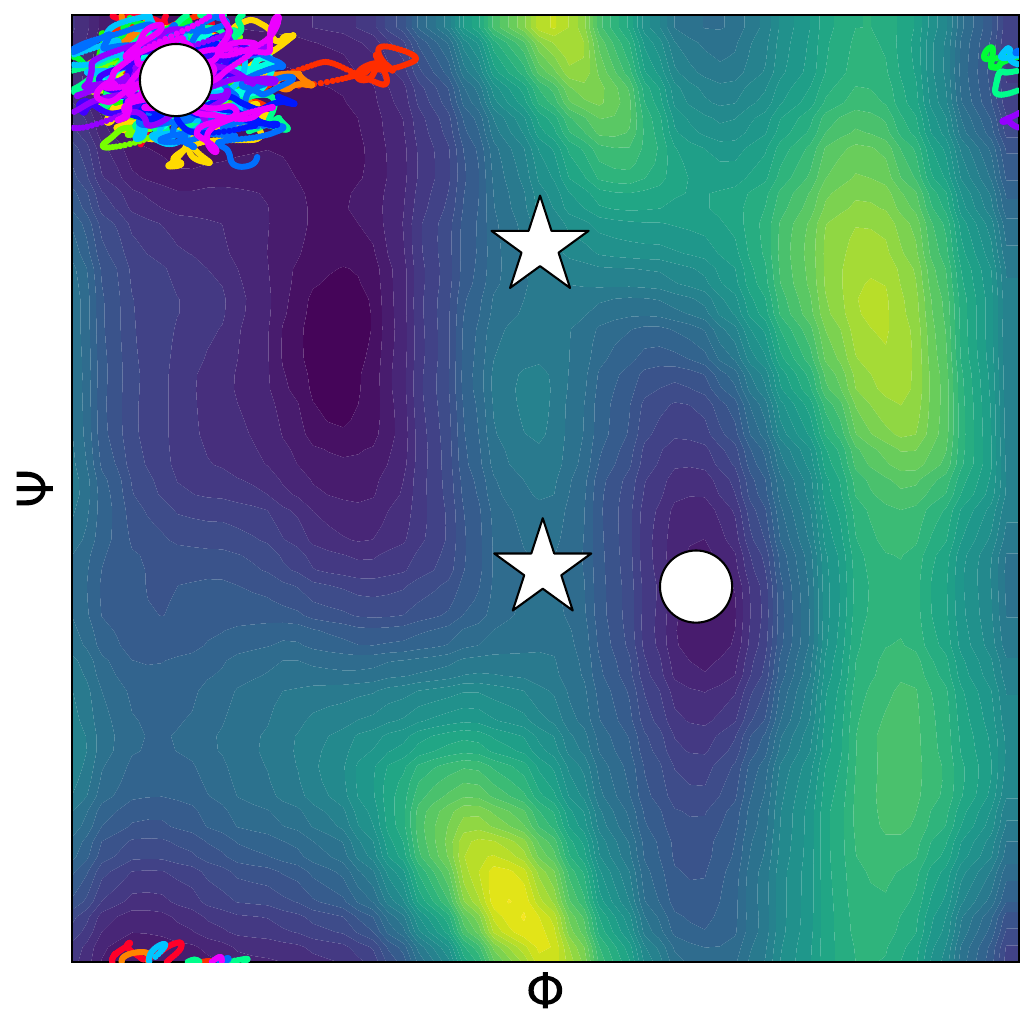}
        \caption*{UMD}
    \end{subfigure}
    \hfill
    \begin{subfigure}[t]{.16\textwidth}
        \centering
        \includegraphics[width=\linewidth,height=\linewidth]{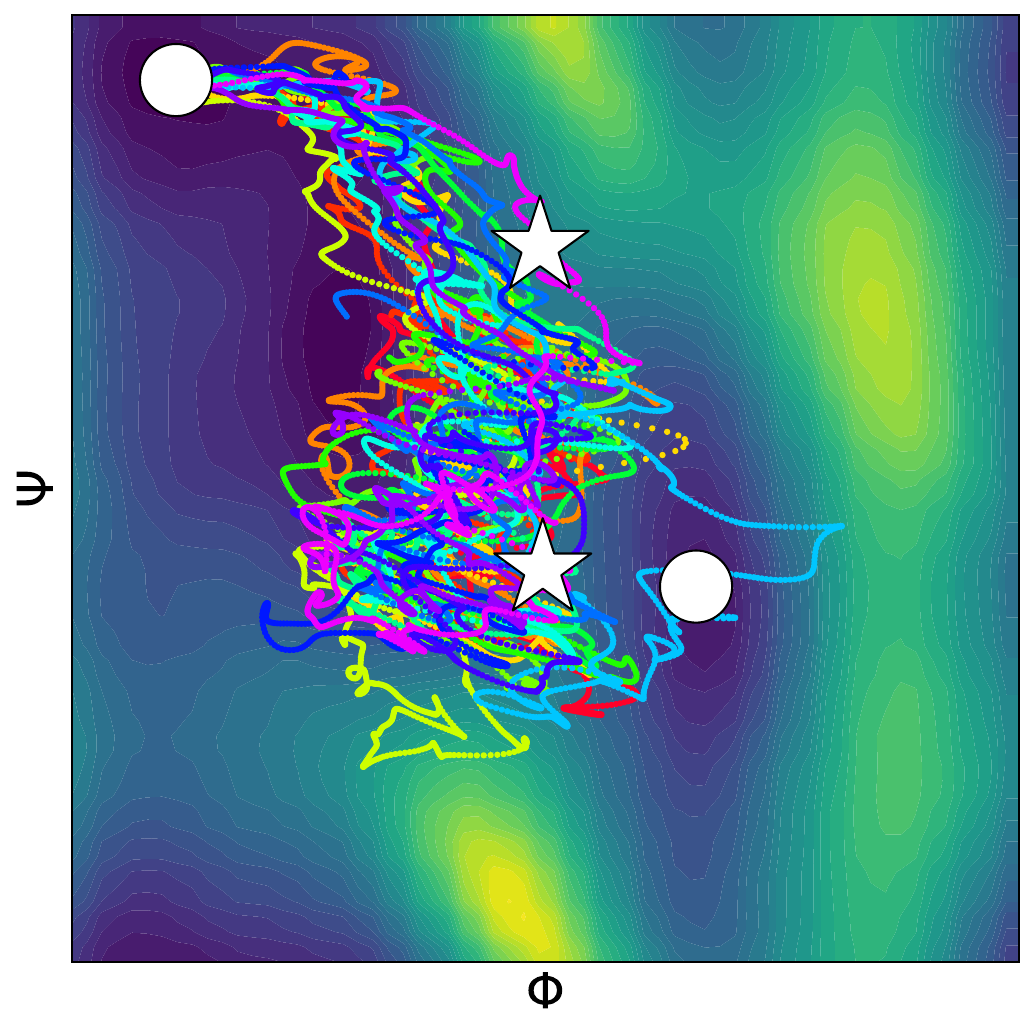}
        \caption*{SMD (10)}
    \end{subfigure}
    \hfill
    \begin{subfigure}[t]{.16\textwidth}
        \centering
        \includegraphics[width=\linewidth,height=\linewidth]{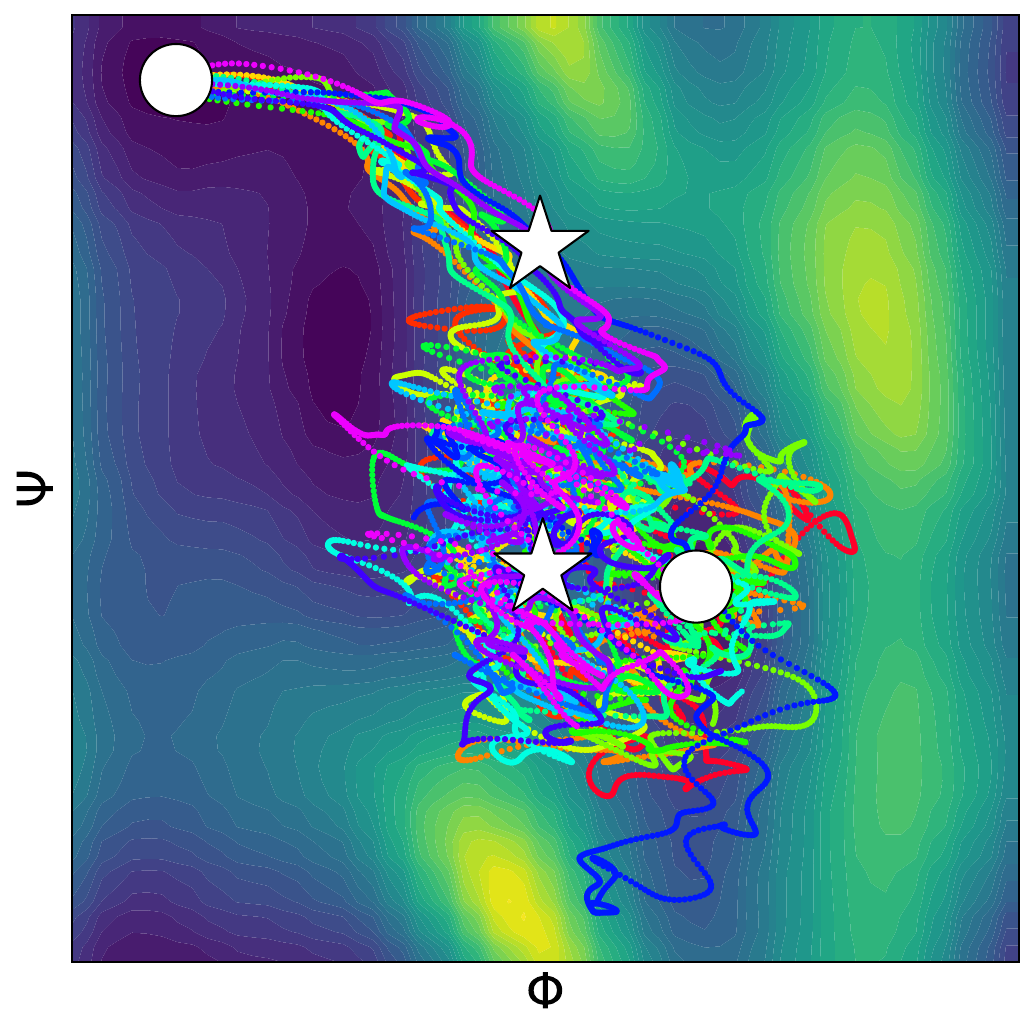}
        \caption*{SMD (20)}
    \end{subfigure}
    \hfill
    \begin{subfigure}[t]{.16\textwidth}
        \centering
        \includegraphics[width=\linewidth,height=\linewidth]{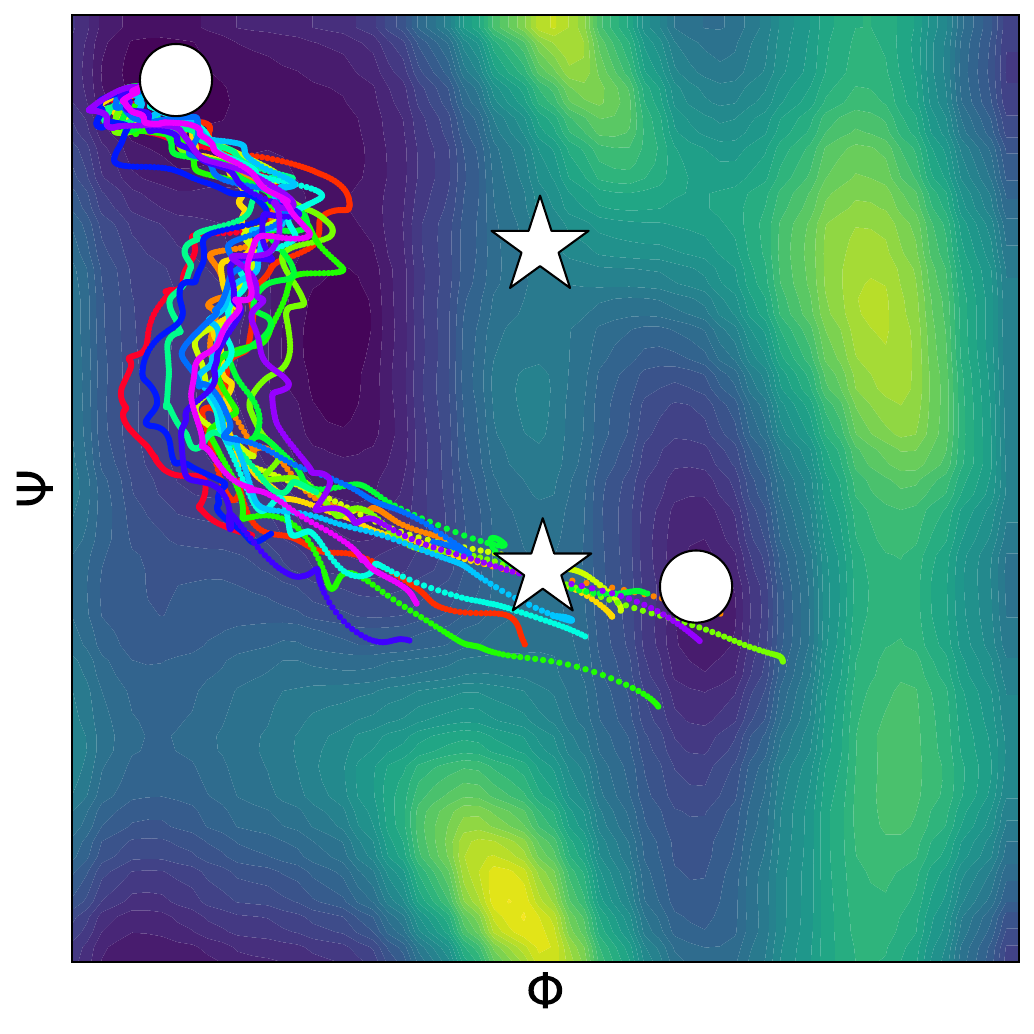}
        \caption*{PIPS (F)}
    \end{subfigure}
    \hfill
    \begin{subfigure}[t]{.16\textwidth}
        \centering
        \includegraphics[width=\linewidth,height=\linewidth]{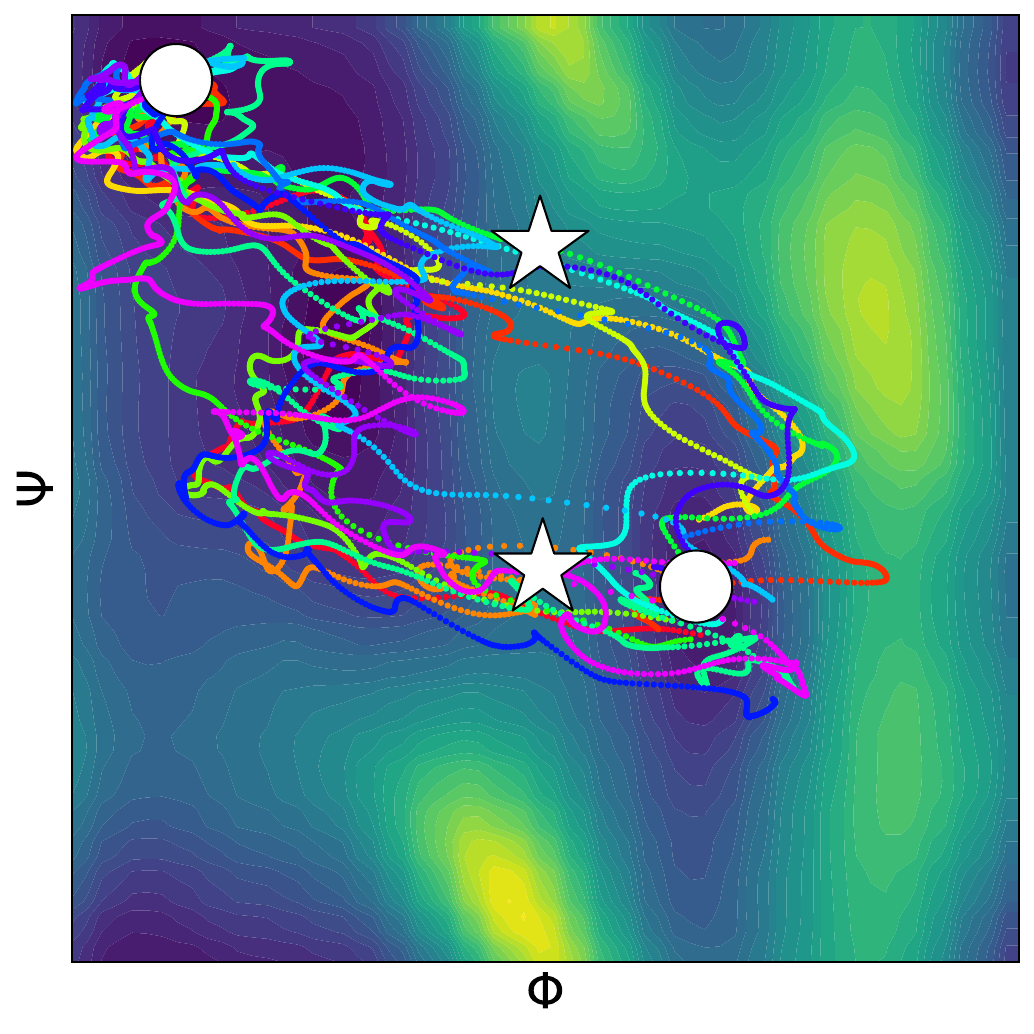}
        \caption*{TPS-DPS (F)}
    \end{subfigure}
    \hfill
    \begin{subfigure}[t]{.16\textwidth}
        \centering
        \includegraphics[width=\linewidth,height=\linewidth]{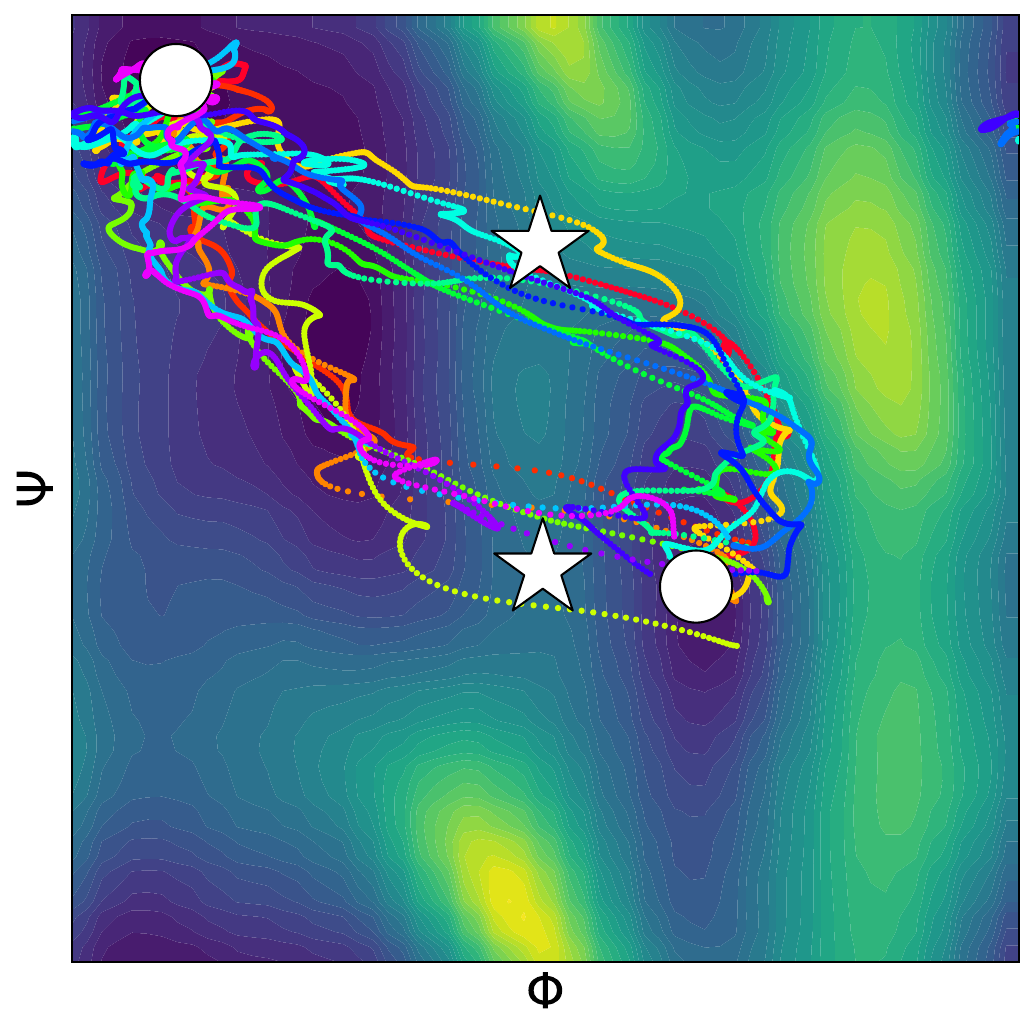}
        \caption*{TPS-DPS (P)}
    \end{subfigure}
    
    \vspace{.1in}
    \caption{
        \textbf{Visualization of sampled paths on energy landscapes.} For the double-well system, we aim to sample transition paths from the left meta-stable state to the right on the potential energy landscape (top). For Alanine Dipeptide, we aim to sample conformational changes from the $C5$ (upper left) to the $C7ax$ (lower right) on the Ramachandran plot (bottom). White circles and stars indicate meta-stable states and saddle points, respectively.
    }
    \label{fig:sampled_paths}
    \vspace{-0.2in}
\end{figure}

We begin by evaluating our method on a synthetic two-dimensional double-well system with dual channels. This system has two global minima representing the meta-stable states and two reaction pathways with the corresponding saddle points.
We aim to sample transition paths from the left state $\bm{R}_\mathcal{A}$ to the right meta-stable states $\mathcal{B}=\{\bm{R}~\vert~\lVert \bm{R}-\bm{R}_{\mathcal{B}}\rVert<0.5\}$. We collect ground truth path ensembles by rejection sampling which proposes paths sampled from the unbiased MD simulations and accepts if the final states are in the target meta-stable states $\mathcal{B}$. We provide more details on the double-well system in \cref{appx:double-well}.

As shown in \cref{tab:quantitative}, \cref{fig:neural-bias-potential}, and \cref{fig:sampled_paths}, TPS-DPS outperforms baselines regardless of the bias force parameterizations and generates more similar transition paths to the ground truth than baselines. In \cref{fig:neural-bias-potential}, the bias potential accelerates the transition by increasing the potential energy near the initial meta-stable state while decreasing the potential energy near the two energy barriers. Moreover, the distribution of energy and $y$ coordinate of the transition states from TPS-DPS is closest to the ground truth compared with other baselines, successfully capturing two reaction channels. In \cref{fig:sampled_paths}, TPS-DPS (F) and (P) generate similar transition paths to the ground truth while UMD at 1200K fails to escape the initial state and SMD struggles to pass the saddle points.

\subsection{Alanine Dipeptide}
For real-world molecules, we first consider Alanine Dipeptide consisting of two alanine residues and aim to sample conformational changes from the meta-stable state $C5$ to $C7ax$ as shown in \cref{fig:sampled_paths}. The target meta-stable states are defined as $\mathcal{B}=\{\bm{R}~\vert~\lVert \xi(\bm{R})-\xi(\bm{R}_{\mathcal{B}})\rVert<0.75\}$, where $\xi(\bm{R})=(\phi, \psi)$ is a well-known collective variable which consists of two backbone dihedral angles. Alanine Dipeptide has two reaction channels with the corresponding saddle points. 

In \cref{tab:quantitative} and \cref{fig:sampled_paths}, our method shows superior performance regardless of the bias force parameterizations and successfully generates diverse transition paths that capture two reaction channels. Compared to our method, UMD at 300K fails to escape the initial state, SMD with the two backbone torsion CV generates transition paths with less probable transition states, and two-way shooting struggles to find plausible transition states. PIPS generates transition paths with only one reaction channel, suffering from mode collapse.

\begin{table}[t]
    \centering
    \caption{
        \textbf{Benchmark scores on fast-folding proteins.}
        All metrics are averaged over 64 paths. Unless otherwise specified, paths are generated at 300K for Chignolin and 400K for the others.
    }
    \label{tab:quantitative-proteins}
    \resizebox{\textwidth}{!}{%
        \begin{tabular}{lccc}
            \toprule
            \multirow{2}{*}{Method} & RMSD ($\downarrow$) & THP ($\uparrow$) & ETS ($\downarrow$) \\
            & \r{A} & \% & $\mathrm{kJ mol}^{-1}$ \\
            \midrule
            \multicolumn{4}{c}{Chignolin} \\
            \midrule
            UMD (300K) & 7.98 $\pm$ 0.41 & 0.00 & - \\
            UMD (1200K) & 7.23 $\pm$ 0.93 & 1.56 & 388.17 \\
            SMD (10k) & 1.26 $\pm$ 0.31 & 6.25 & -527.95 $\pm$ 93.58\phantom{0} \\
            SMD (20k) & \textbf{0.85 $\pm$ 0.24} & 34.38 & 179.52 $\pm$ 138.87  \\
            PIPS (F) &  4.66 $\pm$ 0.17 & 0.00 & - \\
            PIPS (P) &  4.67 $\pm$ 0.32 & 0.00 & - \\
            TPS-DPS (F, Ours) & 4.41 $\pm$ 0.49 & 0.00 & - \\
            TPS-DPS (P, Ours) & 3.87 $\pm$ 0.42 & 0.00 & - \\
            TPS-DPS (S, Ours) & 1.17 $\pm$ 0.66 & \textbf{59.38}\phantom{0} & \textbf{-780.18 $\pm$ 216.93} \\
            \midrule
            \multicolumn{4}{c}{BBA} \\
            \midrule
            UMD (400K) & 10.03 $\pm$ 0.39\phantom{0} & 0.00 & - \\
            UMD (1200K) & 10.81 $\pm$ 1.05\phantom{0} & 0.00 & - \\
            SMD (10k) & 2.89 $\pm$ 0.32 & 0.00 & - \\
            SMD (20k) & 1.66 $\pm$ 0.30 & 26.56\phantom{0} & -3104.95 $\pm$ 97.57\phantom{0} \\
            PIPS (F) & 9.84 $\pm$ 0.18 & 0.00 & - \\
            PIPS (P) & 9.09 $\pm$ 0.36 & 0.00 & - \\
            TPS-DPS (F, Ours) & 9.48 $\pm$ 0.18 & 0.00 & - \\
            TPS-DPS (P, Ours) & 3.89 $\pm$ 0.35 & 0.00 & - \\
            TPS-DPS (S, Ours) & \textbf{1.21 $\pm$ 0.09} & \textbf{84.38}\phantom{0} & \textbf{-3801.68 $\pm$ 139.38} \\
            \bottomrule
        \end{tabular}
        \hspace{.1in}
        \begin{tabular}{lccc}
            \toprule
            \multirow{2}{*}{Method} & RMSD ($\downarrow$) & THP ($\uparrow$) & ETS ($\downarrow$) \\
            & \r{A} & \% & $\mathrm{kJ mol}^{-1}$ \\
            \midrule
            \multicolumn{4}{c}{Trp-cage} \\
            \midrule
            UMD (400K) & 7.94 $\pm$ 0.65 & 0.00 & - \\
            UMD (1200K) & 8.27 $\pm$ 1.13 & 0.00 & - \\
            SMD (10k) & 1.68 $\pm$ 0.23 & 3.12 & -312.54 $\pm$ 20.67 \\
            SMD (20k) & 1.20 $\pm$ 0.20 & 42.19\phantom{0} & -226.40 $\pm$ 85.59 \\
            PIPS (F) & 7.47 $\pm$ 0.19 & 0.00 & - \\
            PIPS (P) & 6.07 $\pm$ 0.26 & 0.00 & - \\
            TPS-DPS (F, Ours) & 6.35 $\pm$ 0.31 & 0.00 & - \\
            TPS-DPS (P, Ours) & 3.15 $\pm$ 0.52 & 12.50\phantom{0} & \textbf{-512.97 $\pm$ 56.89} \\
            TPS-DPS (S, Ours) & \textbf{0.76 $\pm$ 0.12} & \textbf{81.25}\phantom{0} & \phantom{0}-317.61 $\pm$ 140.89 \\
            \midrule
            \multicolumn{4}{c}{BBL} \\
            \midrule
            UMD (400K) & 18.48 $\pm$ 0.63\phantom{0} & 0.00 & - \\
            UMD (1200K) & 18.90 $\pm$ 1.16\phantom{0} & 0.00 & - \\
            SMD (10k) & 3.67 $\pm$ 0.22 & 0.00 & - \\
            SMD (20k) & 2.97 $\pm$ 0.33 & 7.81 & -1738.57 $\pm$ 386.81 \\
            PIPS (F) & 17.92 $\pm$ 0.29\phantom{0} & 0.00 & - \\
            PIPS (P) & 12.67 $\pm$ 0.31\phantom{0} & 0.00 & - \\
            TPS-DPS (F, Ours) & 10.15 $\pm$ 0.54\phantom{0} & 0.00 & - \\
            TPS-DPS (P, Ours) & 6.45 $\pm$ 0.26 & 0.00 & - \\
            TPS-DPS (S, Ours) & \textbf{1.60 $\pm$ 0.19} & \textbf{43.75}\phantom{0} & \textbf{-3616.32 $\pm$ 213.66} \\
            \bottomrule
        \end{tabular}%
    }
\end{table}

\begin{figure}[h!]
    \centering
    \begin{minipage}[t]{0.41\textwidth}
        \begin{subfigure}[t]{\textwidth}
            \centering
            \includegraphics[width=\textwidth]{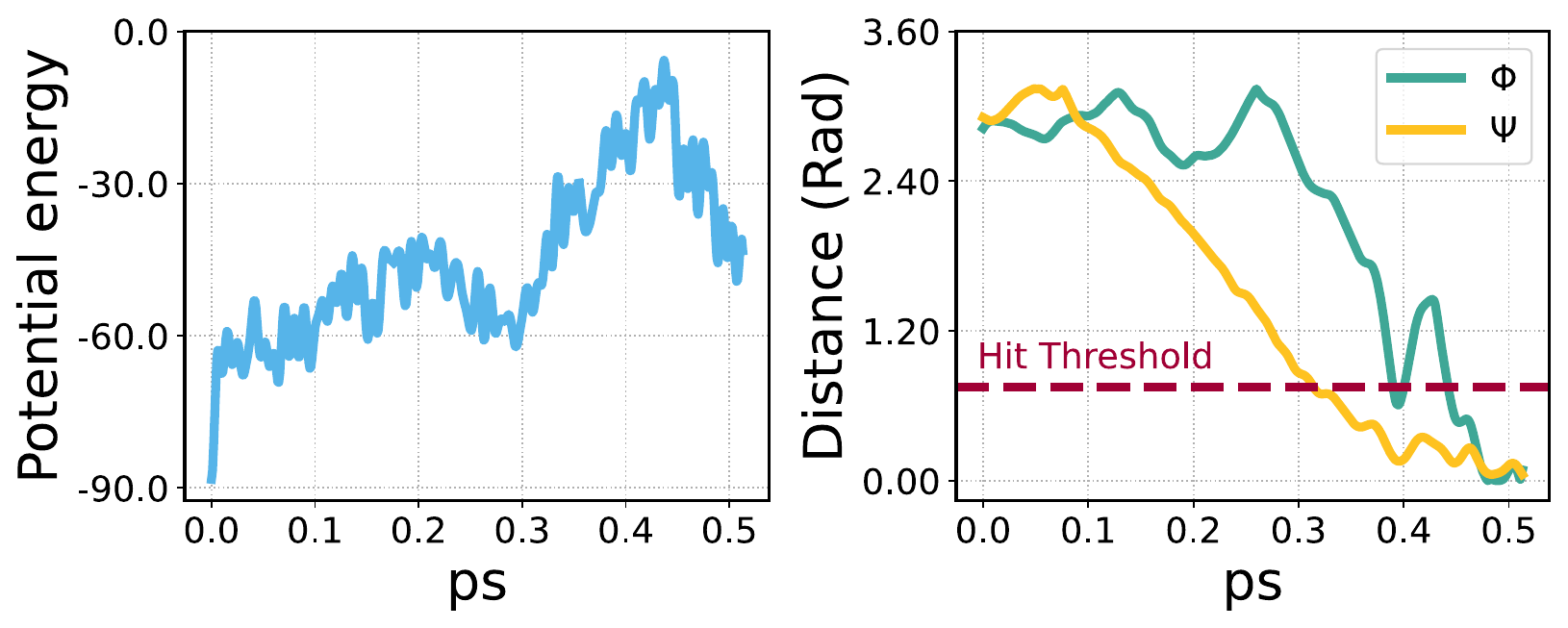}
            \vspace{-.2in}
            \caption{Alanine Dipeptide}
            \label{subfig:ad-energy-cv}
            \vspace{.1in}
        \end{subfigure}
        \begin{subfigure}[t]{\textwidth}
            \centering
            \includegraphics[width=\textwidth]{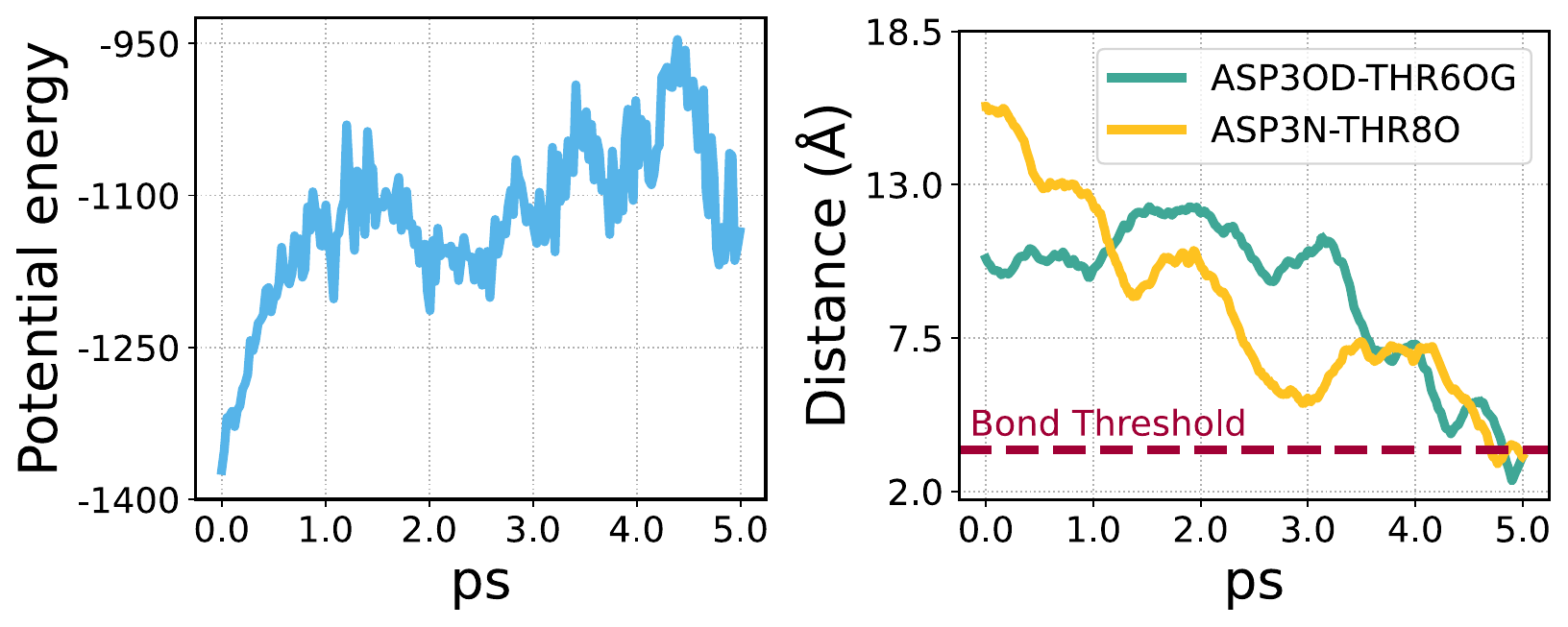} 
            \vspace{-.2in}
            \caption{Chignolin}
            \label{subfig:chig-energy-cv}
        \end{subfigure}
    \end{minipage}
    \hfill
    \begin{minipage}[t]{0.57\textwidth}
        \centering
        \begin{subfigure}[t]{\textwidth}
            \includegraphics[width=\textwidth]{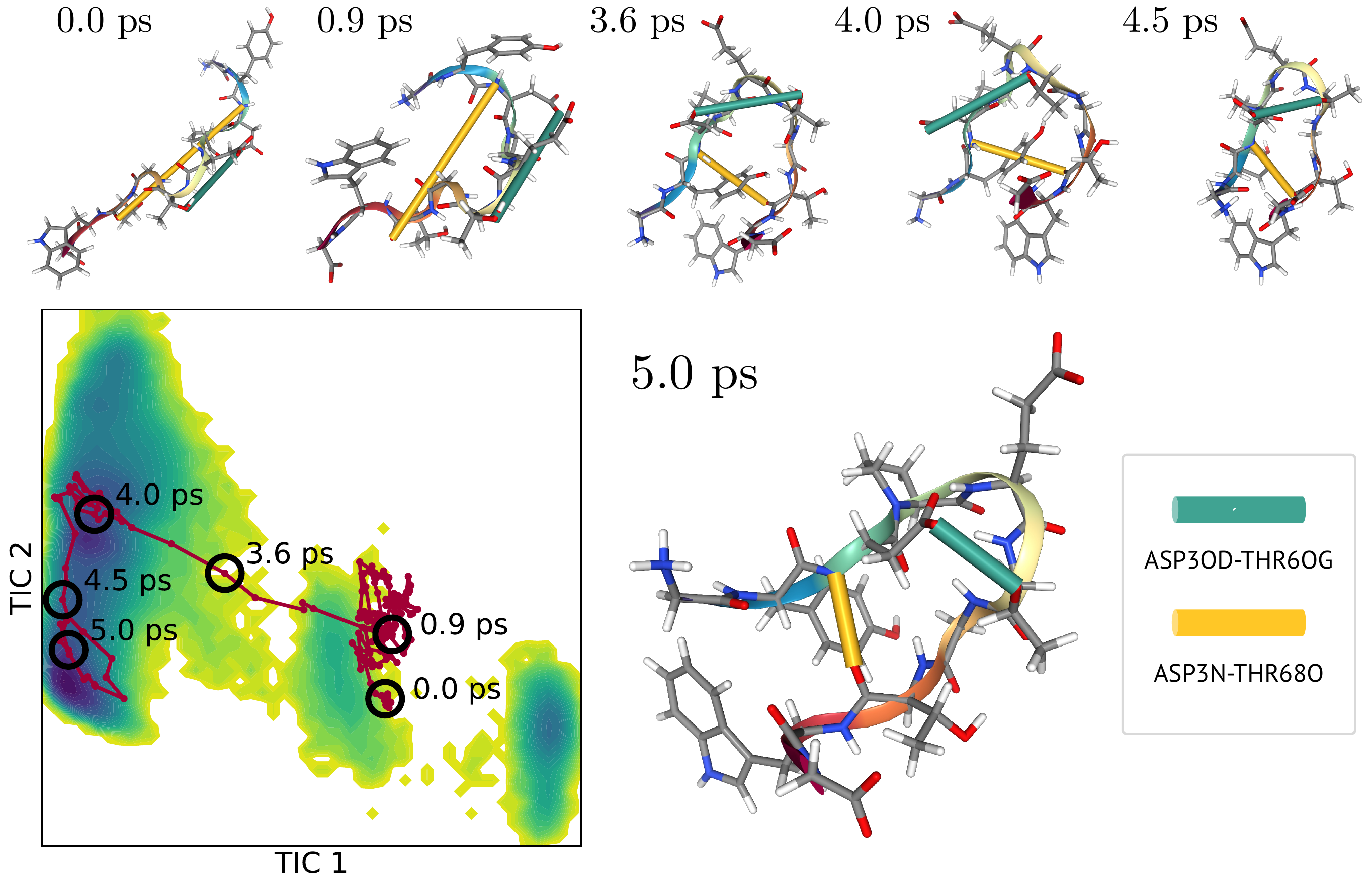} 
            \caption{Visualization of Chignolin}
            \label{subfig:chignolin-bond}
        \end{subfigure}
    \end{minipage}
    \caption{
        \textbf{Qualitative evaluation on transition path sampled from TPS-DPS.}
        \textbf{(\subref{subfig:ad-energy-cv})} Potential energy and the two backbone dihedral angle distances between the current and target states.
        \textbf{(\subref{subfig:chig-energy-cv})} Potential energy and the two hydrogen bond distances between the current and the target state.
        \textbf{(\subref{subfig:chignolin-bond})} Visualization of hydrogen bond formation in Chignolin. We highlight each hydrogen bond in green and yellow.
    }
    \label{fig:plots}
    \vspace{-0.1in}
\end{figure}

\subsection{Fast-folding Proteins}
Finally, for challenging molecules, we consider four fast-folding proteins \citep{lindorff2011fast}: Chignolin, Trp-cage, BBA, and BBL with 10, 20, 28, and 47 amino acids, respectively. Proteins fold into stable structures by forming networks of hydrogen bonds. We aim to sample protein folding processes as seen in \cref{fig:qualitative-proteins}. We define the target meta-stable state as $\mathcal{B}=\{\bm{R}~\vert~\lVert\xi(\bm{R})-\xi(\bm{R}_{\mathcal{B}})\rVert<0.75\}$ where $\xi$ consists of the top two components of time-lagged independent component analysis \citep[TICA;][]{perez2013identification}. We further describe TICA in \cref{appx:tica}.

As shown in \cref{tab:quantitative-proteins} and \cref{fig:qualitative-proteins}, only TPS-DPS (S) successfully samples transition paths that pass probable transition states while TPS-DPS (F) and TPS-DPS (P) fail to hit the target meta-stable state due to the lack of meaningful training signal. While SMD hits the target meta-stable, its transition paths pass less probable transition states. UMD and PIPS fail to hit the target meta-stable state.
In \cref{fig:plots}, we validate the sampled paths using the potential energy and donor-acceptor distance of the two key hydrogen bonds in Chignolin folding, ASP3OD-THR6OG and ASP3N-THR8O used in \citep{yang2024learning}. The sampled path of reduces the donor-acceptor distance below the threshold $3.5$\r{A}. For 3D videos of transition paths of fast-folding proteins, we refer to \href{https://kiyoung98.github.io/tps-dps/}{project page}.

\subsection{Ablation study}
We conduct ablation studies to verify the effectiveness of the six proposed components: log-variance loss, learnable control variate, replay buffer, simulated annealing, maximum over RBF values for various path lengths, and $SE(3)$ equivariance. To be specific, we (1) replace the log-variance divergence with the KL divergence, (2) replace the learnable control variate $w\in\mathbb{R}$ with the local control variate which is Monte-Carlo estimator used in \citet{nusken2021solving}, (3) remove the replay buffer and rely solely on data from the current policy, (4) use only one temperature $\lambda=300$K in sampling, (5) remove maximum operation over RBF kernel values using only the final state, (6) skip the alignment of neural network input states with the target position by the Kabsch algorithm.

As seen in \cref{fig:ablation}, all the proposed components improve performance. Our loss is smaller than the KL divergence by more than two orders of magnitude and significantly improves performance. Learning the control variate slightly improves performance, showing that utilizing data from previous policies is effective. The replay buffer significantly improves training efficiency, and shows that the large performance gap between our loss and KL divergence comes from the replay buffer. Simulated annealing for biased MD simulation is critical to finding transition paths. RMSD does not decrease without simulated annealing while loss decreases significantly. For the relaxed indicator function, maximum operation accelerates convergence and improves performance with frequent training signals from the subtrajectories. Leveraging the symmetry of the bias force with the Kabsch algorithm improves performance. We further compare with reverse KL divergence in \cref{appx:kl_compare}.
\begin{figure}[t]
    \begin{subfigure}[b]{0.52\linewidth}
        \centering
        \resizebox{\textwidth}{!}{%
            \begin{tabular}{lccc}
                \toprule
                \multirow{2}{*}{Method} & RMSD ($\downarrow$) & THP ($\uparrow$) & ETS ($\downarrow$) \\
                & \r{A} & \% & $\mathrm{kJ mol}^{-1}$ \\
                \midrule
                \textbf{Ours} & \textbf{0.16 $\pm$ 0.06} & \textbf{92.19} & \textbf{19.82 $\pm$ 15.88} \\ 
                w/ KL & 0.43 $\pm$ 0.34 & 53.12 & 27.88 $\pm$ 14.38 \\ 
                w/ local & 0.24 $\pm$ 0.15 & 73.44 & 22.53 $\pm$ 14.45 \\ 
                w/o replay & 0.33 $\pm$ 0.27 & 64.06 & 24.38 $\pm$ 12.31 \\ 
                w/o annealing & 0.67 $\pm$ 0.21 & 9.38 & 69.86 $\pm$ 30.15 \\ 
                w/o various len & 0.23 $\pm$ 0.11 & 75.00 & 29.49 $\pm$ 14.13 \\ 
                w/o equivariance & 0.34 $\pm$ 0.17 & 56.25 & 22.12 $\pm$ 16.96 \\ 
                \bottomrule
            \end{tabular}%
        }
        \caption{Component-wise performance on Alanine Dipeptide.}
        \label{subfig:ablation-table}
    \end{subfigure}
    \hfill
    \begin{subfigure}[b]{0.47\linewidth}
        \centering
        \includegraphics[width=\textwidth]{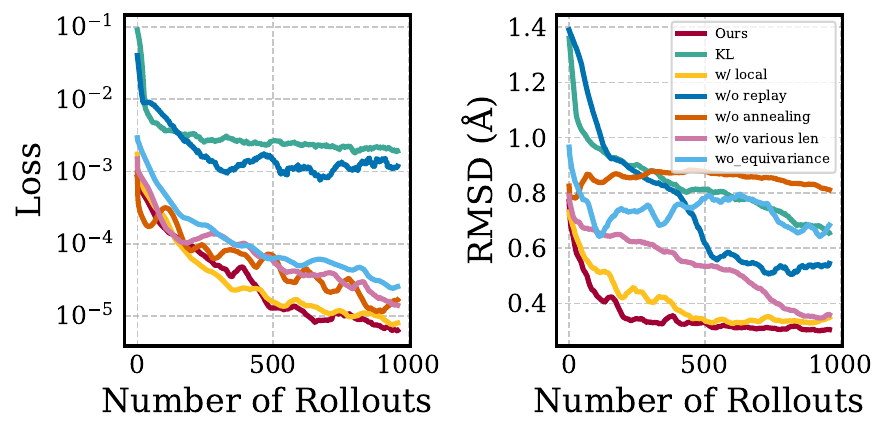}
        \caption{Loss and RMSD curves over rollouts.}
        \label{subfig:ablation-curve}
    \end{subfigure}
    \caption{
        \textbf{Ablation studies on each proposed component of TPS-DPS (F) in Alanine Dipeptide. }
        \textbf{(\subref{subfig:ablation-table})} Benchmark scores on Alanine Dipeptide.
        \textbf{(\subref{subfig:ablation-curve})} Loss and RMSD curves averaged over 8 seeds.
    }
    \label{fig:ablation}
\end{figure}

\section{Conclusion}
\label{sec:conclusion}

In this work, we introduced a novel CV-free diffusion path sampler, called TPS-DPS, to amortize the cost of sampling transition paths. We propose the log-variance divergence with the learnable control variate and off-policy training with the replay buffer and simulated annealing. We propose a new scale-based equivariant parameterization of bias force and relaxed indicator function for reaching target meta-stable states and frequent training signals, particularly in large molecules. Evaluations on double-well, Alanine Dipeptide, and four fast-folding proteins demonstrate that TPS-DPS is superior in accuracy and diversity compared to non-ML and ML approaches.

\textbf{Limitation.}
While our experiments show promise, they are limited to small fast-folding proteins (up to 50 amino acids). The applicability of our method to real-world proteins with more than 500 amino acids remains unexplored. Additionally, integrating various libraries, such as PLUMED \citep{plumed2019promoting} and DMFF \citep{wang2023dmff}, for MD with neural network bias force, has not yet been investigated.

Our method does not generalize across unseen pairs of meta-stable states or different molecular systems. These points to an interesting venue for future research, which would be more appealing for practical applications in drug discovery or material design.
\newpage

\bibliography{iclr2025_conference}
\bibliographystyle{iclr2025_conference}

\appendix
\newpage

\newpage
\section{Method details}
\label{appx:logvar-details}

\subsection{Log variance formulation}
\label{appx:log_var_compute}

In this section, we derive \cref{eq:girsanov_logvar} from \cref{eq:log_var_divergence} to get the explicit expression for log-variance divergence in terms of SDE in \cref{eq:unbiased_md} and \cref{eq:biased_md}. We refer to \citet[Appendix A.1]{nusken2021solving} for the derivation in more general settings.

Our goal is to derive that
\begin{equation}
    \mathbb{E}_{\mathbb{P}_{\tilde{\bm{v}}}}\left[
    \left(\log \frac{\mathrm{d}\mathbb{Q}}{\mathrm{d}\mathbb{P}_{\bm{v}_{\theta}}}
    -
    \mathbb{E}_{\mathbb{P}_{\tilde{\bm{v}}}}\left[\log \frac{\mathrm{d}\mathbb{Q}}{\mathrm{d}\mathbb{P}_{\bm{v}_{\theta}}}\right]
    \right)^{2}\right] = \mathbb{E}_{\mathbb{P}_{\tilde{\bm{v}}}}\left[(F_{\bm{v}_{\theta}, \tilde{\bm{v}}} - \mathbb{E}_{\mathbb{P}_{\tilde{\bm{v}}}}[F_{\bm{v}_{\theta}, \tilde{\bm{v}}}])^{2}\right],
\end{equation}

To this end, we focus on calculating $\log \frac{\mathrm{d}\mathbb{Q}}{\mathrm{d}\mathbb{P}_{\bm{v}_{\theta}}}(\bm{X})$ when $\bm{X}\sim\mathbb{P}_{\tilde{\bm{v}}}$.
Following \citep[Lemma A.1]{nusken2021solving}, we apply Girsanov's Theorem to calculate the Radon-Nikodym derivative $\frac{\mathrm{d}\mathbb{P}_{\bm{v}_{\theta}}}{\mathrm{d}\mathbb{P}_{0}}$ as follows:
\begin{equation} \label{eq:rnd_v0}
    \frac{\mathrm{d}\mathbb{P}_{\bm{v}_{\theta}}}{\mathrm{d}\mathbb{P}_{0}}(\bm{X}) = \exp\left(\int^{T}_{0}(\bm{v}_{\theta}^{T}\Sigma^{-1})(\bm{X}_{t})\cdot\mathrm{d}\bm{X}_{t}-\int^{T}_{0}(\Sigma^{-1}\bm{u}\cdot \bm{v}_{\theta})(\bm{X}_{t})\mathrm{d}t-\frac{1}{2}\int^{T}_{0}\lVert \bm{v}_{\theta}(\bm{X}_{t})\rVert^{2}\mathrm{d}t\right).
\end{equation}
Since the state $\bm{X}_{t}$ follows the SDE $\mathrm{d}\bm{X}_{t} = (\bm{u}(\bm{X}_{t}) + \Sigma\tilde{\bm{v}}(\bm{X}_{t}))\mathrm{d}t + \Sigma \mathrm{d}\bm{W}_{t}$. We plug it into \cref{eq:rnd_v0} and utilize the definition of the target path measure $\mathbb{Q}$ in \cref{eq:rnd_transition_path_measure} to compute $\log\frac{\mathrm{d}\mathbb{Q}}{\mathrm{d}\mathbb{P}_{0}}$ as follows:
\begin{align}
    \log \frac{\mathrm{d}\mathbb{Q}}{\mathrm{d}\mathbb{P}_{\bm{v}_{\theta}}}(\bm{X}) 
    &= \log\frac{\mathrm{d}\mathbb{Q}}{\mathrm{d}\mathbb{P}_{0}}\frac{\mathrm{d}\mathbb{P}_{0}}{\mathrm{d}\mathbb{P}_{\bm{v}_{\theta}}}(\bm{X}) \\
    &= \log 1_{\mathcal{B}}(\bm{X})-\log Z -\int^{T}_{0}(\bm{v}_{\theta}^{T}\Sigma^{-1})(\bm{X}_{t})\cdot\mathrm{d}\bm{X}_{t} \\
    &\quad +\int^{T}_{0}(\Sigma^{-1}\bm{u}\cdot \bm{v}_{\theta})(\bm{X}_{t})\mathrm{d}t+\frac{1}{2}\int^{T}_{0}\lVert \bm{v}_{\theta}(\bm{X}_{t})\rVert^{2}\mathrm{d}t
    \\
    &= \log 1_{\mathcal{B}}(\bm{X}) -\log Z -\int^{T}_{0}(\bm{v}_{\theta}\cdot \tilde{\bm{v}})(\bm{X}_{t})\mathrm{d}t \\
    &\quad -\int^{T}_{0}\bm{v}_{\theta}(\bm{X}_{t}) \cdot \mathrm{d}\bm{W}_{t} +\frac{1}{2}\int^{T}_{0}\lVert \bm{v}_{\theta}(\bm{X}_{t})\rVert^{2}\mathrm{d}t
    \\
    &= F_{\bm{v}_{\theta}, \tilde{\bm{v}}}(\bm{X}) - \log Z
\end{align}
Since $\log Z$ is the constant, it is canceled out in the log-variance divergence as follows:
\begin{equation}
    \mathbb{E}_{\mathbb{P}_{\tilde{\bm{v}}}}\left[
    \left(\log \frac{\mathrm{d}\mathbb{Q}}{\mathrm{d}\mathbb{P}_{\bm{v}_{\theta}}}
    -
    \mathbb{E}_{\mathbb{P}_{\tilde{\bm{v}}}}\left[\log \frac{\mathrm{d}\mathbb{Q}}{\mathrm{d}\mathbb{P}_{\bm{v}_{\theta}}}\right]
    \right)^{2}\right] = \mathbb{E}_{\mathbb{P}_{\tilde{\bm{v}}}}\left[(F_{\bm{v}_{\theta}, \tilde{\bm{v}}} - \mathbb{E}_{\mathbb{P}_{\tilde{\bm{v}}}}[F_{\bm{v}_{\theta}, \tilde{\bm{v}}}])^{2}\right],
\end{equation}

\subsection{Connection to existing loss functions on discrete-time domain}
\label{appx:discretize_connection}
In this section, we connect our discretized loss of \cref{eq:loss_buffer} to the loss function, called relative trajectory balance \citep[RTB]{venkatraman2024amortizing}. Like our methods, RTB also amortized inference in target path distribution by training forward distribution on discrete-time domains such as vision, language, and control tasks. When discretized, our loss function is equivalent to the RTB objective.

Our goal is to show that for every paths $\bm{x}_{0:L}$ sampled from the path measure $\mathbb{P}_{\bm{v}_{\bar{\theta}}}$, 
\begin{equation}
    (\hat{F}_{\bm{v}_{\theta}, \bm{v}_{\bar{\theta}}}(\bm{x}_{0:L}) - w)^{2} = \left(\log\frac{p_{0}(\bm{x}_{0:L})1_{\mathcal{B}}(\bm{x}_{0:L})}{ Z_{\theta}p_{\bm{v}_{\theta}}(\bm{x}_{0:L})}\right)^{2},
\end{equation}
where $w=\log Z_{\theta}$ is a learnable scalar parameter, and path distribution $p_{\bm{v}}(\bm{x}_{0:L})=\prod_{\ell=0}^{L-1}p_{\bm{v}}(\bm{x}_{l+1}|\bm{x}_{l})$ is Markovian, and its transition kernel $p_{\bm{v}}(\bm{x}_{l+1}|\bm{x}_{l})$ are derived from Euler-Maruyama discretization of the SDE in \cref{eq:biased_md} as follows:
\begin{equation}
    \bm{x}_{l+1} = \bm{x}_{l} + \bm{u}(\bm{x}_{l})\Delta t + \Sigma \bm{v}(\bm{x}_{l})\Delta t + \Sigma\bm{\epsilon}_{l},
\end{equation}
where $\bm{\epsilon}_{l}\sim\mathcal{N}(\bm{0}, \Delta t)$. To this end, we can calculate as follows: 
\begin{align}
    &\log p_{0}(\bm{x}_{0:L})-\log p_{\bm{v}_{\theta}}(\bm{x}_{0:L})\\
    &=\sum_{\ell=0}^{L-1}\log p_{\bm{v}_{\theta}}(\bm{x}_{l+1}\vert\bm{x}_{l}) -  \sum_{\ell=0}^{L-1}\log p_{0}(\bm{x}_{l+1}\vert\bm{x}_{l}) \\
    &=\frac{1}{2} \sum_{\ell=0}^{L-1}(\Sigma\bm{v}_{\bar{\theta}}\Delta t+\Sigma\bm{\epsilon}_{l}-\Sigma\bm{v}_{\theta}\Delta t)^{T}(\Sigma^{T}\Sigma\Delta t)^{-1} (\Sigma\bm{v}_{\bar{\theta}}\Delta t+\Sigma\bm{\epsilon}_{l}-\Sigma\bm{v}_{\theta}\Delta t)\\
    &- \frac{1}{2} \sum_{\ell=0}^{L-1}(\Sigma\bm{v}_{\bar{\theta}}\Delta t+\Sigma\bm{\epsilon}_{l})^{T}(\Sigma^{T}\Sigma\Delta t)^{-1} (\Sigma\bm{v}_{\bar{\theta}}\Delta t+\Sigma\bm{\epsilon}_{l})\\
    &=\frac{1}{2\Delta t} \sum_{\ell=0}^{L-1}(\lVert\bm{v}_{\bar{\theta}}\Delta t+\bm{\epsilon}_{l}-\bm{v}_{\theta}\Delta t\rVert^{2}-\lVert\bm{v}_{\bar{\theta}}\Delta t+\bm{\epsilon}_{l}\rVert^{2})\\
    &= \frac{1}{2}\sum_{\ell=0}^{L-1}\lVert \bm{v}_{\theta}(\bm{x}_{\ell})\rVert^{2}\Delta t-\sum_{\ell=0}^{L-1}(\bm{v}_{\theta}\cdot\bm{v}_{\bar{\theta}})(\bm{x}_{\ell})\Delta t-\sum_{\ell=0}^{L-1}\bm{v}_{\theta}(\bm{x}_{\ell})\cdot\bm{\epsilon}_{\ell}\\
    &= \hat{F}_{\bm{v}_{\theta}, \bm{v}_{\bar{\theta}}}(\bm{x}_{0:L}) -\log 1_{\mathcal{B}}(\bm{x}_{0:L})
\end{align}
which implies
\begin{equation}
    \hat{F}_{\bm{v}_{\theta}, \bm{v}_{\bar{\theta}}}(\bm{x}_{0:L})= \log\frac{p_{0}(\bm{x}_{0:L})1_{\mathcal{B}}(\bm{x}_{0:L})}{ p_{\bm{v}_{\theta}}(\bm{x}_{0:L})},
\end{equation}
by subtracting $w$ and squaring both sides, we have
\begin{equation}
    (\hat{F}_{\bm{v}_{\theta}, \bm{v}_{\bar{\theta}}}(\bm{x}_{0:L}) - w)^{2} = \left(\log\frac{p_{0}(\bm{x}_{0:L})1_{\mathcal{B}}(\bm{x}_{0:L})}{ Z_{\theta}p_{\bm{v}_{\theta}}(\bm{x}_{0:L})}\right)^{2}
\end{equation}
We can view $p_{0}(\bm{x}_{0:L})1_{\mathcal{B}}(\bm{x}_{0:L})$ as the unnormalized target distribution discretized from the target path measure $\mathbb{Q}$, and $Z_{\theta}$ as the estimator for normalizing constant $Z=\int p_{0}(\bm{x}_{0:L})1_{\mathcal{B}}(\bm{x}_{0:L})d\bm{x}_{0:L}$, and
 $p_{\bm{v}_{\theta}}(\bm{x}_{0:L})$ as forward probability distribution to amortize inference in the target distribution. Based on these results, we provide our training algorithm in \cref{alg:training}. 
 
\subsection{Proof of scale-based parameterization} \label{appx:target_reaching_proof}
In this section, we prove that our scale-based parameterization of bias force strictly decreases the distance to the (aligned) target position for small step sizes, improving the ability to find informative paths in large molecules. 

\begin{prop} Consider the molecular state $\bm{R}_{t}$ at the $t$-th time step and the next state $\bm{R}_{t+\Delta t}'= \bm{R}_{t}+\bm{b}(\bm{X}_{t})\Delta t/\bm{m}$ updated by step size $\Delta t$ and the bias force $\bm{b}(\bm{X}_{t})=\text{diag}(\bm{s}_{\theta}(\rho_{t}^{-1} \cdot \bm{X}_{t}))(\rho_{t}\cdot{\bm{R}}_{\mathcal{B}}-\bm{R}_{t})$. Then there always exists a small enough $\Delta t$ that strictly decreases the distance towards the target state $\bm{R}_{\mathcal{B}}$:
\begin{equation}
\lVert \rho'_{t+\Delta t} \cdot \bm{R}_{\mathcal{B}} - \bm{R}'_{t+\Delta t} \rVert < \lVert \rho_{t} \cdot \bm{R}_{\mathcal{B}} - \bm{R}_{t}\rVert,
\end{equation}
where $\rho'_{t+\Delta t}=\operatorname{argmin}_{\rho\in SE(3)}\lVert\rho\cdot\bm{R}_{\mathcal{B}}-\bm{R}'_{t+\Delta t}\rVert$ and we assume that there does not exist a rotation that exactly aligns the current molecular state to the target state, i.e., $\lVert \rho_{t} \cdot \bm{R}_{\mathcal{B}} - \bm{R}_{t}\rVert > 0$.
\end{prop}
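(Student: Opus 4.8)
The plan is to exploit the optimality of the re-aligned rotation $\rho'_{t+\Delta t}$ to remove it from the inequality, reducing the statement to an elementary one-dimensional calculation in $\Delta t$. Since $\rho'_{t+\Delta t}$ minimizes $\lVert \rho\cdot\bm{R}_{\mathcal{B}} - \bm{R}'_{t+\Delta t}\rVert$ over all $\rho\in SE(3)$, evaluating that objective at the particular candidate $\rho=\rho_t$ yields the suboptimality bound
\begin{equation}
\lVert \rho'_{t+\Delta t}\cdot\bm{R}_{\mathcal{B}} - \bm{R}'_{t+\Delta t}\rVert \le \lVert \rho_t\cdot\bm{R}_{\mathcal{B}} - \bm{R}'_{t+\Delta t}\rVert.
\end{equation}
Hence it suffices to prove the stronger claim with the alignment held fixed at $\rho_t$ on both sides, namely $\lVert \rho_t\cdot\bm{R}_{\mathcal{B}} - \bm{R}'_{t+\Delta t}\rVert < \lVert \rho_t\cdot\bm{R}_{\mathcal{B}} - \bm{R}_t\rVert$. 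This is the only step where the minimizing property of $\rho'_{t+\Delta t}$ is needed.

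Next I would introduce the shorthand $\bm{d} = \rho_t\cdot\bm{R}_{\mathcal{B}} - \bm{R}_t$ for the direction to the aligned target and $\bm{g} = \bm{b}(\bm{X}_t)/\bm{m}$ for the mass-weighted update, so that $\bm{R}'_{t+\Delta t} = \bm{R}_t + \bm{g}\,\Delta t$ and therefore $\rho_t\cdot\bm{R}_{\mathcal{B}} - \bm{R}'_{t+\Delta t} = \bm{d} - \bm{g}\,\Delta t$. Expanding the squared norm gives the quadratic in $\Delta t$
\begin{equation}
\lVert \bm{d} - \bm{g}\,\Delta t\rVert^{2} = \lVert \bm{d}\rVert^{2} - 2\,\Delta t\,(\bm{g}^{\top}\bm{d}) + \Delta t^{2}\lVert\bm{g}\rVert^{2},
\end{equation}
so the desired strict inequality is equivalent to $\Delta t\bigl(\Delta t\,\lVert\bm{g}\rVert^{2} - 2\,\bm{g}^{\top}\bm{d}\bigr) < 0$.

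The key structural input, already recorded in the main text, is the strict positive correlation $\bm{g}^{\top}\bm{d} = \sum_i (s_i/m_i)\,d_i^{2} > 0$, which holds because the predicted scaling factors $\bm{s}_\theta$ and the masses $\bm{m}$ are strictly positive and $\lVert\bm{d}\rVert = \lVert\rho_t\cdot\bm{R}_{\mathcal{B}} - \bm{R}_t\rVert > 0$ by hypothesis; the same positivity forces $\bm{g}\neq\bm{0}$, hence $\lVert\bm{g}\rVert^{2} > 0$. Consequently, for every $\Delta t \in \bigl(0,\ 2\,\bm{g}^{\top}\bm{d}/\lVert\bm{g}\rVert^{2}\bigr)$ the bracketed factor is negative while $\Delta t > 0$, which establishes the claim for all sufficiently small step sizes. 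I expect no genuine obstacle here: the only real subtlety is that the left-hand side involves the re-optimized alignment $\rho'_{t+\Delta t}$ rather than $\rho_t$, and the suboptimality bound of the first step disposes of this cleanly, after which the argument reduces to the elementary fact that a short step along a direction positively correlated with $\bm{d}$ strictly decreases the Euclidean distance to the target.
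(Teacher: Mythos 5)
Your proposal is correct and follows essentially the same route as the paper's proof: strict positive correlation $\bm{g}^{\top}\bm{d}=\sum_i (s_i/m_i)d_i^2>0$ from the positive scaling factors, quadratic expansion of the fixed-alignment squared distance in $\Delta t$, choice of $\Delta t \in \bigl(0,\ 2\,\bm{g}^{\top}\bm{d}/\lVert\bm{g}\rVert^{2}\bigr)$, and the argmin property of $\rho'_{t+\Delta t}$ to pass to the re-optimized alignment. The only difference is cosmetic ordering — you invoke the suboptimality bound at the start while the paper applies it as the final step — so the two arguments are mathematically identical.
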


\begin{proof}
The proof consists of two steps. We first show the (strictly) positive correlation between the bias force and the direction from the $t$-th state $\bm{R}_{t}$ to the target state $\bm{R}_{\mathcal{B}}$. Next, we show that the positive correlation gaurantees a strict decrease in distance between the states, i.e., $\lVert \rho_{t} \cdot \bm{R}_{\mathcal{B}} - \bm{R}_{t}\rVert$, given that the distance was not already zero.

\textbf{Step 1:} First, we show that the bias force (divided by atom-wise masses) is positively correlated with the direction to the target position, i.e., $(\bm{b}(\bm{X}_{t})/\bm{m})^{\top} (\rho_{t} \cdot \bm{R}_{\mathcal{B}}- \bm{R}_{t})>0.$ This follows from:
\begin{align}
    (\bm{b}(\bm{X}_{t})/\bm{m})^{\top} (\rho_{t} \cdot \bm{R}_{\mathcal{B}}- \bm{R}_{t})
    &=(\rho_{t} \cdot \bm{R}_{\mathcal{B}}-\bm{R}_{t})^{T}\frac{\text{diag}(\bm{s}_{\theta}(\rho_{t}^{-1}\cdot \bm{X}_{t}))}{\bm{m}}(\rho_{t} \cdot \bm{R}_{\mathcal{B}}-\bm{R}_{t}) \\
    &=\sum_{i=1}^{3N}\left(\frac{\bm{s}_{i}}{\bm{m}_i}\right) \left(\rho_{t} \cdot \bm{R}_{\mathcal{B}}-\bm{R}_{t}\right)^{2}_{i} > 0,
\end{align}
where $\bm{s}_{i}>0$ is the $i$-th element of $\bm{s}_{\theta}(\rho_{t}^{-1}\cdot \bm{X}_{t})$ and $\left(\rho_{t} \cdot \bm{R}_{\mathcal{B}}-\bm{R}_{t}\right)_i$ is the $i$-th element of the direction to the target position. 

\textbf{Step 2:} Next, we show that the positive correlation ensures distance reduction for a small enough step size.
Consider the squared distance between the target position $\rho_{t}\cdot \bm{R}_\mathcal{B}$ and updated position $\bm{R}'$ by bias force
\begin{align}
    &\lVert \rho_{t} \cdot \bm{R}_{\mathcal{B}} - \bm{R}'_{t+\Delta t} \rVert^2 \notag \\
    &= \lVert \rho_{t} \cdot \bm{R}_{\mathcal{B}}-(\bm{R}_{t}+\bm{b}(\bm{X}_{t})\Delta t/\bm{m}) \rVert^2 \\
    &= \lVert (\rho_{t} \cdot \bm{R}_{\mathcal{B}} - \bm{R}_{t})-\bm{b}(\bm{X}_{t})\Delta t/\bm{m} \rVert^2 \\
    &= \lVert \rho_{t} \cdot \bm{R}_{\mathcal{B}} - \bm{R}_{t} \rVert^2 
    - 2\Delta t (\bm{b}(\bm{X}_{t})/\bm{m})^{\top} (\rho_{t} \cdot \bm{R}_{\mathcal{B}} - \bm{R}_{t}) 
    + (\Delta t)^2\lVert \bm{b}(\bm{X}_{t})/\bm{m}\rVert^2.
\end{align}

Due to step 1, i.e., $(\bm{b}(\bm{X}_{t})/\bm{m})^{\top} (\rho_{t} \cdot \bm{R}_{\mathcal{B}}- \bm{R}_{t})>0$, there exists a step size $\Delta t$ satisfying:
\begin{equation}
    0< \Delta t < \frac{2 (\bm{b}(\bm{X}_{t})/\bm{m})^{\top} (\rho_{t} \cdot \bm{R}_{\mathcal{B}}- \bm{R}_{t})}{\lVert \bm{b}(\bm{X}_{t})/\bm{m}\rVert^2}.
\end{equation}
With this choice of $\Delta t$, multiplying $\Delta t\lVert \bm{b}(\bm{X}_{t})/\bm{m}\rVert^2$ leads to the following inequaliity:
\begin{equation}
    (\Delta t)^2\lVert \bm{b}(\bm{X}_{t})/\bm{m}\rVert^2 < 2\Delta t (\bm{b}(\bm{X}_{t})/\bm{m})^{\top} (\rho_{t} \cdot \bm{R}_{\mathcal{B}}- \bm{R}_{t}).
\end{equation}
By subtracting the right-hand side from both sides and adding $\lVert \rho_{t} \cdot \bm{R}_{\mathcal{B}} - \bm{R}_{t} \rVert^2$ to both sides, we have the following inequality:
\begin{equation}
    \lVert \rho_{t} \cdot \bm{R}_{\mathcal{B}} - \bm{R}'_{t+\Delta t} \rVert^2 < \lVert \rho_{t} \cdot \bm{R}_{\mathcal{B}} - \bm{R}_{t} \rVert^2.
\end{equation}
Taking the square root of both sides, we have the following inequality:
\begin{equation}
    \lVert \rho'_{t+\Delta t} \cdot \bm{R}_{\mathcal{B}} - \bm{R}'_{t+\Delta t} \rVert \le 
    \lVert \rho_{t} \cdot \bm{R}_{\mathcal{B}} - \bm{R}'_{t+\Delta t} \rVert < \lVert \rho_{t} \cdot \bm{R}_{\mathcal{B}} - \bm{R}_{t}\rVert,
\end{equation}
where the first inequality follows from the definition of $\rho'_{t+\Delta t}=\operatorname{argmin}_{\rho\in SE(3)}\lVert\rho\cdot\bm{R}_{\mathcal{B}}-\bm{R}'_{t+\Delta t}\rVert$. This completes the proof.
\end{proof}

\newpage
\section{Experiment details}
\label{appx:exp-details}
\subsection{OpenMM configurations}
\label{appx:openmm}
For real-world molecules, we use the VVVR integrator \citep{sivak2014time} with the step size $\Delta t=1~\mathrm{fs}$ and the friction term $\gamma=1~\mathrm{ps}^{-1}$. In the TPS-DPS training algorithm, we simulate MD with $T=1$ps for Alanine Dipeptide and $T=5$ps for the fast-folding proteins. We start simulations at a temperature $\lambda_{\text{start}}=600$K, and end at a temperature $\lambda_{\text{end}}=300$K for Alanine Dipeptide and Chignolin and $\lambda_{\text{end}}=400$K for the others. We use the \texttt{amber99sbildn} \citep{lindorff2010improved} force field for Alanine Dipeptide in vacuum and the \texttt{ff14SBonlysc} \citep{maier2015ff14sb} force field for the fast-folding proteins with the \texttt{gbn2} implicit solvation model \citep{nguyen2013improved}.

\subsection{Model configurations}
\label{appx:model}

We use a 3-layer MLP for the double-well system, and a 6-layer MLP for real-world molecules with ReLU activation functions for neural bias force, potential, and scale. To constrain the output of the neural bias scale parameterization to a positive value, we apply Softplus to the MLP output. As an input to the neural network, we concatenate the current position $(\bm{R}_{t})_{i}$ of the $i$-th atom with its distance to the target position $d_{i} = \lVert (\tilde{\bm{R}}_{\mathcal{B}})_{i}- (\bm{R}_{t})_{i} \rVert_{2}$. For real-world molecules, we apply the Kabsch algorithm \citep{kabsch1976solution} for heavy atoms to align $\bm{R}_{\mathcal{B}}$ with $\bm{R}_{t}$. We update the parameters of the neural network with a learning rate of $0.0001$, while the scalar parameter $w$ is updated with a learning rate of $0.001$. We clip the gradient norm with 1 to prevent loss from exploding. we train $J=1000$ times per rollout. We report other model configurations in \cref{tab:model_param}. For PIPS, we use the model configurations reported by \citet{holdijk2024stochastic}. For CVs of SMD, we use backbone dihedral angles $(\phi,\psi)$ for Alanine Dipeptide and RMSD for fast-folding proteins.

\begin{table}[h]
    \centering
    \caption{\textbf{Model configurations of TPS-DPS.}}
    \label{tab:model_param}
    \resizebox{\textwidth}{!}{%
    \begin{tabular}{l|ccccc}
    \toprule
    System & \# of rollouts ($I$) & \# of samples ($M$) & Batch size ($K$) & Buffer size & Relaxation ($\sigma$) \\
    \midrule
    Double-well & 20 & 512 & 512 & 10000 & 3 \\
    Alanine Dipeptide & 1000 & 16 & 16 & 1000 & 0.1 \\
    Chignolin & 100 & 16 & 4 & 200 & 0.5 \\
    Trpcage & 100 & 16 & 4 & 100 & 0.5 \\
    BBA & 100 & 16 & 4 & 100 & 0.5 \\
    BBL & 100 & 16 & 2 & 100 & 0.5 \\
    \bottomrule
    \end{tabular}%
    }
\end{table}

\subsection{Evaluation metrics}
\label{appx:metric}
\textbf{Root mean square distance (RMSD).} 
We use the Kabsch algorithm \citep{kabsch1976solution} for heavy atoms to align the final position with the target position $\bm{R}_{\mathcal{B}}$, using the optimal (proper) rotation and translation to superimpose two heavy atom positions. We calculate RMSD between heavy atoms of the final position and the target position $\bm{R}_{\mathcal{B}}$.

\textbf{Target hit percentage (THP). } THP measures the success rate of paths arriving at the target meta-stable state $\mathcal{B}$ in a binary manner. Formally, given the final positions $\{R^{(i)}\}^{M}_{i=1}$ of $M$ paths, THP is defined as follows:
\begin{equation}
    \operatorname{THP} = \frac{\vert \{i: \bm{R}^{(i)}\in\mathcal{B}\}\vert}{M} 
\end{equation}

\textbf{Energy of transition state (ETS).} ETS measures the ability of the method to find probable transition states when crossing the energy barrier. ETS refers to the maximum potential energy among states in a transition path. Formally, given a transition path $\bm{x}_{0:L}$ of length 
$L$ that reaches the target meta-stable state i.e., $\bm{R}_{L}\in \mathcal{B}$, ETS is defined as follows:
\begin{equation}
    \operatorname{ETS}(\bm{x}_{0:L}) = \max_{\ell \in [0, L]} U(\bm{R}_{\ell})
\end{equation}

\subsection{System details}
\label{appx:double-well}

\textbf{Systhetic double-well system.} Double-well system follows the overdamped Langevin dynamics defined as follows:
\begin{equation}
    \mathrm{d}\bm{R}_{t} = \frac{-\nabla U(\bm{R}_{t})}{\bm{m}}\mathrm{d}t+\sqrt{\frac{2\gamma k_{B}\lambda}{\bm{m}}}\mathrm{d}\bm{W}_{t}\,.
\end{equation}
For simplicity, we let $\bm{R}=(x,y)\in \mathbb{R}^{2}$, $\bm{m}=I, \gamma=1, \Delta=0.01, T=10$, and $\lambda=1200$K. To evaluate the ability to find diverse transition paths, we consider the following double-well potential \citep{hua2024accelerated}:
\begin{equation}
    U(x, y)=\frac{1}{6}\left(4(1-x^{2}-y^{2})^{2}+2(x^{2}-2)^{2}+[(x+y)^{2}-1]^{2}+[(x-y)^{2}-1]^{2}-2\right)\,.
\end{equation}
This potential has global minima and two saddle points, having two meta-stable states and two reaction channels.

\textbf{Time-lagged independent components (TICA).} 
\label{appx:tica}
To extract the collective variable (CV) for the four fast-folding proteins, we consider components of time-lagged independent component analysis \citep[TICA;][]{perez2013identification}. We run 1$\mu$s unbiased MD simulations with 2fs step size and record states per 2ps to collect MD trajectories, using the OpenMM library with the same configuration as in \cref{appx:openmm}. For the top two TICA components, we use PyEMMA library \citep{scherer2015pyemma} with a time lag $\tau=500$ps for Chignolin $\tau=200$ps for the others.

\textbf{Reproducibility.}
We describe experiment details in \cref{appx:exp-details}, including detailed simulation configuration and hyper-parameters. In the anonymous link, we provide the code for TPS-DPS.

\newpage
\section{Computational cost}
\label{appx:computational_cost}
In this section, we analyze the time complexity of TPS-DPS and provide the number of energy evaluations and runtime in training and inference time for real molecules.

The training and inference time complexity of TPS-DPS is $O(NMLJ)$ and $O(NML)$, respectively, where $N$ is the number of atoms, $M$ is the number of samples, $L$ is the number of MD steps, and $J$ is the number of rollouts. To be specific, training consists of biased MD simulations with $O(NML)$ time complexity. Given $M$, the total complexity of one biased MD step is $O(N)$. 

To justify it, we note that the biased MD step consists of three stages: (1) calculating bias force, (2) calculating OpenMM force field, and (3) integrating the biased MD. Given the number of layers, and hidden units, MLP for bias force requires $O(N)$ and the Kabsch algorithm for equivariance requires $O(N)$. Calculating force with cut-off and integrating MD with VVVR integrator also requires $O(N)$. 

To measure computational cost, we consider the number of energy evaluations and runtime per rollout in training and inference time. As shown in \cref{tab:cost_comparison}, the inference cost of TPS-DPS is proportional to UMD and SMD which have time complexity $O(NML)$. TPS-DPS requires less energy evaluations than PIPS in training since TPS-DPS finds transition paths faster than PIPS by utilizing the replay buffer and simulated annealing.
\begin{table}[h]
    \centering
    \scriptsize
    \caption{
        \textbf{Cost comparison.}
        EET and EEI refer to the number of energy evaluations by the OpenMM library in training and inference, respectively. RT and RI denote runtime (second) per rollout in training and inference on a single RTX A5000 GPU. MD simulations are conducted with $T=1$ps for Alanine Dipeptide, $T=5$ps for other systems, and $\Delta t=1$fs. The number of samples for each training rollout and inference is 16 and 64, respectively. Other configurations of TPS-DPS are provided in \cref{appx:model} and those of PIPS can be found in \citep{holdijk2024stochastic}
    }
    \label{tab:cost_comparison}
    \begin{tabular}{llcccc}
        \toprule
        Molecule & Method & EET ($\downarrow$) & EEI ($\downarrow$) & RT ($\downarrow$) & RI ($\downarrow$) \\
        \midrule
        \multirow{7}{*}{Alanine Dipeptide} 
        & UMD & - & 64K & - & 29.49 \\
        & SMD & - & 64K & - & 47.45 \\
        & PIPS (F) & 240M & 64K & 44.22 & 71.05 \\
        & PIPS (P) & 240M & 64K & 50.54 & 75.67 \\
        & TPS-DPS (F, Ours) & 16M & 64K & 24.93 & 70.50 \\
        & TPS-DPS (P, Ours) & 16M & 64K & 27.25 & 78.83 \\
        & TPS-DPS (S, Ours) & 16M & 64K & 25.11 & 73.04 \\
        \midrule
        \multirow{7}{*}{Chignolin}
        & UMD & - & 320K & - & 224.23 \\
        & SMD & - & 320K & - & 380.37 \\
        & PIPS (F) & 40M & 320K & 277.29 & 565.58 \\
        & PIPS (P) & 40M & 320K & 317.08 & 622.87 \\
        & TPS-DPS (F, Ours) & 8M & 320K & 209.29 & 562.90 \\
        & TPS-DPS (P, Ours) & 8M & 320K & 224.36 & 623.63 \\
        & TPS-DPS (S, Ours) & 8M & 320K & 215.18 & 581.26 \\
        \midrule
        \multirow{7}{*}{Trp-Cage}
        & UMD & - & 320K & - & 258.29 \\
        & SMD & - & 320K & - & 323.52 \\
        & PIPS (F) & 40M & 320K & 360.55 & 652.59 \\
        & PIPS (P) & 40M & 320K & 417.93 & 718.05 \\
        & TPS-DPS (F, Ours) & 8M & 320K & 289.10 & 655.22 \\
        & TPS-DPS (P, Ours) & 8M & 320K & 301.76 & 699.44 \\
        & TPS-DPS (S, Ours) & 8M & 320K & 293.51 & 673.00 \\
        \midrule
        \multirow{7}{*}{BBA}
        & UMD & - & 320K & - & 395.12 \\
        & SMD & - & 320K & - & 551.04 \\
        & PIPS (F) & 40M & 320K & 506.57 & 1080.62 \\
        & PIPS (P) & 40M & 320K & 574.07 & 1117.74 \\
        & TPS-DPS (F, Ours) & 8M & 320K & 422.23 & 1042.81 \\
        & TPS-DPS (P, Ours) & 8M & 320K & 430.24 & 1091.97 \\
        & TPS-DPS (S, Ours) & 8M & 320K & 426.48 & 1068.68 \\
        \midrule
        \multirow{7}{*}{BBL}
        & UMD & - & 320K & - & 673.55 \\
        & SMD & - & 320K & - & 853.77 \\
        & PIPS (F) & 40M & 320K & 622.22 & 1558.53 \\
        & PIPS (P) & 40M & 320K & 660.02 & 1612.78 \\
        & TPS-DPS (F, Ours) & 8M & 320K & 560.95 & 1520.05 \\
        & TPS-DPS (P, Ours) & 8M & 320K & 572.77 & 1607.62 \\
        & TPS-DPS (S, Ours) & 8M & 320K & 563.45 & 1553.89 \\
        \bottomrule
    \end{tabular}
\end{table}

\newpage
\section{Visualization of sampled paths of fast folding proteins}
\label{appx:fast-folding} 
\begin{figure}[h]
    \centering
    \begin{subfigure}[t]{.16\textwidth}
        \centering
        \includegraphics[width=\linewidth,height=\linewidth]{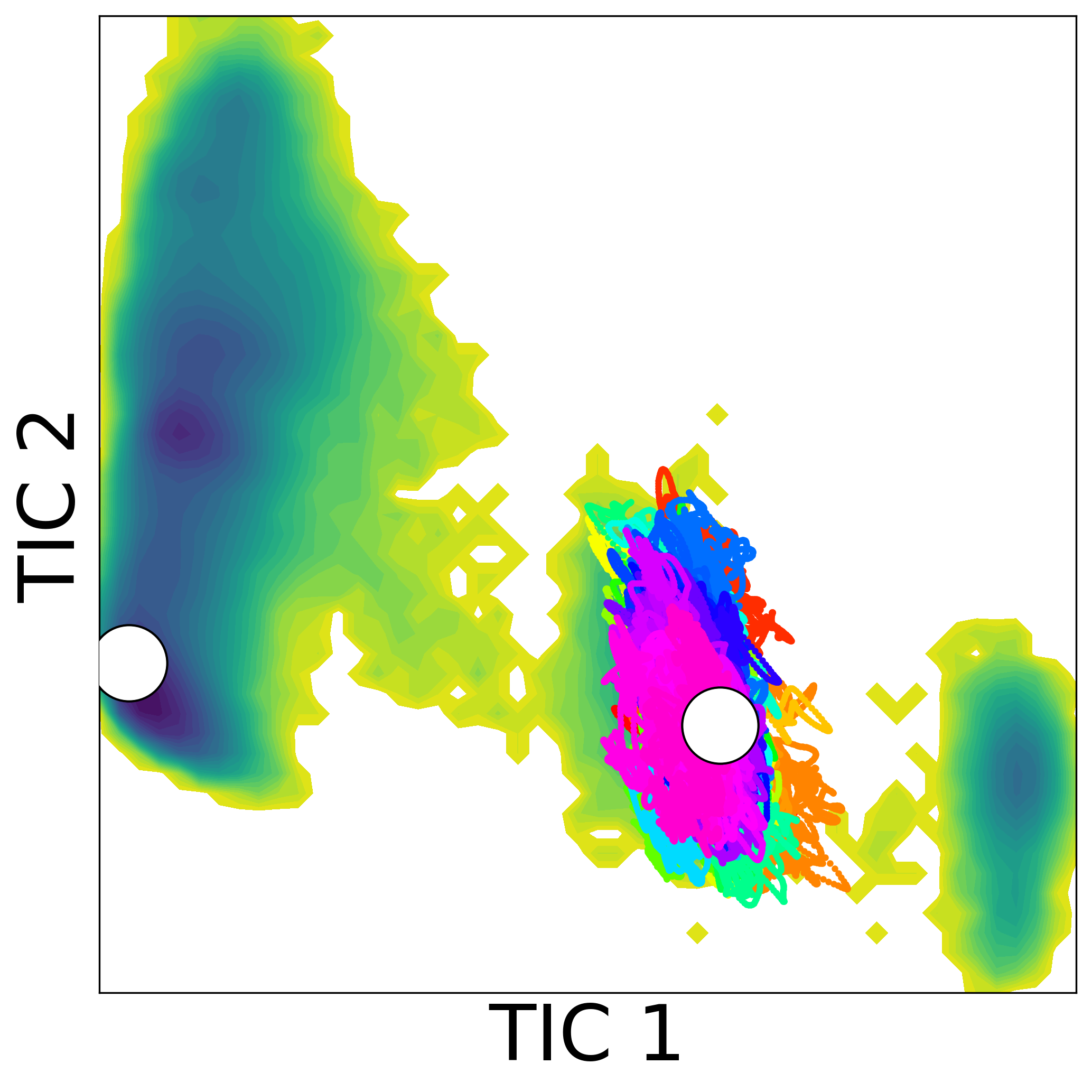}
    \end{subfigure}
    \hfill
    \begin{subfigure}[t]{.16\textwidth}
        \centering
        \includegraphics[width=\linewidth,height=\linewidth]{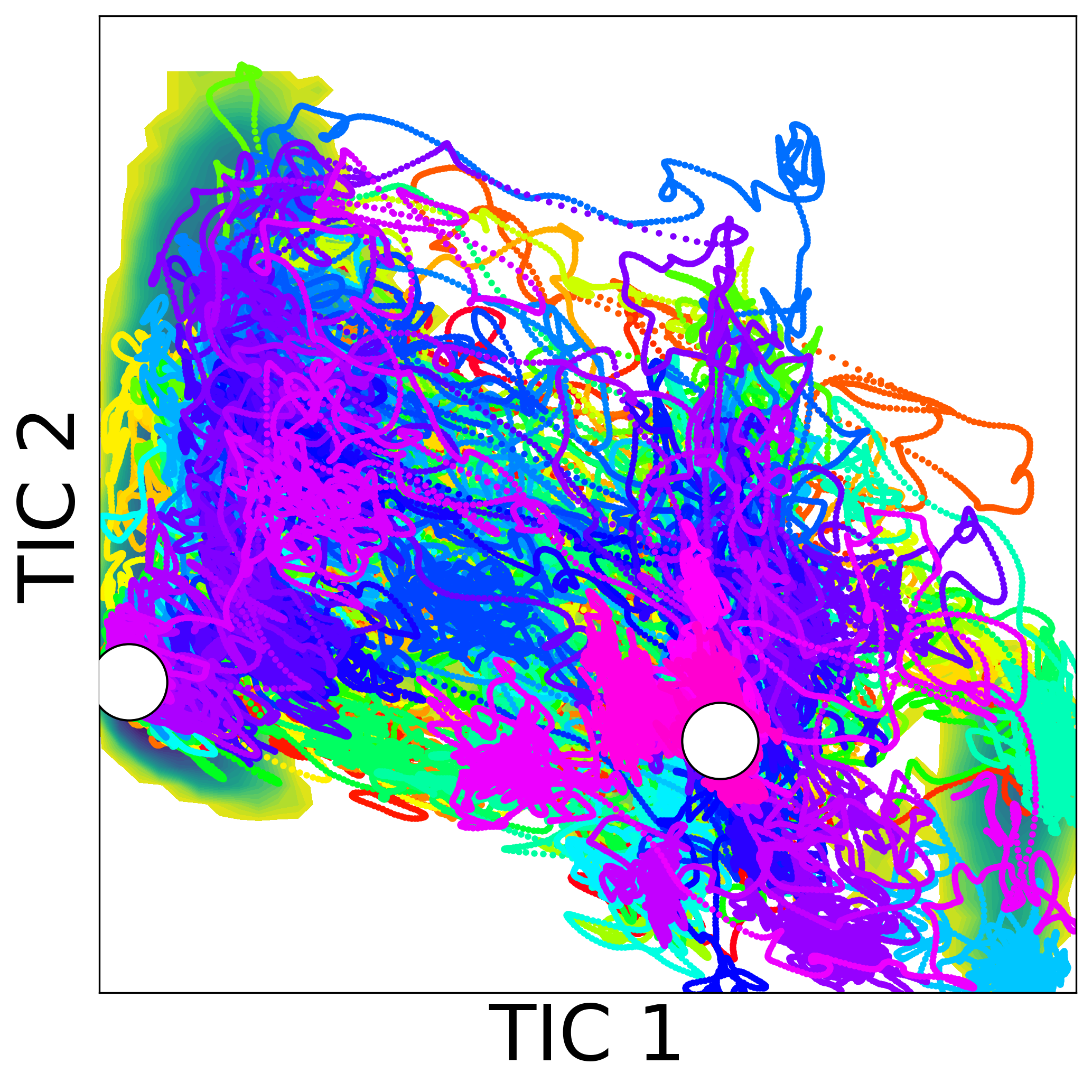}
    \end{subfigure}
    \hfill
    \begin{subfigure}[t]{.16\textwidth}
        \centering
        \includegraphics[width=\linewidth,height=\linewidth]{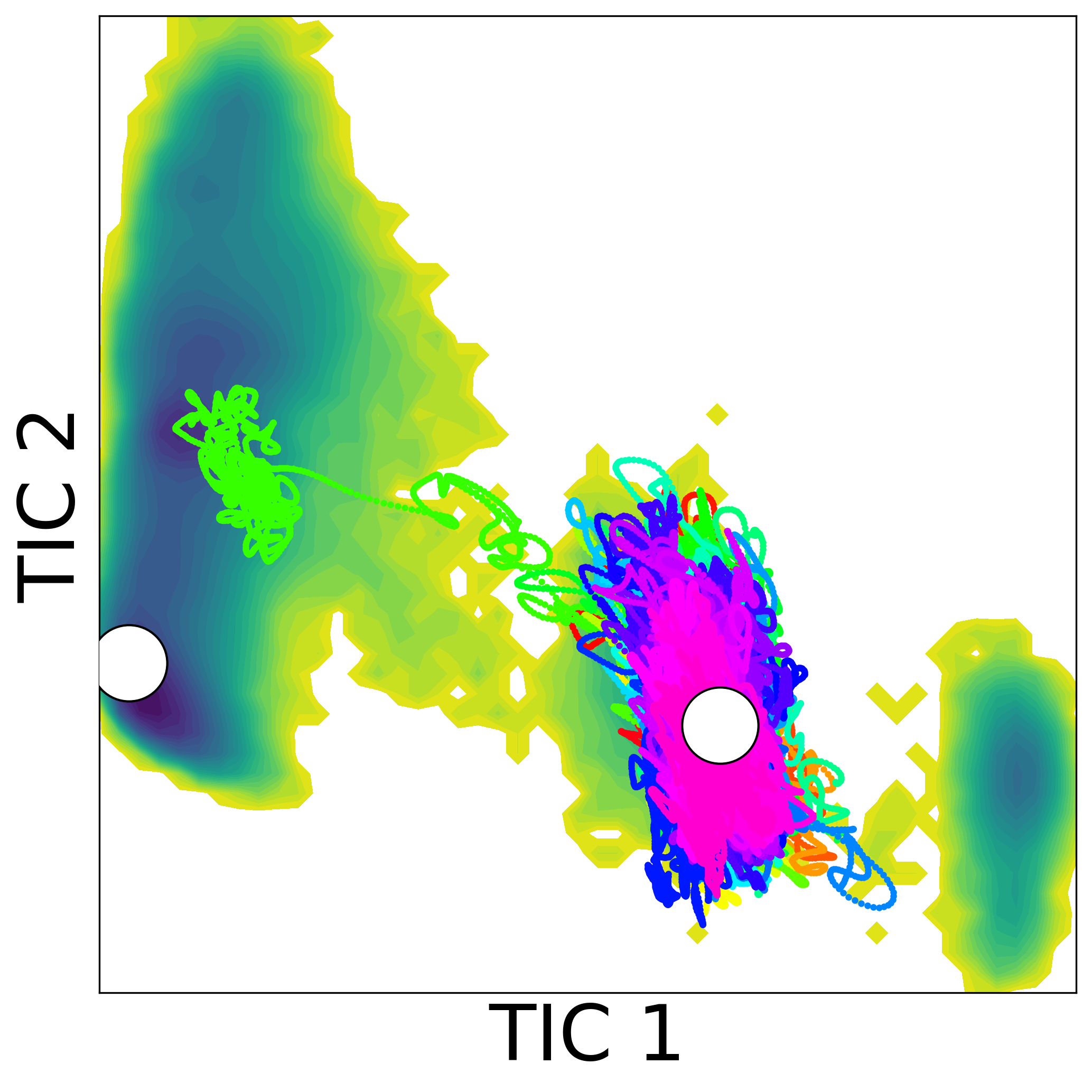}
    \end{subfigure}
    \hfill
    \begin{subfigure}[t]{.16\textwidth}
        \centering
        \includegraphics[width=\linewidth,height=\linewidth]{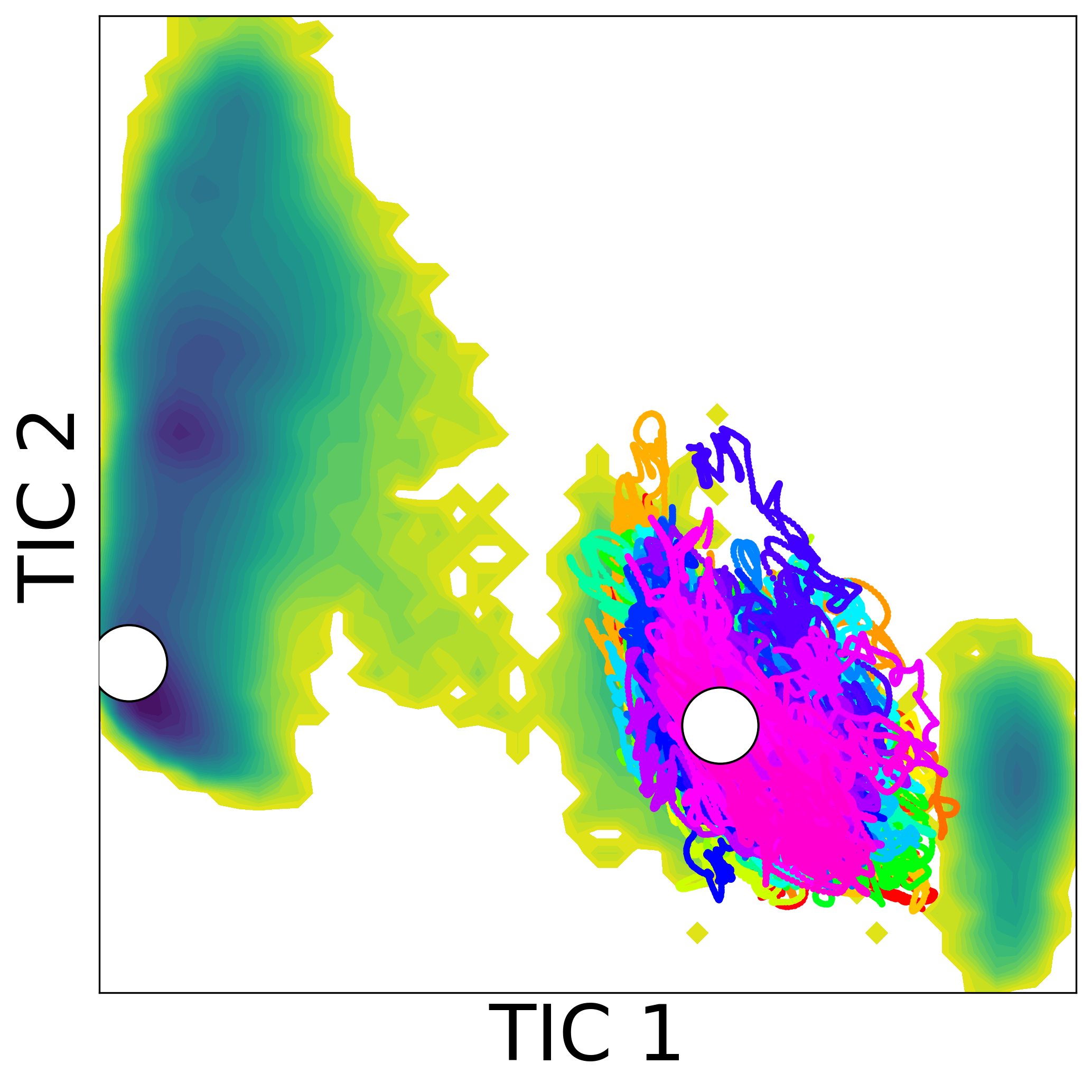}
    \end{subfigure}
    \hfill
    \begin{subfigure}[t]{.16\textwidth}
        \centering
        \includegraphics[width=\linewidth,height=\linewidth]{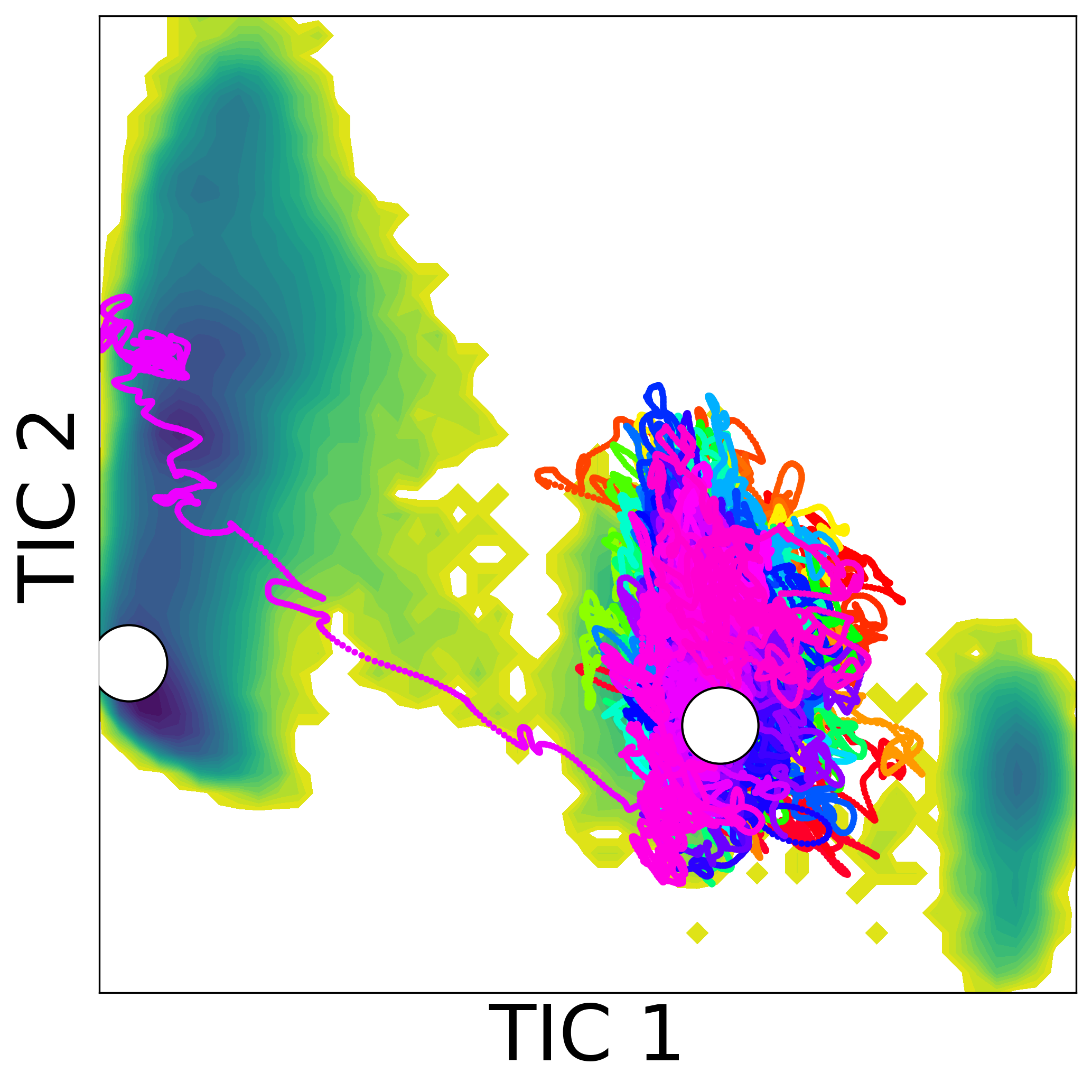}
    \end{subfigure}
    \hfill
    \begin{subfigure}[t]{.16\textwidth}
        \centering
        \includegraphics[width=\linewidth,height=\linewidth]{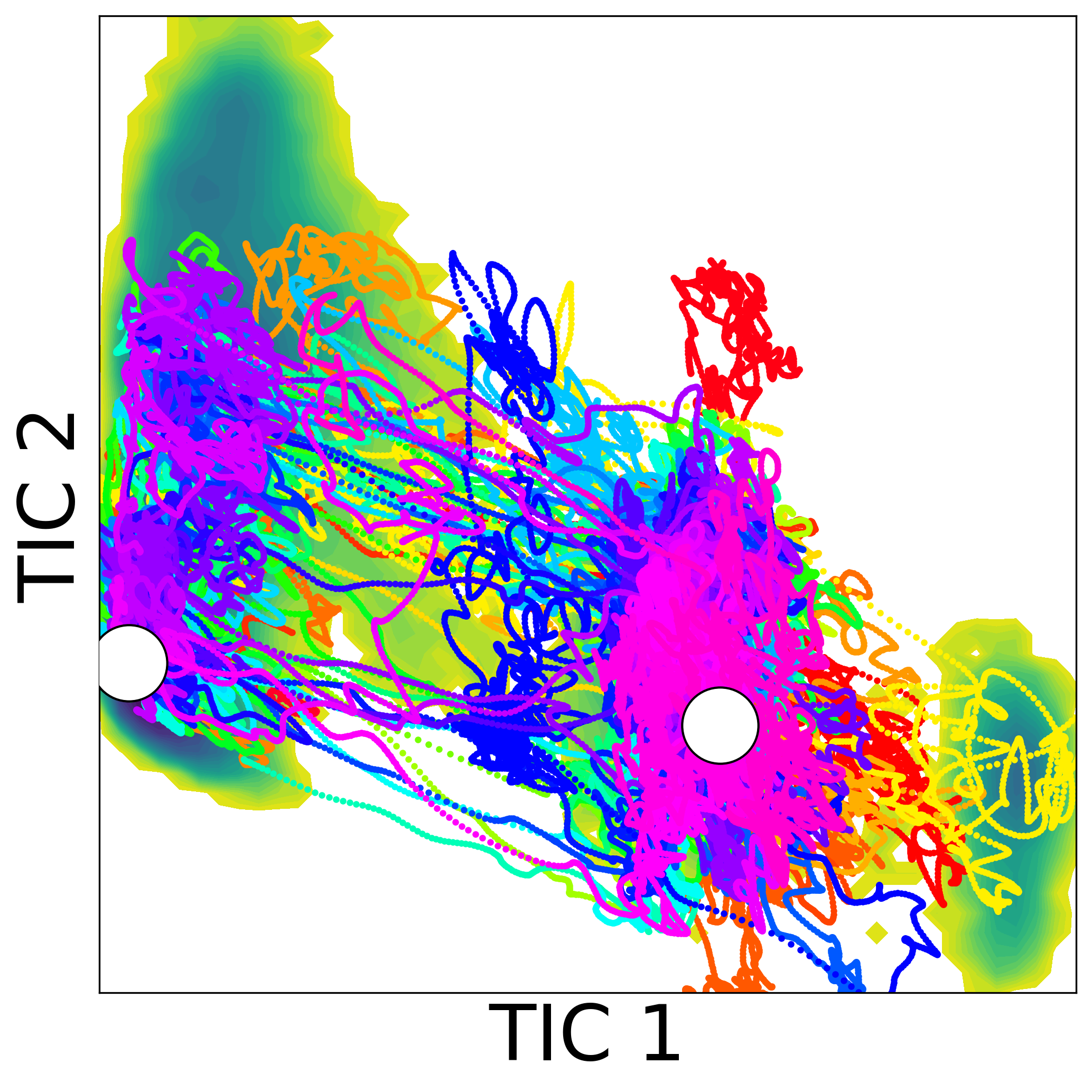}
    \end{subfigure}
    \begin{subfigure}[t]{.16\textwidth}
        \centering
        \includegraphics[width=\linewidth,height=\linewidth]{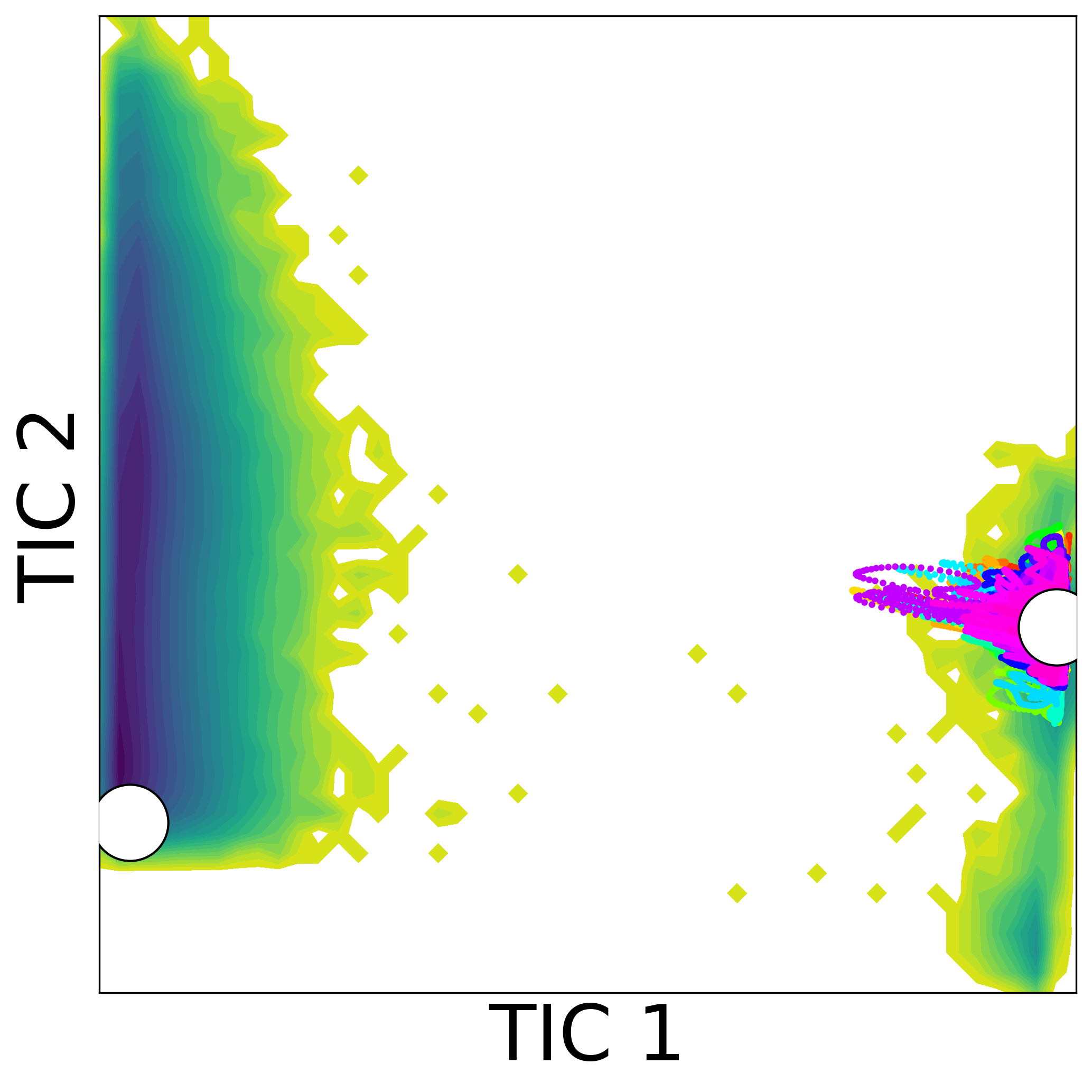}
    \end{subfigure}
    \hfill
    \begin{subfigure}[t]{.16\textwidth}
        \centering
        \includegraphics[width=\linewidth,height=\linewidth]{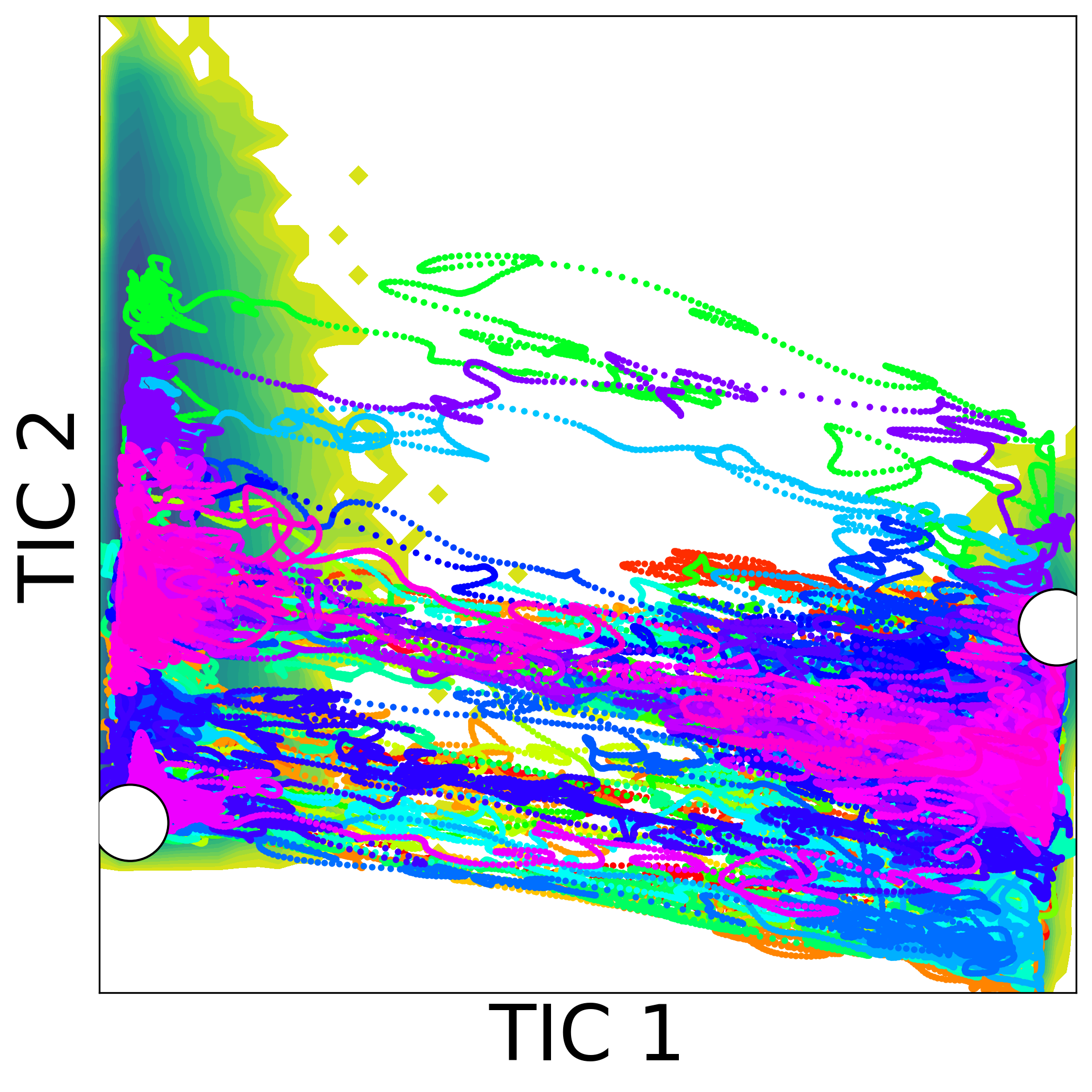}
    \end{subfigure}
    \hfill
    \begin{subfigure}[t]{.16\textwidth}
        \centering
        \includegraphics[width=\linewidth,height=\linewidth]{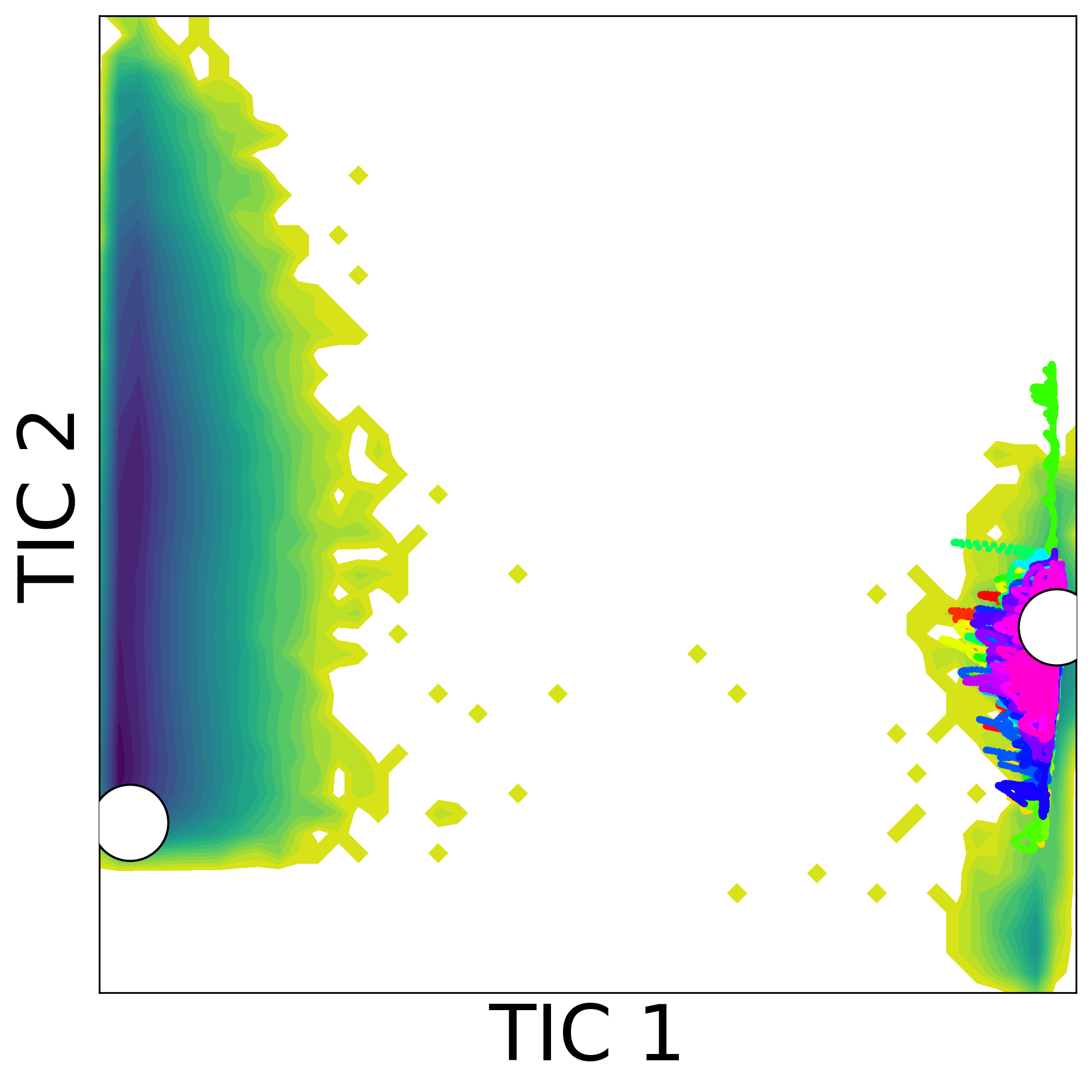}
    \end{subfigure}
    \hfill
    \begin{subfigure}[t]{.16\textwidth}
        \centering
        \includegraphics[width=\linewidth,height=\linewidth]{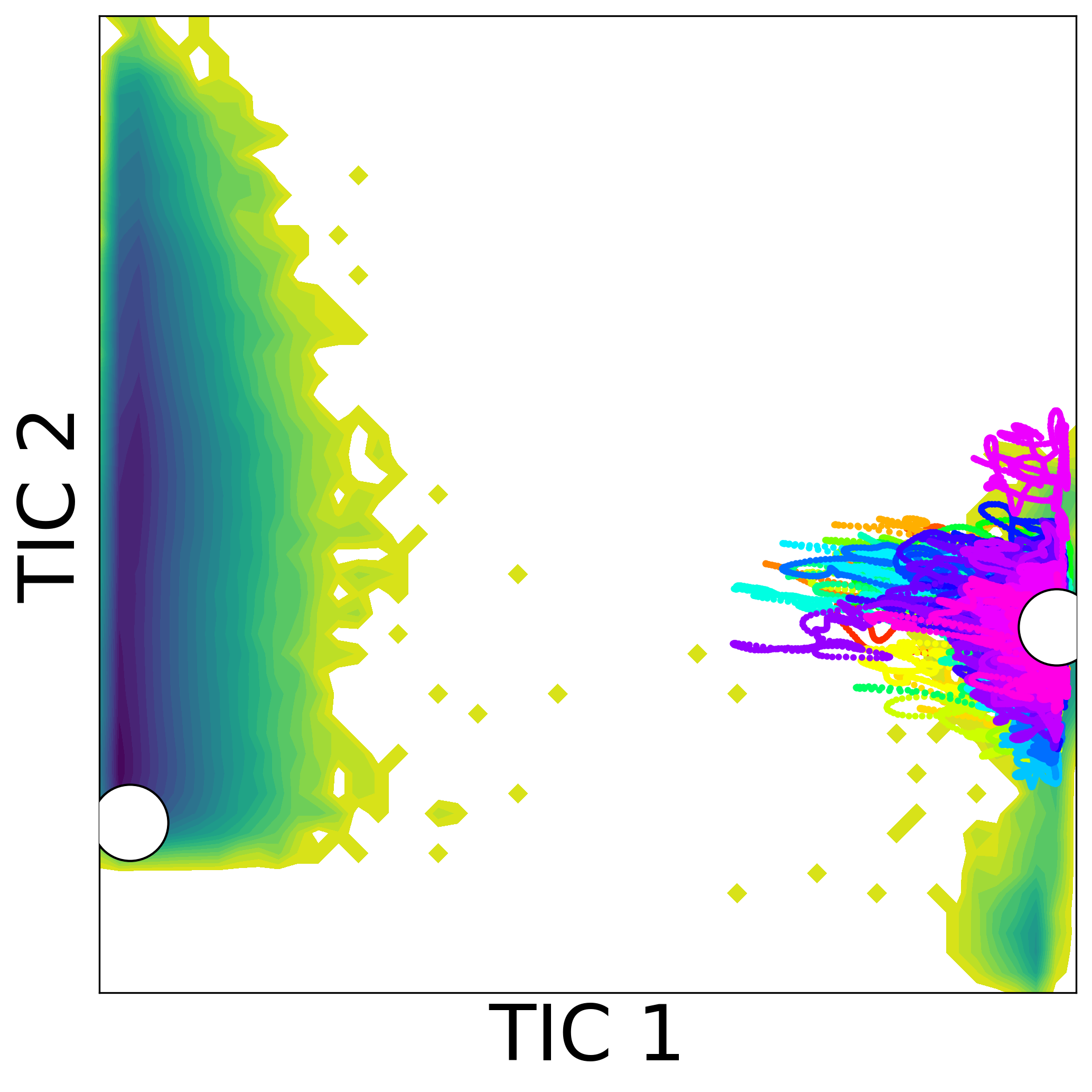}
    \end{subfigure}
    \hfill
    \begin{subfigure}[t]{.16\textwidth}
        \centering
        \includegraphics[width=\linewidth,height=\linewidth]{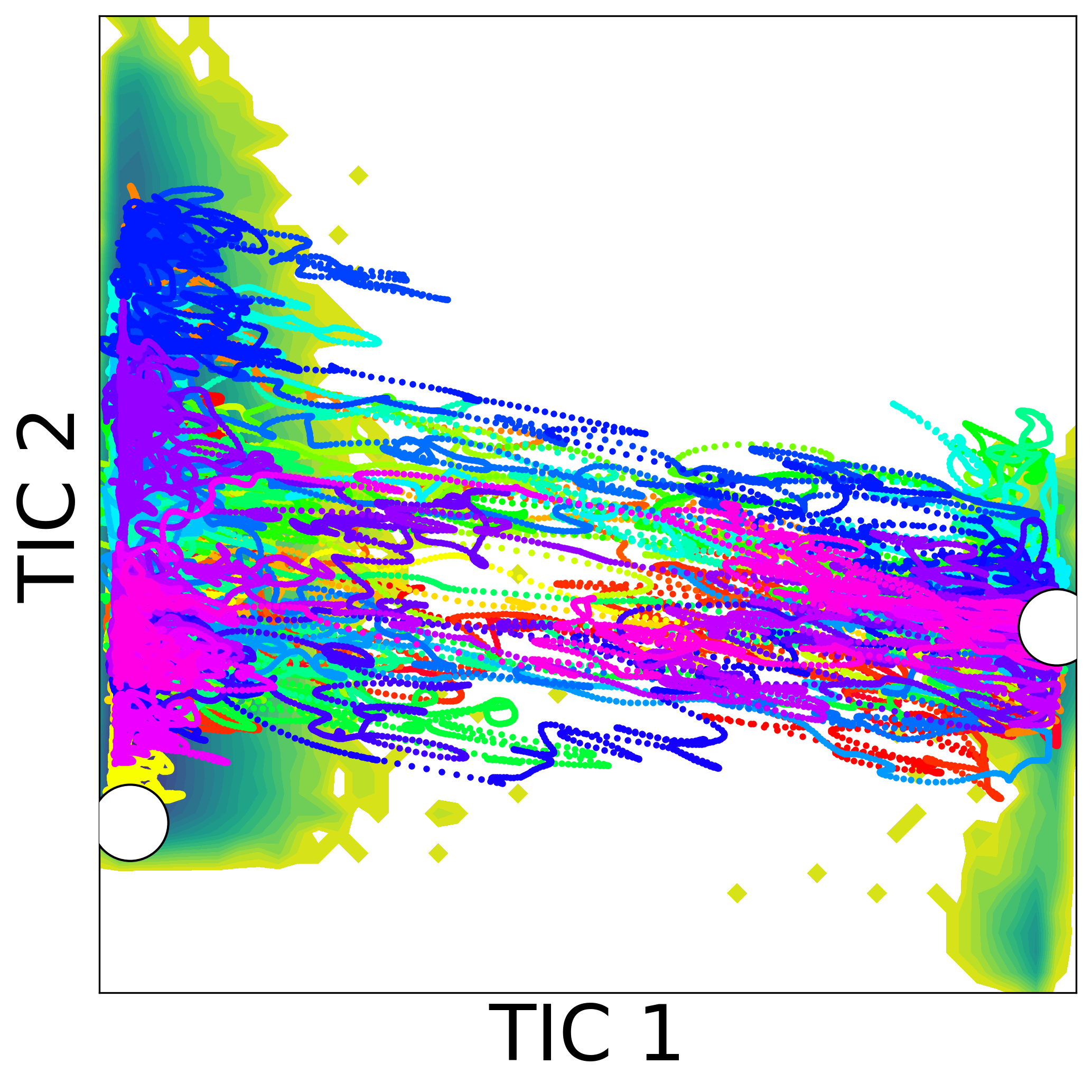}
    \end{subfigure}
    \hfill
    \begin{subfigure}[t]{.16\textwidth}
        \centering
        \includegraphics[width=\linewidth,height=\linewidth]{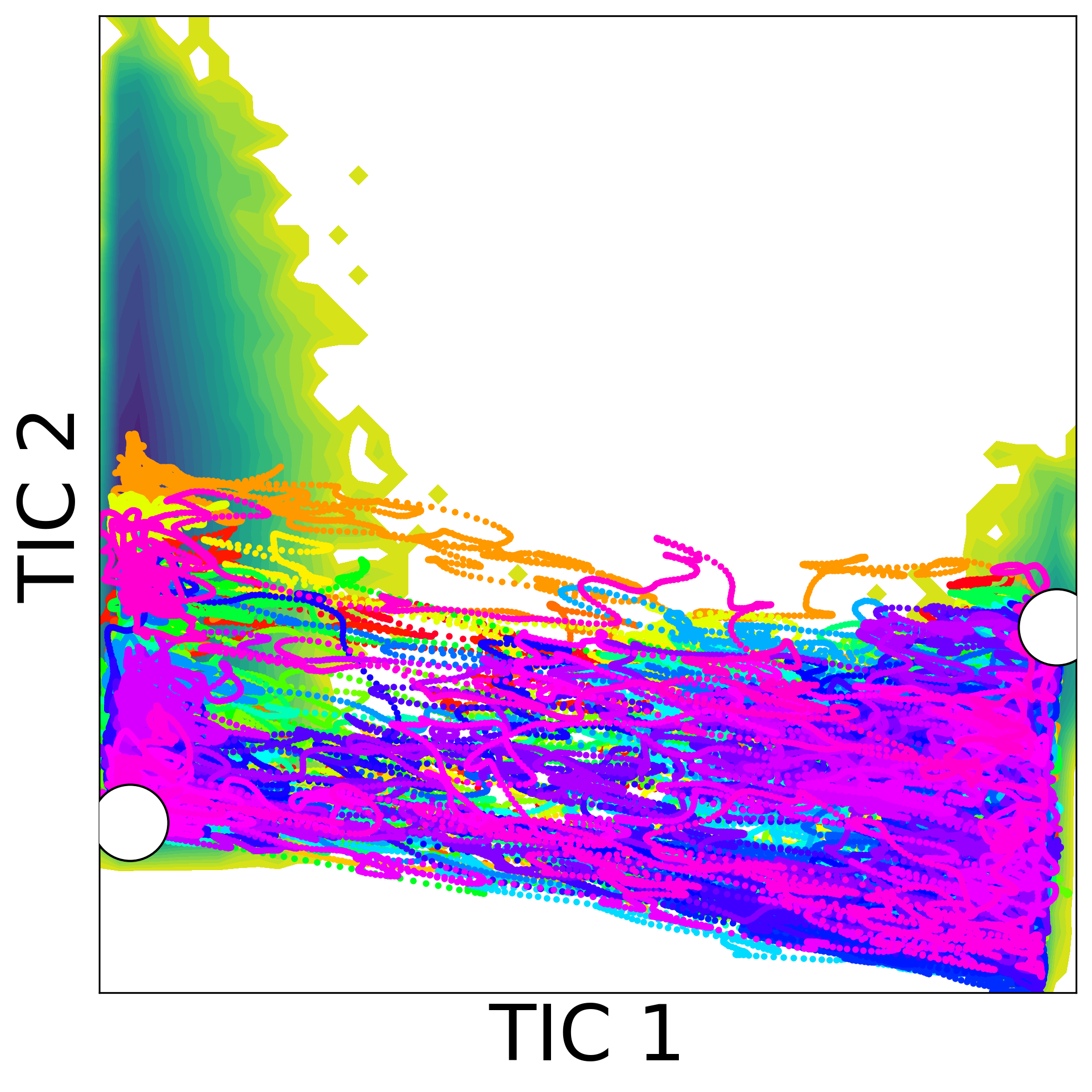}
    \end{subfigure}   
    \\
    \begin{subfigure}[t]{.16\textwidth}
        \centering
        \includegraphics[width=\linewidth,height=\linewidth]{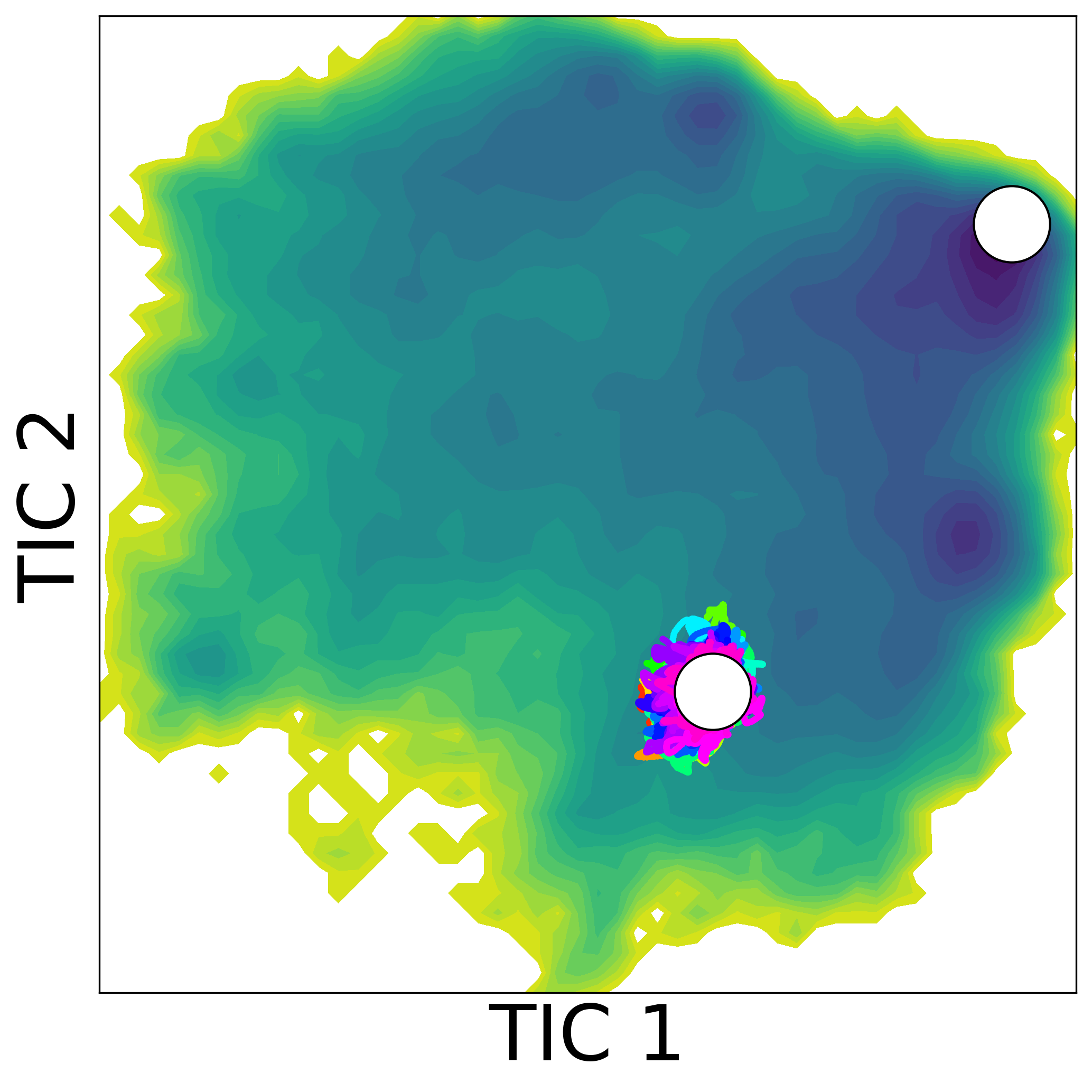}
    \end{subfigure}
    \hfill
    \begin{subfigure}[t]{.16\textwidth}
        \centering
        \includegraphics[width=\linewidth,height=\linewidth]{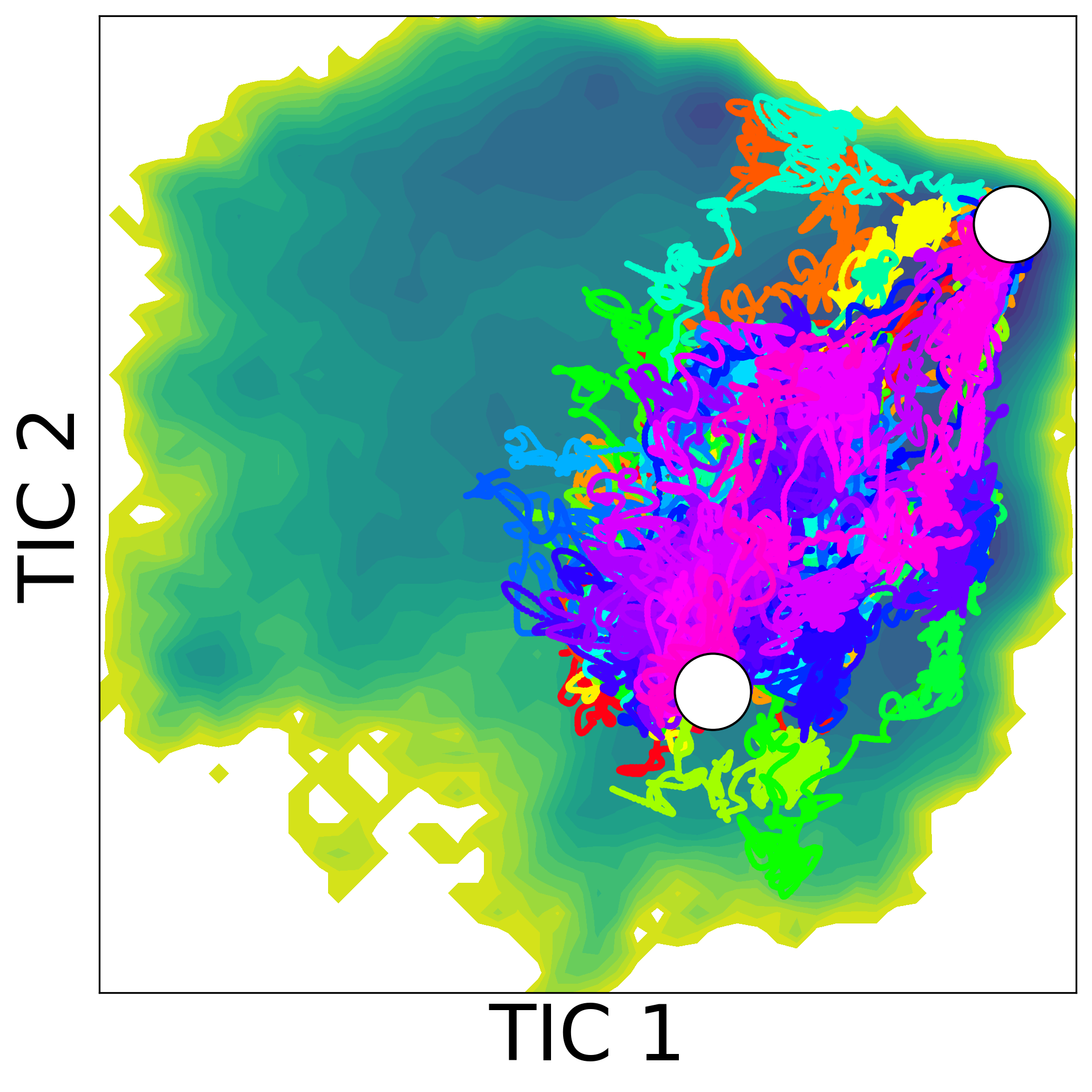}
    \end{subfigure}
    \hfill
    \begin{subfigure}[t]{.16\textwidth}
        \centering
        \includegraphics[width=\linewidth,height=\linewidth]{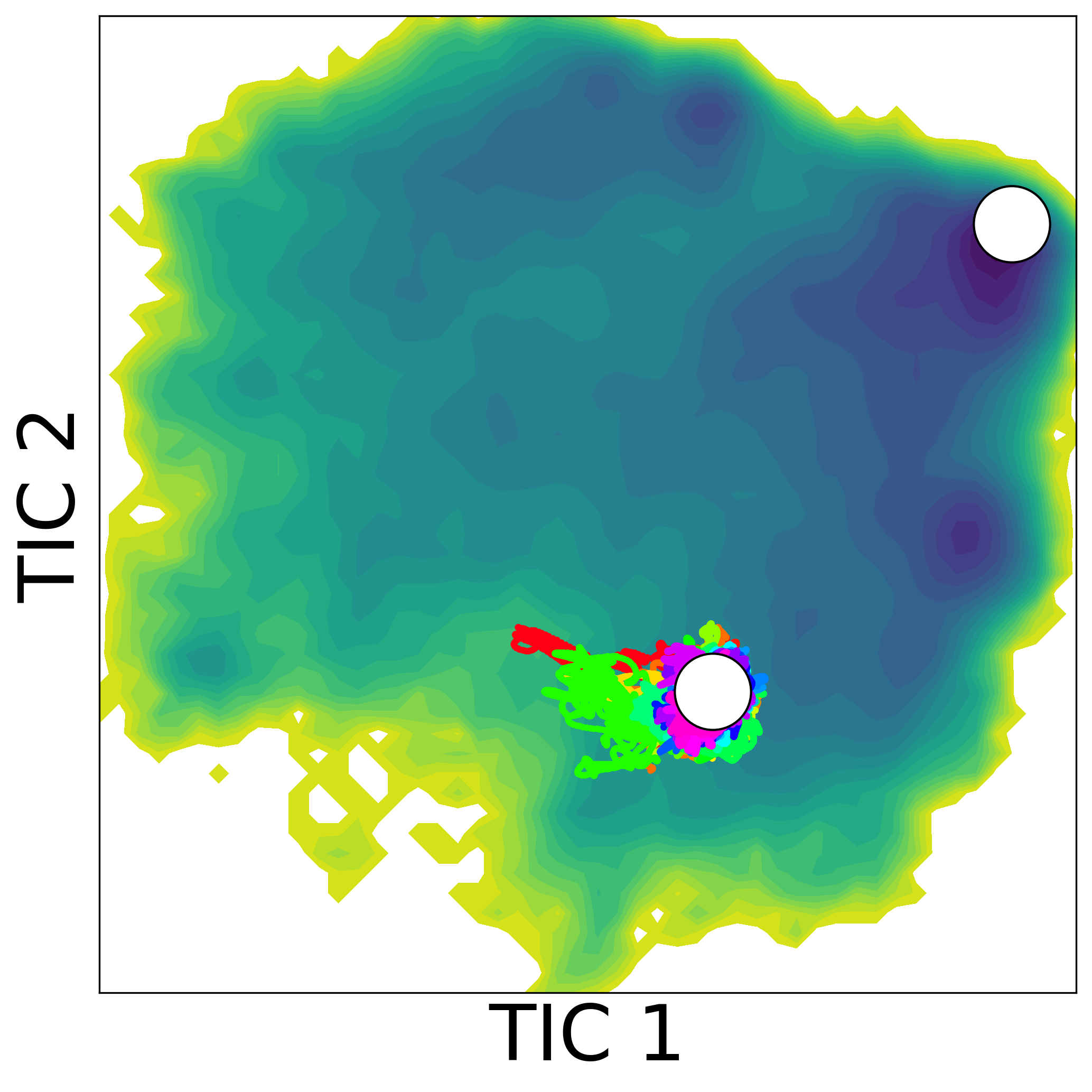}
    \end{subfigure}
    \hfill
    \begin{subfigure}[t]{.16\textwidth}
        \centering
        \includegraphics[width=\linewidth,height=\linewidth]{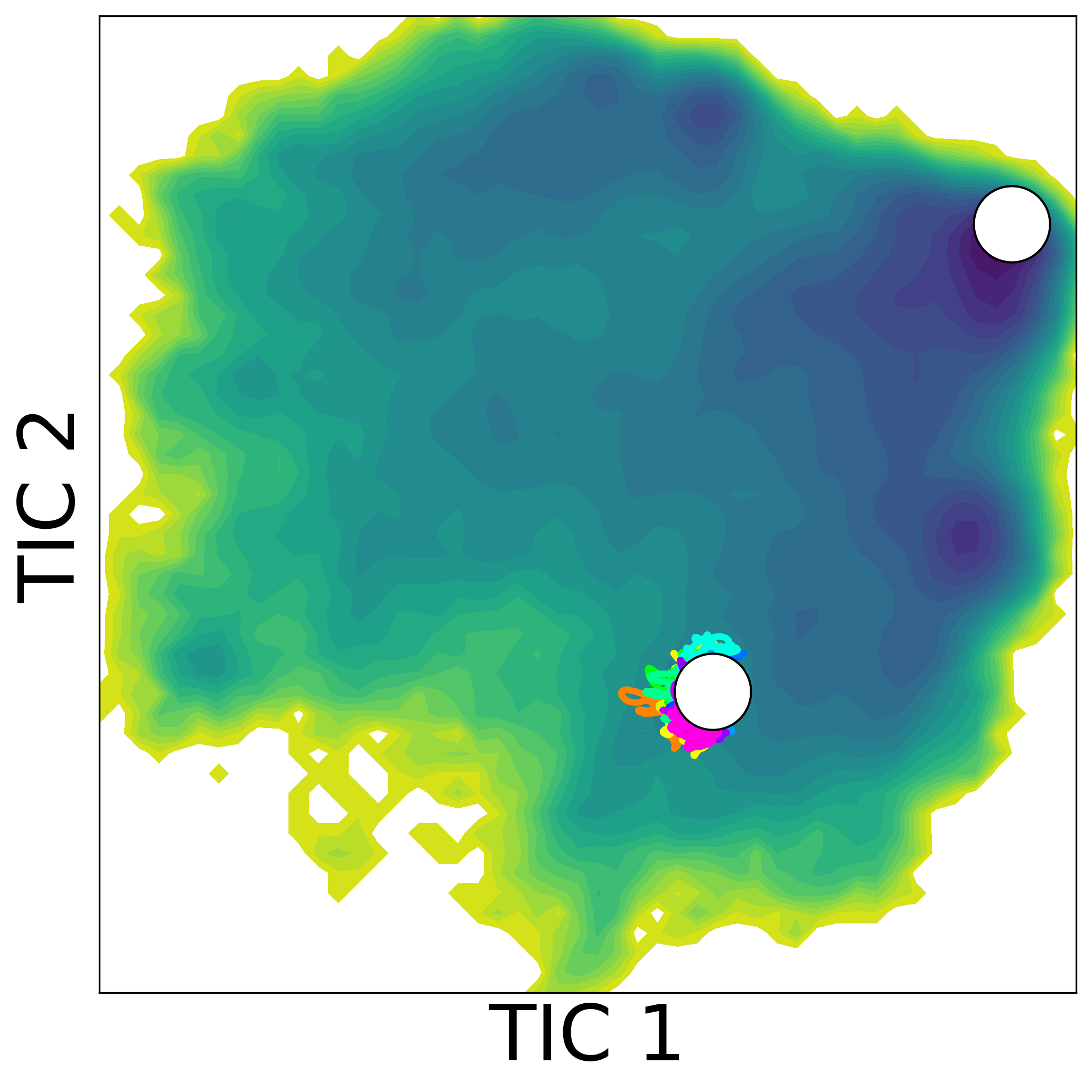}
    \end{subfigure}
    \hfill
    \begin{subfigure}[t]{.16\textwidth}
        \centering
        \includegraphics[width=\linewidth,height=\linewidth]{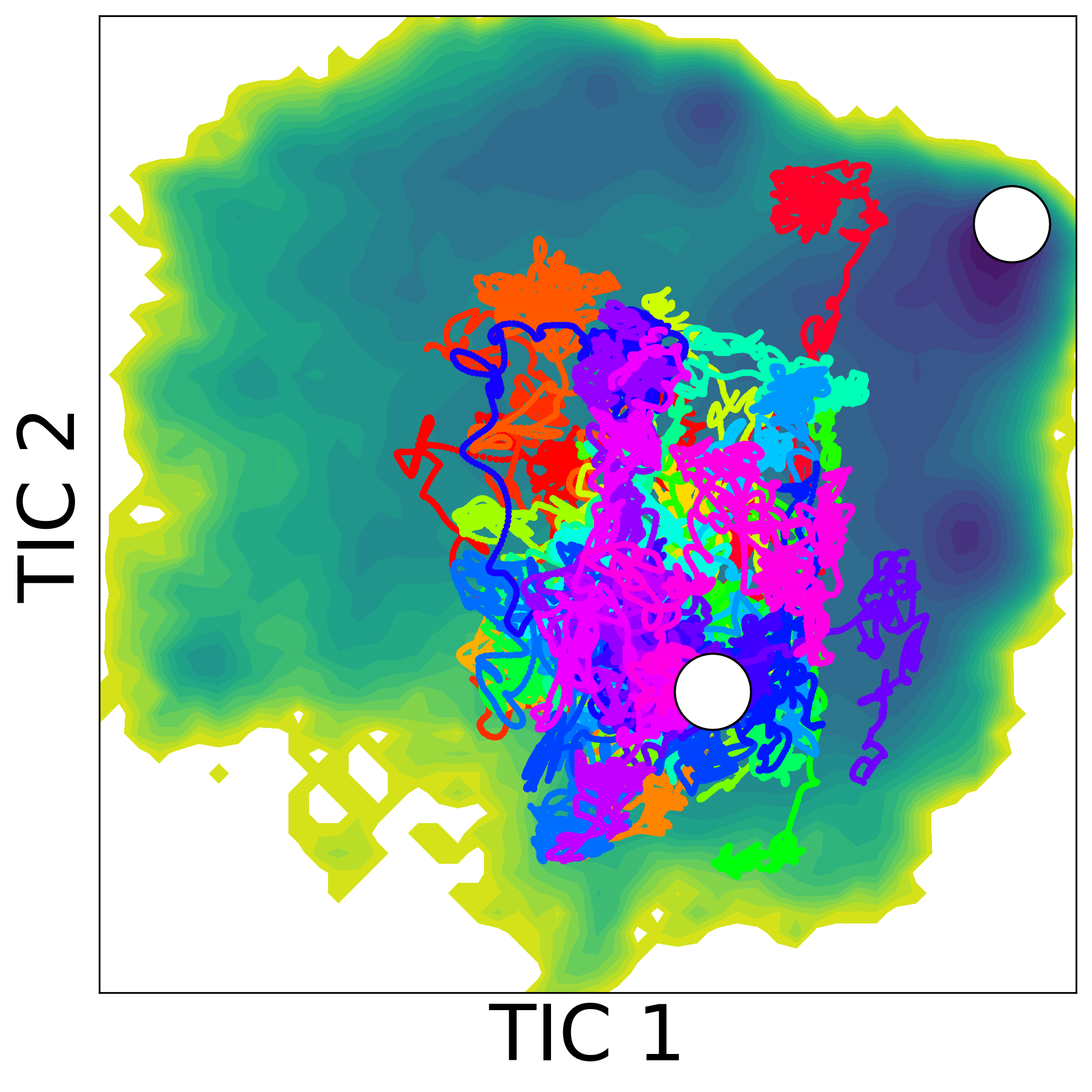}
    \end{subfigure}
    \hfill
    \begin{subfigure}[t]{.16\textwidth}
        \centering
        \includegraphics[width=\linewidth,height=\linewidth]{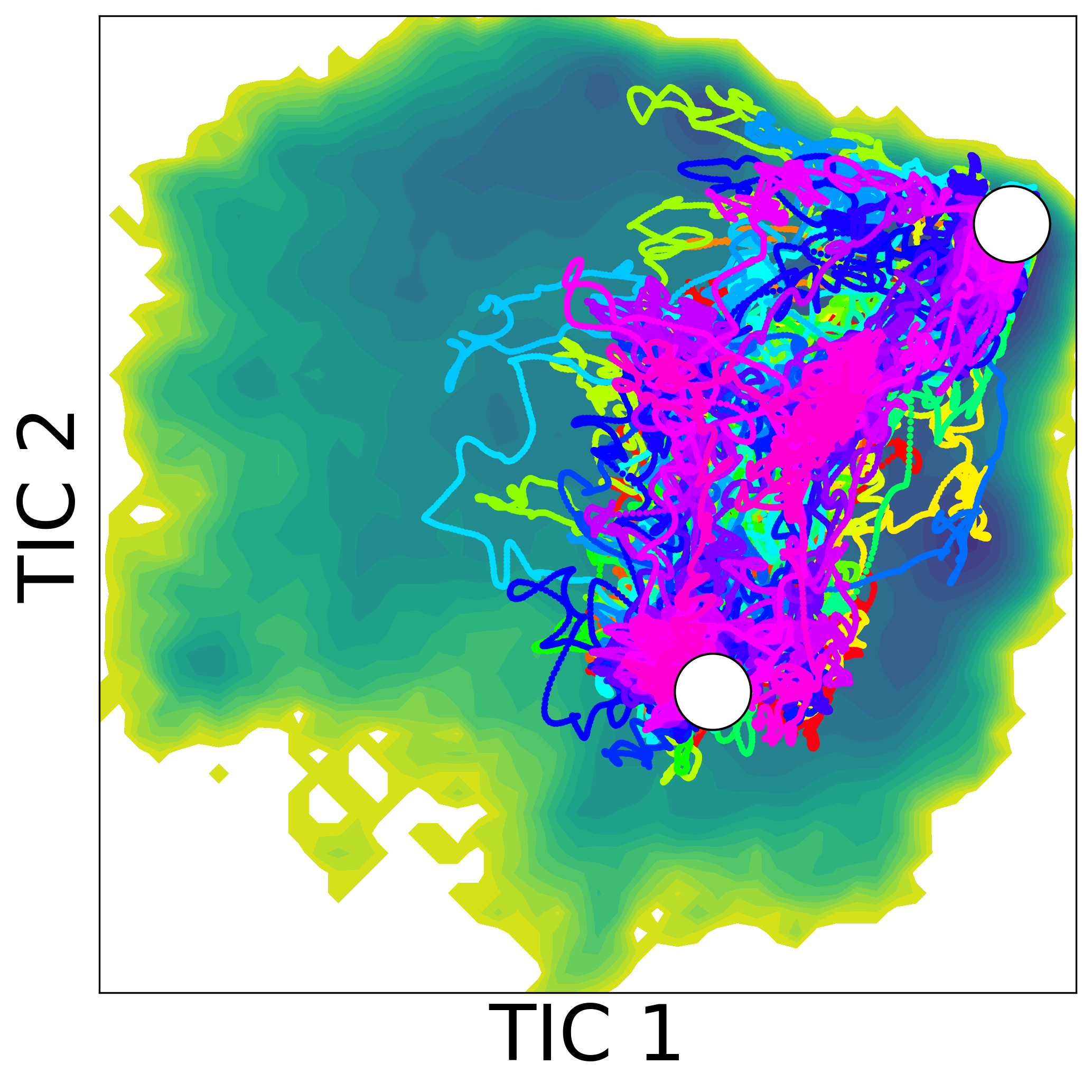}
    \end{subfigure}        
    \\
    \begin{subfigure}[t]{.16\textwidth}
        \centering
        \includegraphics[width=\linewidth,height=\linewidth]{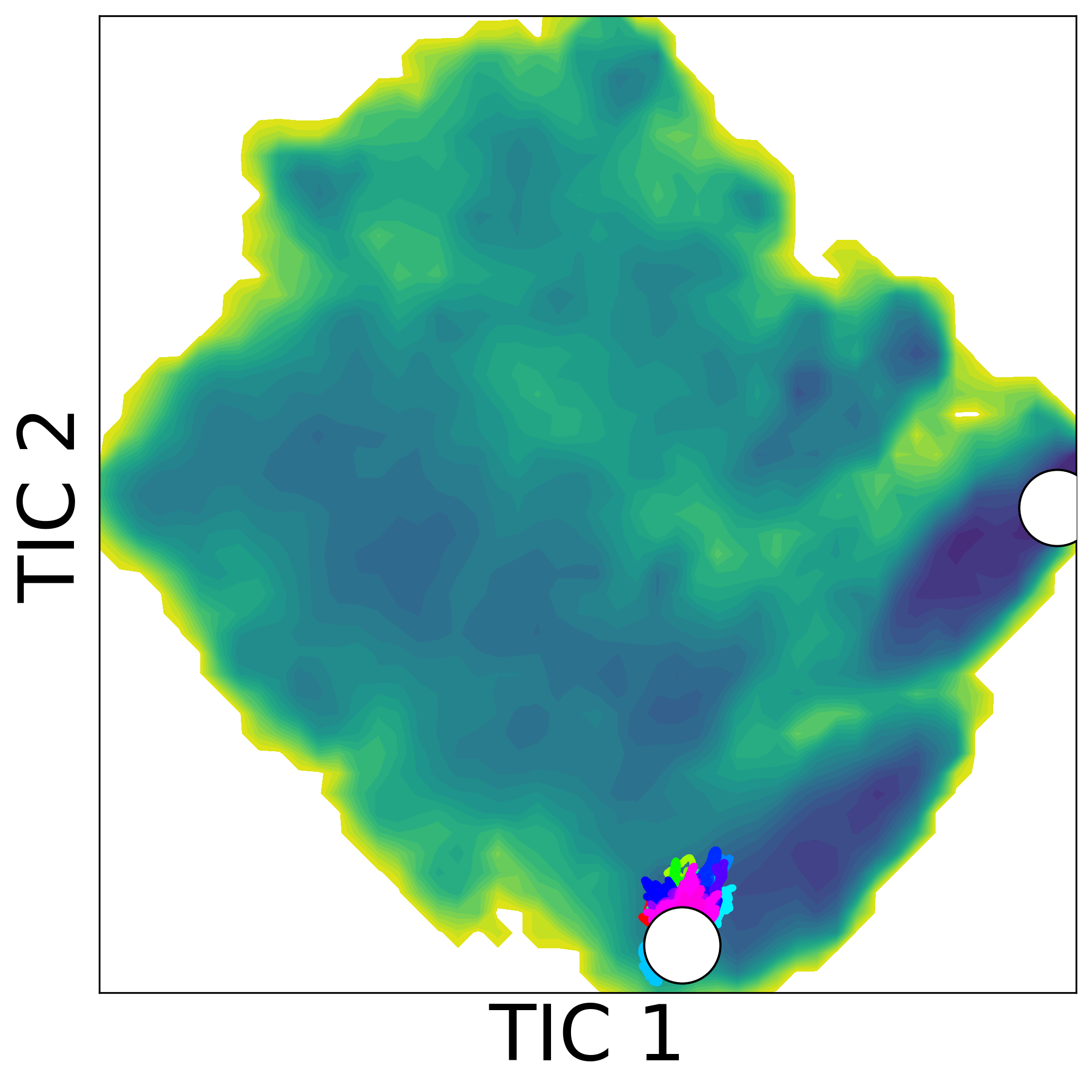}
        \caption{UMD}
    \end{subfigure}
    \hfill
    \begin{subfigure}[t]{.16\textwidth}
        \centering
        \includegraphics[width=\linewidth,height=\linewidth]{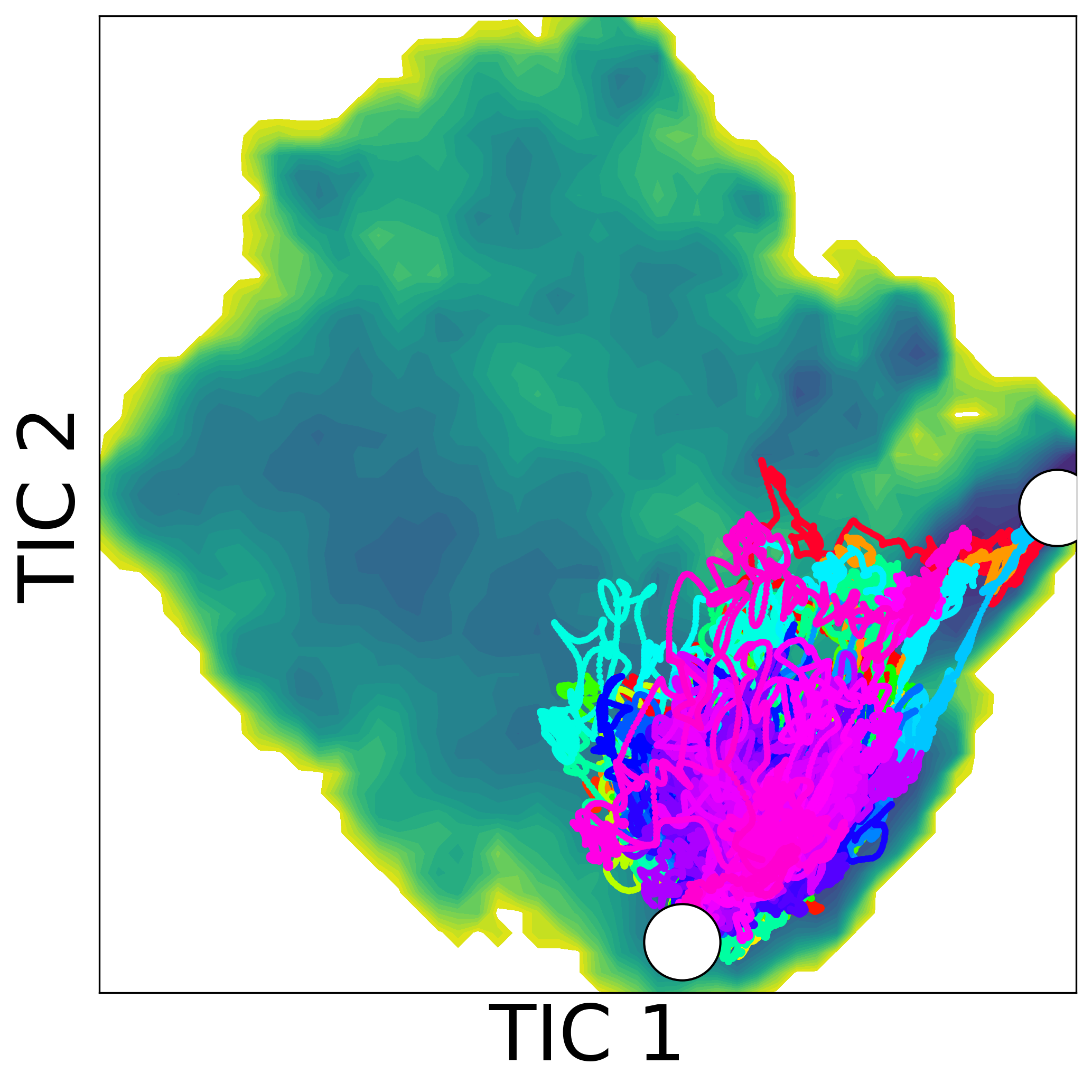}
        \caption{SMD (20k)}
    \end{subfigure}
    \hfill
    \begin{subfigure}[t]{.16\textwidth}
        \centering
        \includegraphics[width=\linewidth,height=\linewidth]{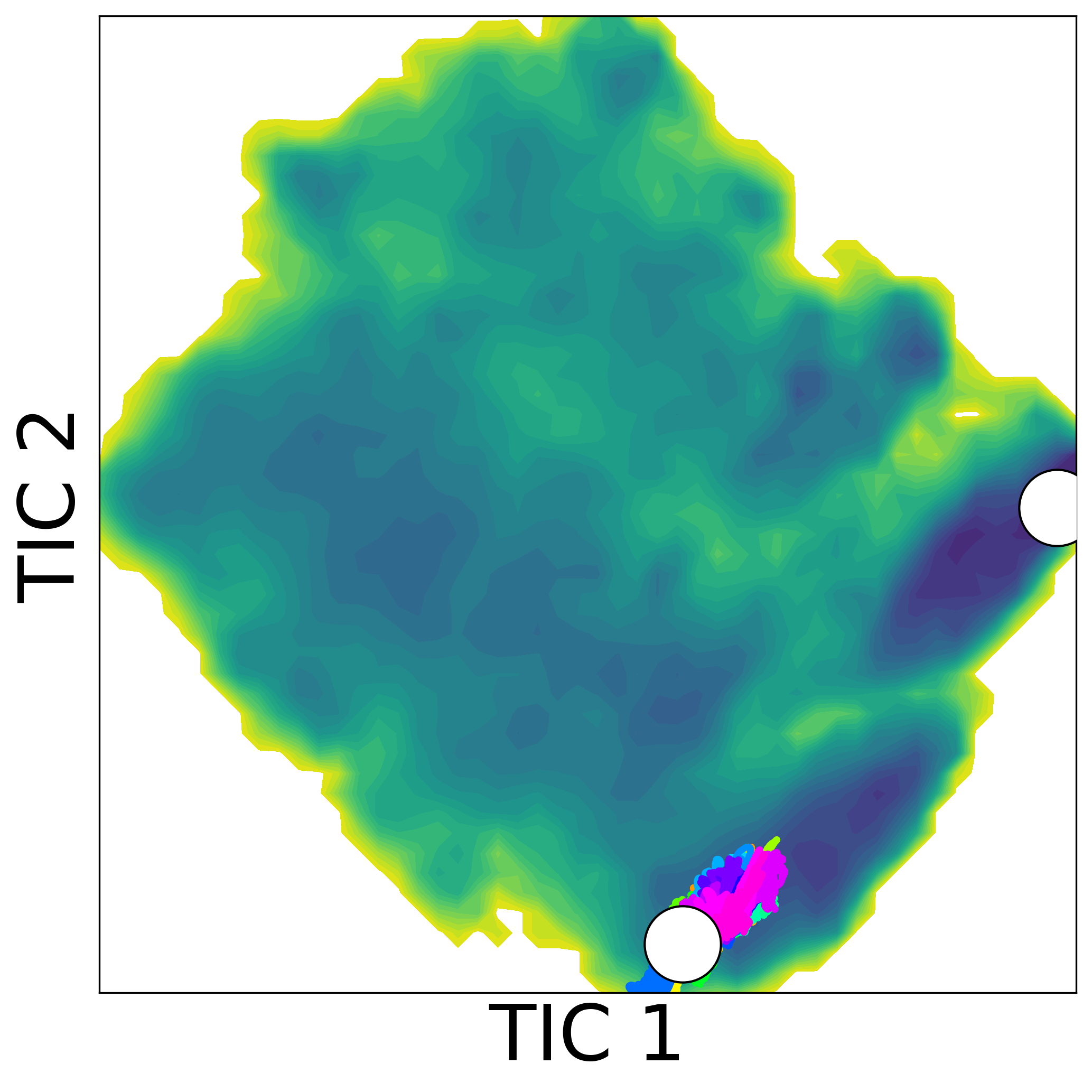}
        \caption{PIPS (P)}
    \end{subfigure}
    \hfill
    \begin{subfigure}[t]{.16\textwidth}
        \centering
        \includegraphics[width=\linewidth,height=\linewidth]{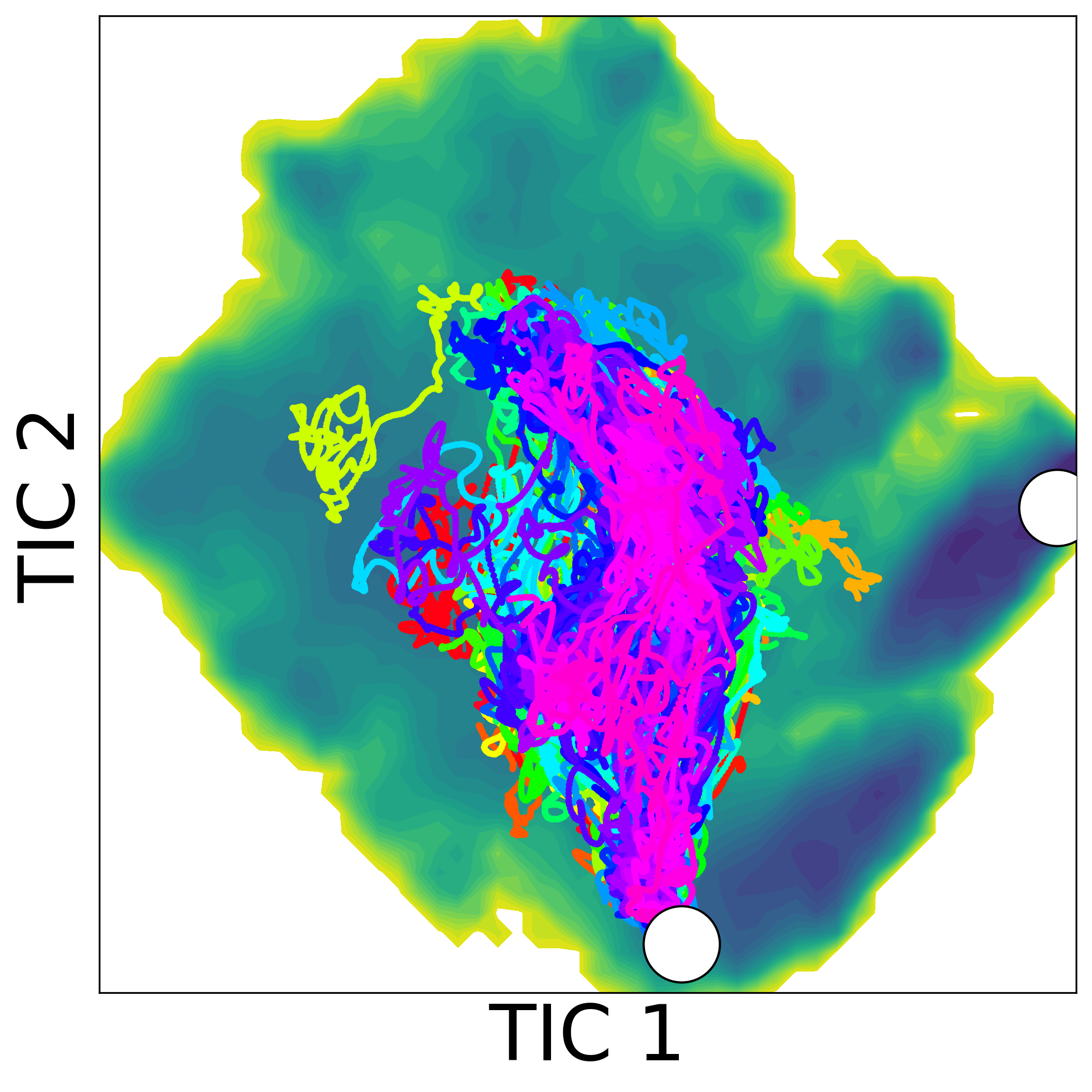}
        \caption{TPS-DPS (F)}
    \end{subfigure}
    \hfill
    \begin{subfigure}[t]{.16\textwidth}
        \centering
        \includegraphics[width=\linewidth,height=\linewidth]{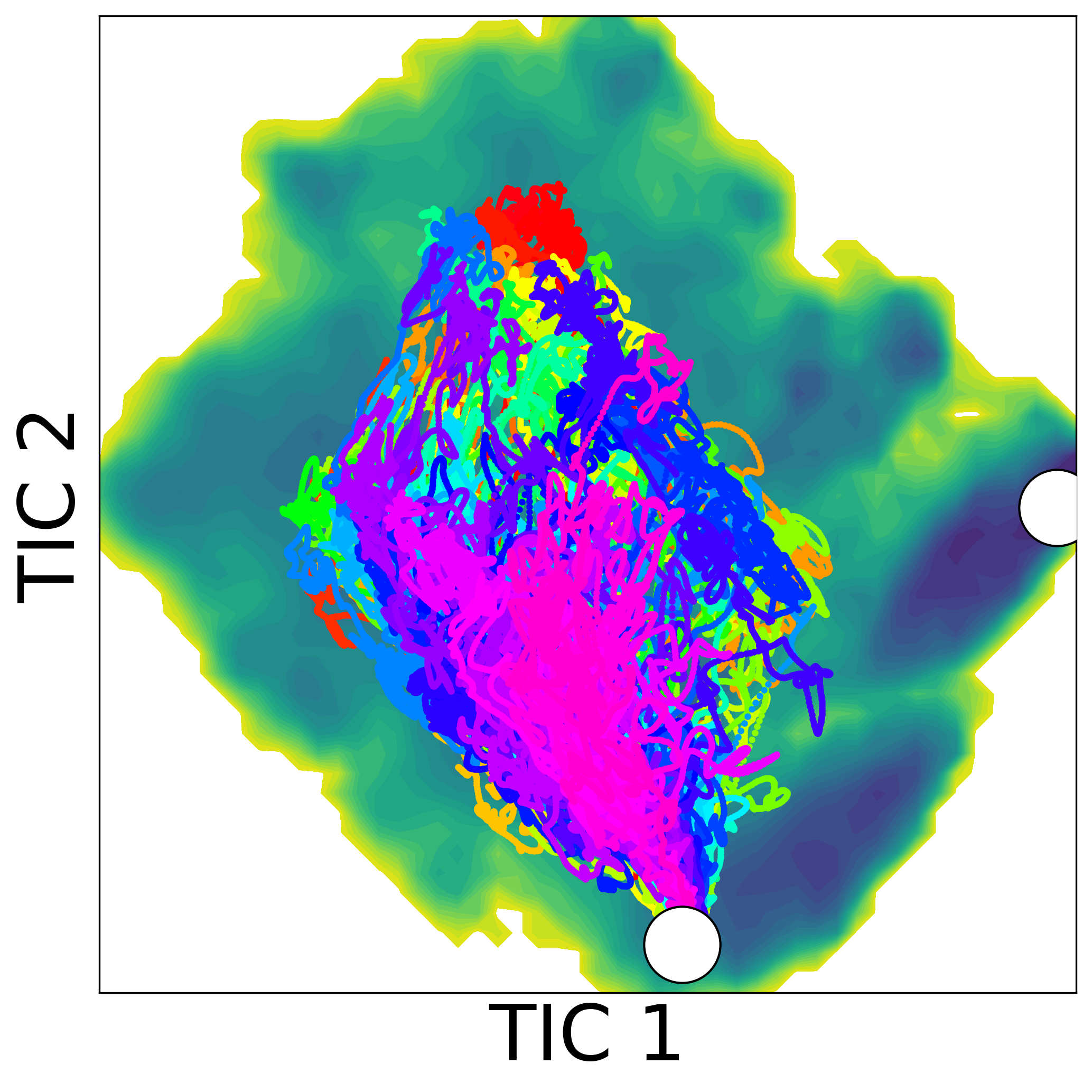}
        \caption{TPS-DPS (P)}
    \end{subfigure}
    \hfill
    \begin{subfigure}[t]{.16\textwidth}
        \centering
        \includegraphics[width=\linewidth,height=\linewidth]{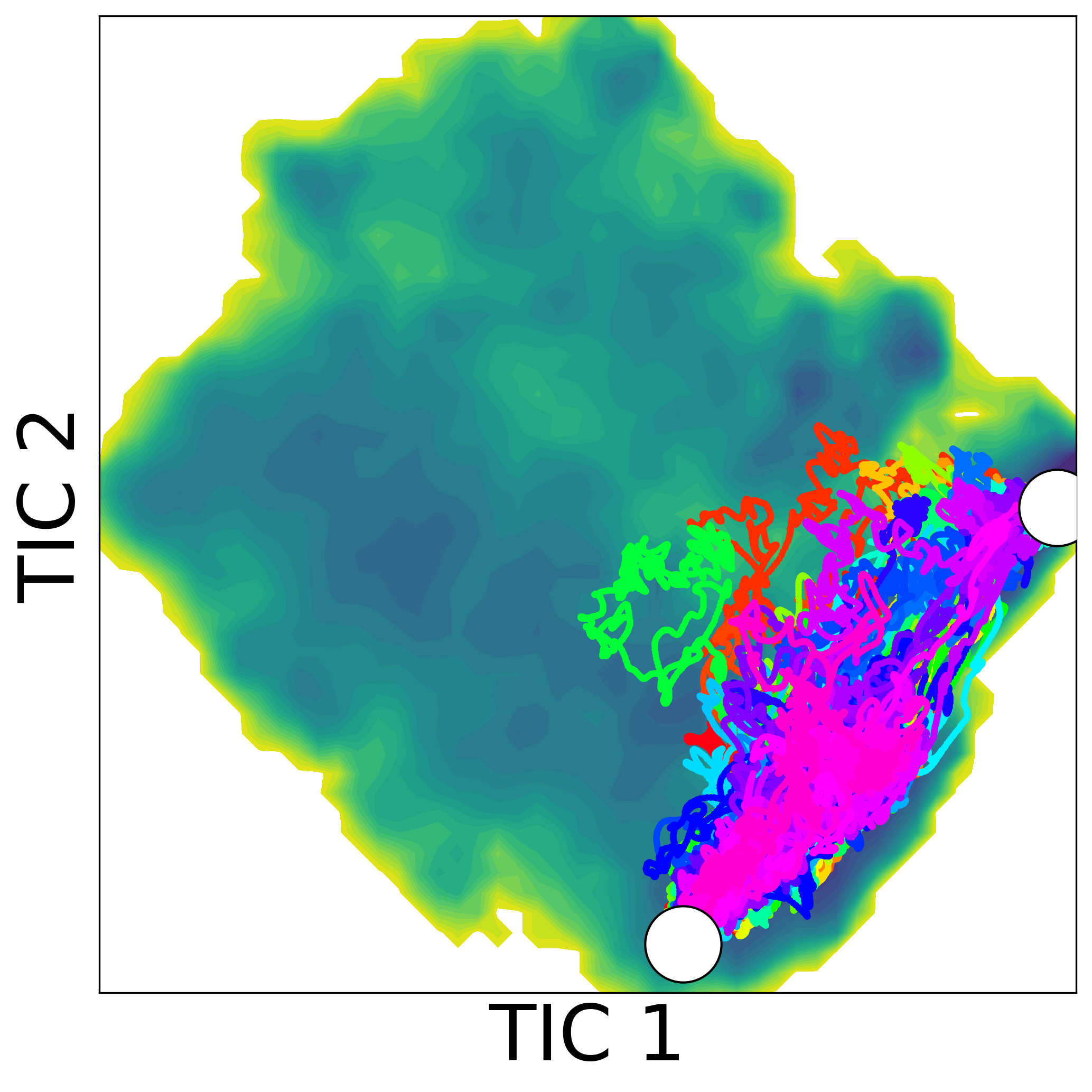}
        \caption{TPS-DPS (S)}
    \end{subfigure}    
    \caption{\textbf{Visualization of sampled paths of four fast-folding proteins on free energy landscapes for top two TICA components.} We aim to sample folding processes for the four fast folding proteins: Chignolin, Trpcage, BBA, and BBL (from top to bottom rows).}
    \label{fig:qualitative-proteins}
\end{figure}

\newpage
\section{Comparision with reverse KL divergence} \label{appx:kl_compare}
\begin{table}[h]
    \label{tab:kl_compare}
    \small
    \centering
    \caption{\textbf{Benchmark scores of reverse KL divergence and TPS-DPS on Alanine Dipeptide system}. Metrics are averaged over 64 paths. TPS-DPS consistently outperforms reverse KL divergence on all metrics regardless of predicting bias force or potential.}
    \begin{tabular}{lccc}
        \toprule
        \multirow{2}{*}{Method} & RMSD ($\downarrow$) & THP ($\uparrow$) & ETS ($\downarrow$) \\
        & \r{A} & \% & $\mathrm{kJ mol}^{-1}$ \\
        \midrule
        Reverse KL (F) & 0.43 $\pm$ 0.34 & 53.12 & 27.88 $\pm$ 14.38 \\
        Reverse KL (P) & 0.58 $\pm$ 0.34 & 48.43 & 21.61 $\pm$ 11.76\\
        TPS-DPS (F, Ours) & \textbf{0.16 $\pm$ 0.06} & \textbf{92.19} & 19.82 $\pm$ 15.88 \\
        TPS-DPS (P, Ours) & \textbf{0.16 $\pm$ 0.10} & 87.50 & \textbf{18.37 $\pm$ 10.86} \\
        \bottomrule
    \end{tabular}
\end{table}

\begin{figure}[h]
    \centering
    \begin{subfigure}[t]{.24\textwidth}
        \centering
        \includegraphics[width=\linewidth,height=\linewidth]{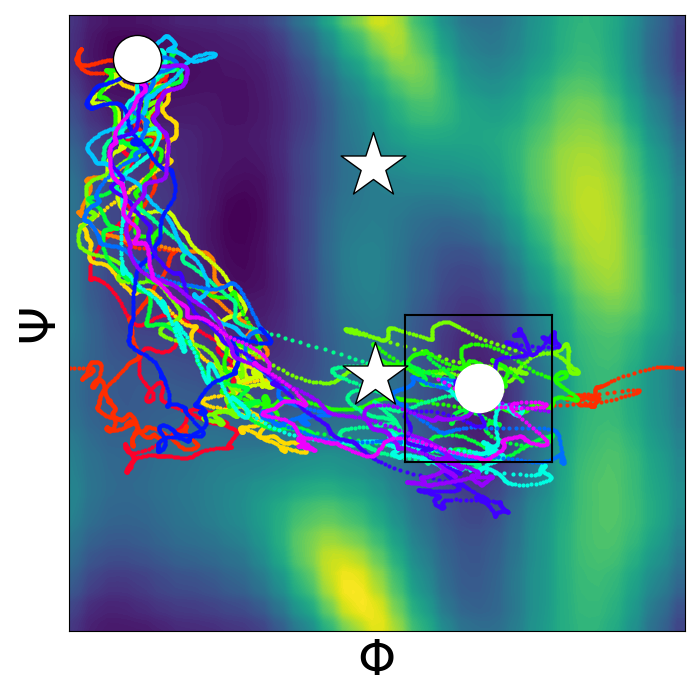}
        \caption{Reverse KL (F)}
    \end{subfigure}
    \hfill
    \begin{subfigure}[t]{.24\textwidth}
        \centering
        \includegraphics[width=\linewidth,height=\linewidth]{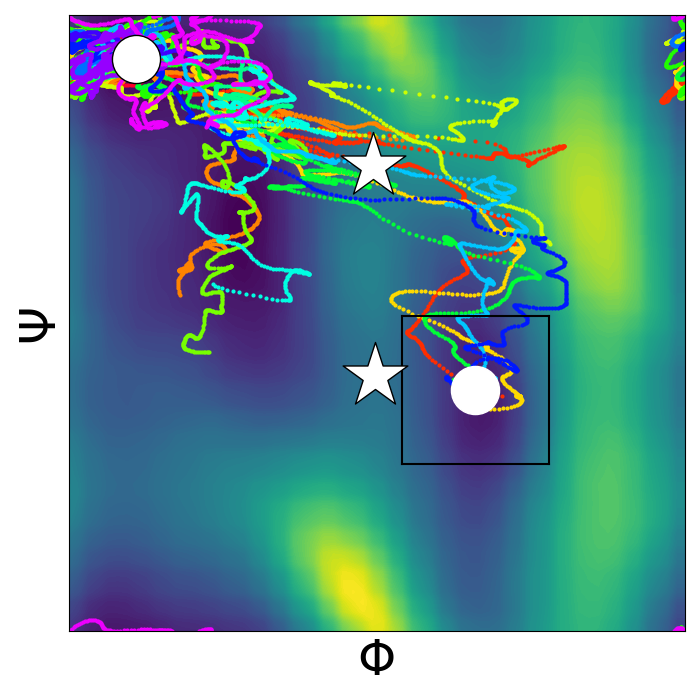}
        \caption{Reverse KL (P)}
    \end{subfigure}
    \hfill
    \begin{subfigure}[t]{.24\textwidth}
        \centering
        \includegraphics[width=\linewidth,height=\linewidth]{resources/images/alanine_tps_dps_force.pdf}
        \caption{TPS-DPS (F)}
    \end{subfigure}
    \hfill
    \begin{subfigure}[t]{.24\textwidth}
        \centering
        \includegraphics[width=\linewidth,height=\linewidth]{resources/images/alanine_tps_dps_pot.pdf}
        \caption{TPS-DPS (P)}
    \end{subfigure}
    \caption{
\textbf{Visualization of sampled paths of Alanine Dipeptide on the Ramachandran plot.}
    The reverse KL divergence struggles to find diverse reaction channels suffering from mode collapse issues while the log-variance divergence of our method can capture two reaction channels and reach the target states better.}
    \label{fig:kl_compare}
\end{figure}

\end{document}